\def\eqref#1{equation~\ref{#1}}
\def\1{\bm{1}}
\def\rvx{{\mathbf{x}}}
\def\rvy{{\mathbf{y}}}
\DeclareMathAlphabet{\mathsfit}{\encodingdefault}{\sfdefault}{m}{sl}
\SetMathAlphabet{\mathsfit}{bold}{\encodingdefault}{\sfdefault}{bx}{n}
\def\sX{{\mathbb{X}}}
\newcommand{\E}{\mathbb{E}}
\newcommand{\R}{\mathbb{R}}
\newcommand{\norm}[1]{\left\lVert#1\right\rVert}
\newcommand{\B}{\boldsymbol}
\newcommand{\C}{\mathcal}
\theoremstyle{plain}
\newtheorem{theorem}{Theorem}
\newtheorem{lemma}{Lemma}
\theoremstyle{definition}
\newtheorem*{assumpiton*}{Assumption}
\newcommand{\modelname}{\texttt{Skinny Trees}}
\newcommand{\modelsname}{\texttt{Skinny Tree}}
\newcommand{\fixed@sra}{$\vrule height 2\fontdimen22\textfont2 width 0pt\shortrightarrow$}
\newcommand{\shortarrow}[1]{%
 \mathrel{\text{\rotatebox[origin=c]{\numexpr#1*45}{\fixed@sra}}}
}
\begin{document}

%

%

\twocolumn[

\aistatstitle{End-to-end Feature Selection Approach for Learning \modelname}

\aistatsauthor{ Shibal Ibrahim \And Kayhan Behdin \And Rahul Mazumder}

\aistatsaddress{ MIT \\ Cambridge, MA, USA \\shibal@mit.edu \And MIT \\ Cambridge, MA, USA \\ behdink@mit.edu \And  MIT \\ Cambridge, MA, USA \\ rahulmaz@mit.edu}
]

\begin{abstract}
We propose a new optimization-based approach for feature selection in tree ensembles, an important problem in statistics and machine learning. Popular tree ensemble toolkits e.g., Gradient Boosted Trees and Random Forests support feature selection post-training based on feature importance scores, while very popular, they are known to have drawbacks.
We propose \modelname: an end-to-end toolkit for feature selection in tree ensembles where we train a tree ensemble while controlling the number of selected features.
Our optimization-based approach learns an ensemble of differentiable trees, and simultaneously performs feature selection using a grouped $\ell_0$-regularizer. 
We use first-order methods for optimization and present convergence guarantees for our approach.
We use a dense-to-sparse regularization scheduling scheme that can lead to more expressive and sparser tree ensembles. 
On 15 synthetic and real-world datasets, \modelname~can achieve $1.5\!\times\! -~620~\!\times\!$ feature compression rates, leading up to $10\times$ faster inference over dense trees, without any loss in performance.
\modelname~lead to superior feature selection than many existing toolkits e.g., in terms of AUC performance for 25\% feature budget, \modelname~outperforms LightGBM by $10.2\%$ (up to $37.7\%$), 
and Random Forests by $3\%$ (up to $12.5\%$).
\end{abstract}

\section{INTRODUCTION}
Decision trees have been popular in various machine learning applications  \citep{Erdman2016, xgboost2016} for their competitive performance, interpretability, robustness to outliers, and ease of tuning \citep{Hastie2009}.
Many scalable toolkits for learning tree ensembles have been developed \citep{Breiman2001,xgboost2016,lightgbm2017,catboost2018}.
While these toolkits are excellent for building tree ensembles, they do not allow for feature selection during the training process. 

Feature selection is a fundamental problem in machine learning and statistics and has widespread usage across various real-world tasks \citep{Wulfkuhle2003,Cai2018,Li2017}.
Popular tree ensemble toolkits only allow selecting  informative features post-training based on feature importance scores, which are known to have drawbacks\footnote{They are found to hurt performance in (a) settings where number of samples are smaller than features and (b) settings with correlated features (see Sec. \ref{sec:simulation}).} in the context of feature selection~\citep{Strobl2007,Boulesteix2011,Zhou2021}.
Recently, there has been some work on optimization-based approaches for feature selection in trees. For example, \citep{Zharmagambetov2020} consider oblique decision trees (hyperplane splits at every node), and use $\ell_1$-penalization to encourage coefficient sparsity at every node of the tree.
This achieves node-level feature selection and does not appear to be well-suited for tree-level or ensemble-level feature selection (See Sec.~\ref{sec:comparison-with-tao}).
\citet{Liu2021} proposed ControlBurn that considers a lasso based regularizer for feature selection on a pre-trained forest.
This can be viewed as a two-stage procedure (unlike an end-to-end training procedure we propose here), where one performs feature selection after training a tree ensemble with all original features,  
While these methods serve as promising candidates for feature selection, these works highlight that identifying relevant features \textit{while} learning compact trees remains an open challenge---an avenue we address in this work. 

In many real world problems there are costs associated with features reflecting time, money, and other costs related to the procurement of data~\citep{Min2014,Zhang2010}.
In this context, one would like to collect a compact set of features to reduce experimental costs. Additionally, selecting a compact set of relevant features can lead to enhanced interpretability \citep{Ribeiro2016}, faster inference, decreased memory footprint, and even improved model generalization on unseen data \citep{Chandrashekar2014}.

In this paper, we propose an end-to-end optimization framework for feature selection in tree ensembles where we jointly learn the trees and the relevant features in \emph{one} shot. 
Our framework is based on differentiable (a.k.a. soft) tree ensembles \citep{Jordan1994,Kontschieder2015,Hazimeh2020b, Ibrahim2022} where tree ensembles are learnt based on differentiable programming. These works, however, do not address feature selection in trees which is our focus. We use a sparsity-inducing penalty (based on the group $\ell_0-\ell_2$-penalty) to encourage feature selection. 
While group $\ell_0-\ell_2$ penalty has been found to be useful in recent work on high-dimensional linear models~\citep{Hazimeh2021Group} and additive models~\citep{Hazimeh2021Group,Ibrahim2021}, 
their adaptation to tree ensemble presents unique challenges.
To obtain high-quality models with good generalization-and-sparsity tradeoffs, we need to pay special attention to dense-to-sparse training, which differs from sparse-to-dense training employed in linear/additive models above.
We demonstrate that our end-to-end learning approach leads to better accuracy and feature sparsity tradeoffs.

\noindent\textbf{Contributions.}
We propose a novel end-to-end optimization-based framework for feature selection in tree ensembles. 
We summarize our contributions in the paper as follows:
\begin{itemize}[noitemsep,topsep=0pt,parsep=0pt,partopsep=0pt, leftmargin=*]
    \item We propose a joint optimization approach, where we simultaneously perform 
    feature selection and tree ensemble learning. Our joint training approach is different from post-training feature selection in trees. Our approach learns (differentiable) tree ensembles with a budget on feature sparsity where the latter is achieved via a group $\ell_0$-based regularizer.
    \item Our algorithmic workhorse is based on proximal mini-batch gradient descent (GD). We also discuss the convergence properties of our approach in the context of a nonconvex and nonsmooth objective.
    When our first-order optimization methods are used with \emph{dense-to-sparse} scheduling of regularization parameter, we obtain tree ensembles with better accuracy and feature-sparsity tradeoffs.  
    \item We introduce a new toolkit: \modelname.
    We consider 15 synthetic and real-world datasets, showing that \modelname can lead to superior feature selection and test AUC compared to popular toolkits. In particular, for 25\% feature budget, \modelname~outperforms LightGBM by $10.2\%$ (up to $37.7\%$), XGBoost by $3.1\%$ (up to $17.4\%$), and Random Forests by $3\%$ (up to $12.5\%$) in test AUC.
\end{itemize}

\section{RELATED WORK}
\noindent\textbf{Trees.}
A popular and effective method to construct a single decision tree is based on recursive greedy partitioning (e.g., CART)~\citep{Hastie2009}. Popular methods to construct tree ensembles are based on bagging \citep{Breiman2001}, sequential methods like Boosting~\citep{Hastie2009}, etc.
These have led to various popular toolkits, e.g., Random Forests \citep{Breiman2001}, Gradient Boosted Trees \citep{xgboost2016,lightgbm2017,catboost2018}.
Another line of work that is most related to our current approach uses 
differentiable (or smooth) approximations of indicator functions at the split 
nodes~\citep{Jordan1994}. First-order methods (e.g, SGD) are used for 
end-to-end differentiable training of tree ensembles~\citep{Kontschieder2015, Hazimeh2020b,Ibrahim2022}.
Joint training of soft tree ensembles often results in more compact representations (i.e., fewer trees) compared to 
boosting-based procedures~\citep{Hazimeh2020b}.
Despite the success and usefulness of both these approaches, to our knowledge, there is no prior work that performs simultaneous training and feature selection---a void we seek to fill in this work. 
We next summarize some popular feature selection methods. 

\noindent\textbf{Feature selection.} We review some prior work on feature selection as they relate to our work. We group them into three major categories: 
\begin{enumerate}[noitemsep,topsep=0pt,parsep=0pt,partopsep=0pt, leftmargin=*]
    \item Filter methods attempt to remove irrelevant features before
    model training.
These methods perform feature screening based on statistical measures that quantify
feature-specific relevance scores~
\citep{Battiti1994,Peng2005,Estevez2009,Song2007,Song2012,Chen2017}. These scores consider the marginal effect of a feature over the joint effect of feature interactions. 
\item Wrapper methods \citep{Kohavi1997,Stein2005,Zhu2007,Reunanen2003,Allen2013,Onnia2001,Kabir2010,Roy2015}
use the outcome of a model to determine the relevance of each feature.
Some of these methods require recomputing the model for a subset of features and can be computationally expensive.
This category also includes feature selection using 
feature importance 
scores of a 
pre-trained model. 
Many tree ensemble toolkits~\citep{Breiman2001,xgboost2016,lightgbm2017,catboost2018} produce feature-importance scores from a pre-trained ensemble. 
\citet{Lundberg2017} propose SHAP values as a unified measure of feature importance. \citet{Sharma2023} uses SHAP values to select a subset of features that can be useful for secondary model performance characteristics e.g., fairness, robustness etc.  
\citet{Liu2021} propose ControlBurn, which formulates an optimization problem with a Lasso-type regularizer to perform feature selection on a pre-trained forest. 
\item Embedded methods \textit{simultaneously} learn the model and the relevant subset of relevant features. Notable among these methods include $\ell_0$-based procedures \citep{Hazimeh2020L0Learn,Hazimeh2021Group,Ibrahim2021,Ibrahim2023grandslamin} and their variants based on lasso~\citep{Tibshirani1996,Ravikumar2009,Zhao2012} in the linear and additive model settings.
Some distributed and stochastic greedy methods have also been explored for subset selection~\cite[see, for example,][and references therein]{Khanna2017}.
Embedded nonlinear feature selection methods have been explored for neural networks. 
For example, \citet{Chen2021} use an active-set style algorithm for cardinality-constrained feature selection.
Other approaches include the use of (group) lasso type methods \citep{Scardapane2017,Feng2017,Vu2020,Lemhadri2021}, 
or reparameterizations of $\ell_0$-penalty with stochastic gates~\citep{Louizos2018,Yamada2020}.
\end{enumerate}

\looseness=-1 We propose an embedded approach that simultaneously performs feature selection and tree ensemble learning. 
This joint training approach can be useful for compression, efficient inference, and/or generalization.

\noindent\textbf{Organization.} The rest of the paper is organized as follows. Section \ref{sec:background} summarizes relevant preliminaries. Section \ref{sec:problem-formulation} presents a formulation for feature selection in soft tree ensembles. 
Section \ref{sec:end-to-end-sparse-learning} discusses our optimization algorithm and its convergence properties. Later, we discuss a scheduling approach that can result in better accuracy and feature sparsity tradeoffs. Sections \ref{sec:simulation} and \ref{sec:experiments} perform  experiments on a combination of 15 synthetic and real-world datasets to highlight the usefulness of our proposals.

\section{PRELIMINARIES}
\label{sec:background}
We learn a mapping $\B f: \R^p \rightarrow \R^c$, from input space $\mathcal{X} \subseteq \R^p$ to output space $\mathcal{Y} \subseteq \R^c$, where we parameterize function $\B f$ with a soft tree ensemble.
In a regression setting $c=1$, while in multi-class classification setting $c=C$, where $C$ is the number of classes.
Let $m$ be the number of trees in the ensemble and let $\B f^{j}$ be the $j$th tree in the ensemble.
We learn an additive model with the output being sum over outputs of all the trees: $\B f(\B x) = \sum_{j=1}^m \B f^{j}(\B x)$ for an input feature-vector $\B x \in \R^p$.
A summary of the notation can be found in Table \ref{tab:notation} in Supplement.

Compared to classical trees, soft trees allow for much more flexibility in catering to different loss functions \citep{Ibrahim2022}, sparse routing in Mixture of Experts \citep{Ibrahim2023comet} etc.
A soft tree is a differentiable variant of a classical decision tree, so learning can be done using gradient-based methods. 
It was proposed by \citet{Jordan1994}, and further developed by \citet{Kontschieder2015,Hazimeh2020b,Ibrahim2022} for end-to-end optimization.
Soft trees typically perform soft routing, i.e., a sample is fractionally routed to all leaves; but can be modified to do hard routing in the spirit of conditional computing~\citep{Hazimeh2020b,Ibrahim2023comet}.
Soft trees are based on hyperplane splits, where the routing decisions rely on a linear combination of the features. 
Particularly, each internal node is associated with a trainable weight vector that defines the associated hyperplane.
Formulations \citep{Ibrahim2022} for training soft tree ensembles more efficiently have been proposed, which exploit tensor structure of a tree ensemble.
A summary of soft trees and tensor formulation is in Supplement Sec. \ref{supp-sec:soft-trees}.

\section{PROBLEM FORMULATION}
\label{sec:problem-formulation}
Feature selection plays a ubiquitous role in modern statistical regression, especially when the number of predictors is large relative to the number of observations.
We describe the problem of global feature selection. 
We assume a data-generating model $p(\rvx; \rvy)$ over a $p$-dimensional space, where $\rvx \in \R^p$ is the covariate and $\rvy$ is the response.
The goal is to find the best function $\B f(\rvx)$ for predicting $\rvy$ by minimizing: 
\begin{align}
{\min}_{\B f \in \C{F}, \C{Q}} ~~\E[L(\rvy, \B f(\rvx_{\C{Q}}))]
\label{eq:general-formulation}
\end{align}
where $\C{Q} \subseteq \{1,2,\cdots,p\}$ is an unknown (learnable) subset of features of size at most $K$, $\B f$ is a learnable non-parametric function from the function class $\C{F}$, and $L: \R^p \times \R^c \rightarrow \R$ is a loss function.
The principal difficulty in solving (\ref{eq:general-formulation}) lies in the joint optimization of $(f, \C{Q})$---the number of subsets $\C{Q}$ grows exponentially with $p$.
In addition, the family of functions $\C{F}$ needs to be sufficiently flexible (here, $\C{F}$ is the class of soft tree ensembles with fixed ensemble size $m$ and depth $d$).

\begin{figure}[!b]
    \centering
    \includegraphics[width=\columnwidth]{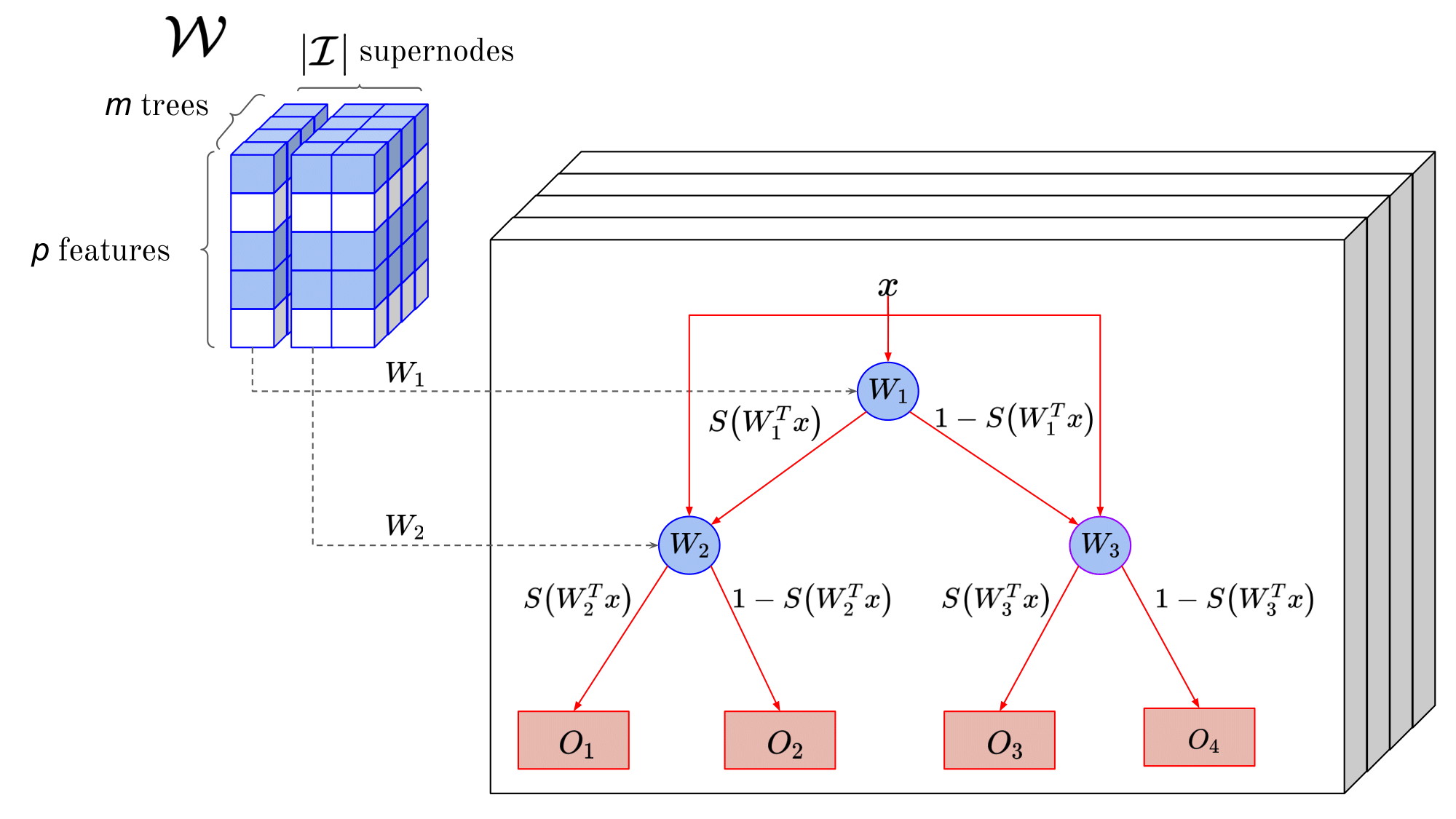}
    \caption{\looseness=-1 \emph{Illustration of \modelname. Each horizontal slice $\bm{\C{W}}_{k,:,:}$ depicts a single feature. White slices indicate features filtered out by the ensemble while training.
    Each vertical slice (along the depth of the page), $\bm{\C{W}}_{:,:,j} = \bm{W}_j$ corresponds to parameters in $j$-th (splitting) supernode (blue circles) in the ensemble, eventually producing the routing decisions. 
    The red squares depict leaf nodes. $S(\cdot)$ denotes an activation function, which can be Sigmoid \citep{Jordan1994} or Smooth-Step \citep{Hazimeh2020b}.}}
    \label{fig:sparse-tree-ensemble}
\end{figure}

In the context of tree ensembles, our goal for global feature selection is to select a small subset of features across \textit{all} the trees in the ensemble. 
More specifically, we consider the framework with response $\B y$ and prediction $\B f(\B x; \bm{\C{W}}, \bm{\C{O}})$, where the function $\B f$ is parameterized by learnable hyperplane parameters $\B{\C{W}} \in \R^{p, m, |\C{I}|}$ across all the split nodes and learnable leaf parameters $\bm{\C O}$ across all trees.
Note that $m$ is the number of trees in the ensemble and $|\bm{\C{I}}|$ represents the number of (split) nodes in each tree.
This parameterization of soft trees points to a group structure in $\bm{\C{W}}$, where the whole slice $\bm{\C{W}}_{k,:,:}$ in the tensor formulation has to be zero to maintain feature sparsity both across all split nodes in each tree and across the trees in the ensemble --- see Fig. \ref{fig:sparse-tree-ensemble}. 
This is a natural feature-wise non-overlapping group structure and allows adaptation of the grouped selection problem in linear models \citep{Bertsimas2016,Hazimeh2021Group} to soft tree ensembles.

\noindent\textbf{Mixed Integer Problem (MIP) Formulation.} Let us consider the tensor $\bm{\C{W}}$. Using binary variables to model feature selection we obtain a regularized loss function:
\begin{align}
    \min_{\bm{\C {W}}, \bm{\C O}, {\bm{z}}}&~\hat{\E}[L(\rvy, \B f( \rvx; \bm{\C {W}}, \bm{\C O})] + \lambda_0 \sum_{k \in [p]} z_k, \label{eq:mip-formulation} \\ \text{s.t.}&~||\bm{\C {W}}_{k,:,:}|| (1- z_k)=0, z_{k} \in \{0,1\}~~k\in[p], \nonumber
\end{align}
where, the binary variable $z_{k}$ controls whether the $k$-th feature is on or off via the constraint $||\bm{\C {W}}_{k,:,:}|| (1-z_k) = 0$. $\hat{\E}[L(\rvy, \B f(\rvx; \bm{\C {W}}, \bm{\C O}))] := (1/N) \sum_{n \in [N]} L( \B y_n,\B f(\B x_n; \bm{\C {W}}, \bm{\C O}))$ is the empirical loss; and $\lambda_0$ is regularization strength.
Note MIP formulations \citep{Bertsimas2019} can also be setup with classical trees under feature selection, but they would be difficult to scale beyond small problems.

\noindent\textbf{Unconstrained formulation of Problem~(\ref{eq:mip-formulation}).} 
For computation we consider a penalized version of~(\ref{eq:mip-formulation}) involving variables ($\bm{\C{W}},\bm{\C{O}}$) with the grouped $\ell_0$ (pseudo) norm encouraging feature sparsity. 
We perform end-to-end training via first-order methods (see Sec.~\ref{sec:end-to-end-sparse-learning} for details). 
It has been observed in the linear model setting that a vanilla (group) $\ell_0$ penalty may result in overfitting~\citep{Hazimeh2021Group}.
A possible way to ameliorate this problem is to include an additional ridge regularization for shrinkage \citep{mazumder2017subset,Hazimeh2021Group,Ibrahim2021}.
We consider the following group $\ell_0-\ell_2$ regularized problem:
\begin{align}
    {\min}_{\bm{\C {W}}, \bm{\C O}}&~~\hat{\E}[L(\rvy, \B f( \rvx; \bm{\C {W}}, \bm{\C O}))] \label{eq:group-l0-l2}\\
    &+ \lambda_0 {\sum}_{k \in [p]} 1[\bm{\C{W}}_{k,:,:} \neq \B 0] + (\lambda_2/m|\C{I}|) ||\bm{\C{W}}||_2^2 \nonumber
\end{align}
where $1[\cdot]$ is the indicator function, $\lambda_0 \geq 0$ controls the number of features selected, and $\lambda_2 \geq 0$ controls the amount of shrinkage of each group. We normalize $\lambda_2$ by the product $m|\bm{\C{I}}|$ for convenience in hyperparameter tuning.

\section{END-TO-END OPTIMIZATION APPROACH}
\label{sec:end-to-end-sparse-learning}
\looseness=-1 We propose a fast approximate algorithm to obtain high-quality solutions to Problem (\ref{eq:group-l0-l2}).  
We use a proximal (mini-batch) gradient-based algorithm \citep{lan2012optimal}
that involves two operations.
A vanilla mini-batch GD step is applied to all model parameters followed by 
a proximal operator applied to the hyperplane parameters $\bm{\C{W}}$. 
This sequence of operations on top of backpropagation makes the procedure simple to implement in popular ML frameworks e.g. Tensorflow~\citep{tensorflow2015}, and contributes to overall efficiency.

\subsection{Proximal mini-batch gradient descent}
We first present the proximal mini-batch GD algorithm for solving Problem (\ref{eq:group-l0-l2}) in Algorithm~\ref{algo:proximal-stochastic-gradient-descent}. We also discuss computation of the \textit{Prox} operator in line 7 of Algorithm~\ref{algo:proximal-stochastic-gradient-descent}.
\textit{Prox} finds the global minimum of the optimization problem:
\begin{align}\label{prox-defn-1}
    \B{\C{W}}^{(t)} = \text{argmin}_{\B{\C{W}}}&~~(1/2\eta) ||\B{\C{W}} - \B{\C{Z}}^{(t)}||_2^2 \nonumber \\ 
    &+  \lambda_0 {\sum}_{k\in[p]} 1[\bm{\C{W}}_{k,:,:} \neq 0]
\end{align}
where $\B{\C{Z}}^{(t)} = \B{\C{W}}^{(t-1)}-\eta\nabla_{\bm{\C{W}}}h$.
Problem~\ref{prox-defn-1} decomposes across features and a solution for the $k$-th feature can be found by a hard-thresholding operator given by:
\begin{align}
    H_{\eta\lambda_0}(\B{\C{Z}}^{(t)}_{k,:,:}) = \B{\C{Z}}^{(t)}_{k,:,:}\odot1\left[||\B{\C{Z}}^{(t)}_{k,:,:}|| \geq \sqrt{2\eta\lambda_0}\right].
    \label{eq:proximal-operator-group-l0-l2}
\end{align}

{\small{\begin{algorithm}[!b]
\begin{algorithmic}[1]
\Require Data: $X, Y$;
\Require Hyperparameters: $\lambda_0,\lambda_2$, epochs, batch-size, learning rate ($\eta$); 
\State Initialize ensemble with $m$ trees of depth $d$ ($|\C{I}|=2^d - 1$): $\bm{\C{W}}, \bm{\C{O}}$
\For {$\text{epoch}=1,2,\ldots,\text{epochs}$}
\For {$batch=1,\ldots,N/\text{batch-size}$}
\State Randomly sample a batch: $\B X_{batch}, \B Y_{batch}$ 
\State Compute gradient of loss $h$ w.r.t. $\bm{\C{O}}$, \bm{\C{W}}.
\State Update leaves: $\bm{\C{O}} \gets \bm{\C{O}} - \eta \nabla_{\bm{\C{O}}}h$
\State Update hyperplanes: $\bm{\C{W}} \gets \textit{Prox}(\bm{\C{W}} - \eta \nabla_{\bm{\C{W}}}h, \eta, \lambda_0))$
\EndFor
\EndFor
\end{algorithmic}
where $h=\hat{\E}[L(\B Y_{batch}, \B F_{batch}] + (\lambda_2/m|\C{I}|) ||{\bm{\C{W}}}||_2^2$
\caption{Proximal Mini-batch Gradient Descent for Optimizing (\ref{eq:group-l0-l2}).}
\label{algo:proximal-stochastic-gradient-descent}
\end{algorithm}}}
We use feature-wise separability for faster computation.
The cost of \textit{Prox} is of the order $\C{O}(v)$, where $v$ is the total number of hyperplane parameters being updated (i.e. $v = pm|\C{I}|$).
This cost is negligible compared to the computation of the gradients with respect to the same parameters. 
We implement the optimizer in standard deep learning
APIs. 

To our knowledge, 
our proposed approach (and algorithm) for group $\ell_0$-based nonlinear feature selection in soft tree ensembles is novel.
Note that \citet{Chen2021} considers group $\ell_0$ based cardinality constrained formulation for feature selection in neural networks. However, their active-set style approach is very different from our iterative-hard-thresholding based approach. Their toolkit also appears to be up to $900\times$ slower than our toolkit.
In the context of neural network pruning, modifications of iterative-hard-thresholding based approaches \citep{jin2016training,peste2021acdc} have appeared for individual weight pruning in neural networks which is different from feature-selection.

\subsection{Convergence Analysis of Algorithm~\ref{algo:proximal-stochastic-gradient-descent}}
In this section, we analyze the convergence properties of Algorithm~\ref{algo:proximal-stochastic-gradient-descent}. For simplicity, we assume that the outcomes are scalar, i.e. $c=1$ and consider the least squares loss (see Assumption~\ref{lproperties} below). We also analyze the full-batch algorithm, i.e., we assume $\text{batch-size}=N$. Extending our result to vector outputs and a mini-batch algorithm would require appropriate changes, and is omitted here. 
Before stating our formal results, we discuss our assumptions on the model.
\begin{enumerate}[leftmargin=*,label=\textbf{(A\arabic*})]
    \item \textit{(Activation function, $S(\cdot)$).} \label{sproperties} $S(\cdot)$ is a differentiable piece-wise polynomial function, 
    \begin{equation*}
        S(x)=\begin{cases} 1 &\mbox{if}~~~x>\theta \\
        p(x) &\mbox{if}~~~-\theta\leq x\leq\theta \\
        0 &\mbox{if}~~~x<-\theta \\
        \end{cases}
    \end{equation*}
    for some $\theta>0$ and a polynomial function $p(x)$. We assume the derivative of $S(x)$ is continuous.
    \item \label{lproperties} \textit{(Loss function, $L$).} We use the least squares loss $L(x,y)=(x-y)^2/2$.
    \item \label{o-bounded} \textit{(Solutions).} There exists a numerical constant $B>0$ only depending on the data, such that $\|\B{\C{O}}\|_2\leq B$ for all iterations in Algorithm~\ref{algo:proximal-stochastic-gradient-descent}.
\end{enumerate}
Assumption~\ref{sproperties} encompasses a general class of activation functions, and includes the smooth-step function~\citep{Hazimeh2020b} used in soft trees. The least squares loss in Assumption~\ref{lproperties} is a standard loss for regression problems. We consider the assumption on the boundedness of leaf weights in Assumption~\ref{o-bounded} to be weak as the data is bounded\footnote{Alternatively, one can add additional projection steps for leaf weights in Algorithm~\ref{algo:proximal-stochastic-gradient-descent}, ensuring the boundedness property, and remove Assumption~\ref{o-bounded}}.

Theorem~\ref{conv-thm} states our main result in this section:
\begin{theorem}\label{conv-thm}
Let $\lambda_2>0$ and suppose Assumptions~\ref{sproperties},~\ref{lproperties} and~\ref{o-bounded} hold. Then:
\begin{enumerate}[noitemsep,topsep=0pt,parsep=0pt,partopsep=0pt, leftmargin=*]
\item There is a sufficiently small $\eta>0$ for which Algorithm~\ref{algo:proximal-stochastic-gradient-descent} (using full-batch) is a descent algorithm with non-increasing objective values.
\item The sequence of hyperplane parameters $\B{\C{W}}$ generated from Algorithm~\ref{algo:proximal-stochastic-gradient-descent} is bounded.
\item The sequence of parameters $\B{\C{W}},\B{\C{O}}$ generated from Algorithm~\ref{algo:proximal-stochastic-gradient-descent} converges if $\eta>0$ is chosen as in Part~1.
\end{enumerate}
\end{theorem}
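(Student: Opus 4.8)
Write the objective of Problem~(\ref{eq:group-l0-l2}) as $F(\B{\C{W}},\B{\C{O}}) = g(\B{\C{W}},\B{\C{O}}) + r(\B{\C{W}})$, where $g(\B{\C{W}},\B{\C{O}}) = \hat{\E}[L(\rvy,\B f(\rvx;\B{\C{W}},\B{\C{O}}))] + (\lambda_2/m|\C{I}|)\|\B{\C{W}}\|_2^2$ is the smooth part and $r(\B{\C{W}}) = \lambda_0\sum_{k\in[p]}1[\B{\C{W}}_{k,:,:}\neq\B 0]$ is a lower semicontinuous (nonconvex) penalty acting on $\B{\C{W}}$ only. Since lines 6--7 of Algorithm~\ref{algo:proximal-stochastic-gradient-descent} both use the single gradient $\nabla h = \nabla g$ evaluated at the current iterate, the algorithm (full-batch) is exactly a proximal-gradient step on $F$. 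Two structural facts drive the proof: (i) by Assumption~\ref{sproperties} the activation $S$ is $C^1$ with continuous, piecewise-polynomial derivative, so $\B f$ --- a sum over trees of products of routing probabilities and leaf vectors --- is $C^1$ in $(\B{\C{W}},\B{\C{O}})$, and hence so is $g$ under the least-squares loss (Assumption~\ref{lproperties}); and (ii) because $\lambda_2>0$, $F$ is coercive in $\B{\C{W}}$, so together with $\|\B{\C{O}}\|_2\le B$ (Assumption~\ref{o-bounded}) every set $\{F\le c\}\cap\{\|\B{\C{O}}\|_2\le B\}$ is compact.

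\textbf{Parts 1 and 2.}
Let $u^{(t)}=(\B{\C{W}}^{(t)},\B{\C{O}}^{(t)})$ and $F^{(0)}$ the initial objective value. On a bounded open set $\C{C}$ containing $\{F\le F^{(0)},\|\B{\C{O}}\|_2\le B\}$, the piecewise-polynomial derivative $\nabla g$ is Lipschitz on the (compact) closure, with some constant $L$. Taking $\eta$ sufficiently small (in particular $\eta<1/L$, and small enough that one step cannot leave $\C{C}$), combine the descent lemma for $g$ with the optimality of the prox step --- i.e. $\tfrac{1}{2\eta}\|\B{\C{W}}^{(t)}-\B{\C{Z}}^{(t)}\|_2^2+r(\B{\C{W}}^{(t)})\le\tfrac{1}{2\eta}\|\B{\C{W}}^{(t-1)}-\B{\C{Z}}^{(t)}\|_2^2+r(\B{\C{W}}^{(t-1)})$ with $\B{\C{Z}}^{(t)}=\B{\C{W}}^{(t-1)}-\eta\nabla_{\B{\C{W}}}g(u^{(t-1)})$ --- to obtain the sufficient-decrease inequality
\[
F(u^{(t)})\ \le\ F(u^{(t-1)})-\Big(\tfrac{1}{2\eta}-\tfrac{L}{2}\Big)\,\|u^{(t)}-u^{(t-1)}\|_2^2 .
\]
So the objective is non-increasing, and by induction the iterates stay in $\{F\le F^{(0)}\}\subseteq\C{C}$, which justifies using $L$ throughout; this is Part~1. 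Part~2 is then immediate: since $\hat{\E}[L(\cdot)]\ge 0$ and $\lambda_0\sum_k 1[\cdot]\ge 0$, we get $(\lambda_2/m|\C{I}|)\|\B{\C{W}}^{(t)}\|_2^2\le F(u^{(t)})\le F^{(0)}$, hence $\|\B{\C{W}}^{(t)}\|_2\le\sqrt{m|\C{I}|F^{(0)}/\lambda_2}$ for all $t$.

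\textbf{Part 3.}
Summing the inequality of Part~1 gives $\sum_t\|u^{(t)}-u^{(t-1)}\|_2^2<\infty$, so $\|u^{(t)}-u^{(t-1)}\|_2\to0$; with Part~2 and $\|\B{\C{O}}\|_2\le B$ the iterates lie in a compact set and have limit points, and a routine closedness argument (continuity of $\nabla g$, outer semicontinuity of the limiting subdifferential of $r$, fixed-point characterization of a prox-gradient step) shows each limit point is critical for $F$. To upgrade this to convergence of the whole sequence I would exploit the special structure of hard-thresholding. Bounding $\|\nabla_{\B{\C{W}}}g\|_2\le G$ on the compact sublevel set and shrinking $\eta$ further so that $\eta<2\lambda_0/G^2$, any feature block once set to zero stays zero (then $\|\B{\C{Z}}^{(t)}_{k,:,:}\|_2=\eta\|\nabla_{\B{\C{W}}}g|_k\|_2\le\eta G<\sqrt{2\eta\lambda_0}$), so the supports $\C{S}^{(t)}=\{k:\B{\C{W}}^{(t)}_{k,:,:}\neq\B 0\}$ are monotonically non-increasing and, being subsets of $[p]$, stabilize to some $\C{S}^\star$ after at most $p$ iterations. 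Beyond that point $\B{\C{W}}^{(t)}_{k,:,:}\neq\B 0$ forces $\|\B{\C{Z}}^{(t)}_{k,:,:}\|_2\ge\sqrt{2\eta\lambda_0}$ and $\B{\C{W}}^{(t)}_{k,:,:}=\B{\C{Z}}^{(t)}_{k,:,:}$, so the iteration reduces to ordinary gradient descent of the smooth $g$ restricted to the fixed subspace $\{\B{\C{W}}_{k,:,:}=\B 0:k\notin\C{S}^\star\}$. Since this restricted $g$ is $C^1$ and semialgebraic (built from a piecewise-polynomial activation and a quadratic loss), it satisfies the Kurdyka--\L ojasiewicz / \L ojasiewicz-gradient inequality, and the standard KL argument combined with the sufficient decrease of Part~1 yields $\sum_t\|u^{(t)}-u^{(t-1)}\|_2<\infty$, hence convergence to a critical point. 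Alternatively one can invoke directly an abstract convergence theorem for nonconvex proximal-gradient methods (Attouch--Bolte--Svaiter-type), verifying sufficient decrease (Part~1), the relative-error bound $\mathrm{dist}(0,\partial F(u^{(t)}))\le C\|u^{(t)}-u^{(t-1)}\|_2$ (from prox optimality and $L$-Lipschitzness of $\nabla g$), boundedness (Part~2), and the KL property of $F$ (semialgebraic, since both $g$ and $\sum_k 1[\B{\C{W}}_{k,:,:}\neq\B 0]$ are).

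\textbf{Main obstacle.}
Parts~1 and~2 are routine once the compactness-via-coercivity observation is in place; the crux is Part~3. The nonsmooth, nonconvex $\ell_0$ term means the standard prox-gradient analysis only yields subsequential convergence, so obtaining convergence of the entire sequence forces one either to extract and exploit the finite support-stabilization behaviour of hard-thresholding (which needs the extra smallness $\eta<2\lambda_0/G^2$ and a careful bookkeeping that active blocks cannot all be re-thresholded before the support settles) or to verify the full hypothesis set of a KL-based theorem, where the delicate point is the subdifferential/relative-error bound given the pathological subdifferential of $r$ at points with a zero block. I expect the support-stabilization route, followed by a clean \L ojasiewicz argument on the resulting smooth sub-problem, to be the most transparent; either way, establishing the KL property is where the real work lies.
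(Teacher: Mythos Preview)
Your proposal is correct, and your ``Route~B'' for Part~3 --- verify sufficient decrease, boundedness, and the KL property of the semialgebraic objective, then invoke the Attouch--Bolte--Svaiter framework --- is exactly what the paper does. Two differences are worth flagging. First, you dispatch the Lipschitz-smoothness of $g$ in a single line (``piecewise-polynomial $\nabla g$ is Lipschitz on a compact set''), whereas the paper devotes six preparatory lemmas to this point: it explicitly bounds $r_{i,l}^j$, $P^j$, $f$, $L'$ and their partials, and handles segments that cross the affine interfaces $\{\B{w}_i^j\cdot\rvx=\pm\theta\}$ where $S$ fails to be twice differentiable, ultimately obtaining a constant $\C{M}$ that is \emph{global} in $\B{\C{W}}$ (depending only on data and $B$), not merely local to a sublevel set. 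Your compact-sublevel-set argument is perfectly valid for the theorem as stated, but it hides the place where the paper does its real technical work. Second, your ``Route~A'' --- impose the extra smallness $\eta<2\lambda_0/G^2$, argue that zeroed feature blocks stay zero so the support stabilizes in at most $p$ steps, and then run a \L ojasiewicz argument for plain gradient descent on the resulting smooth restricted problem --- does not appear in the paper at all. It is a more elementary alternative that avoids dealing with the subdifferential of the $\ell_0$ term, at the price of a tighter (and $\lambda_0$-dependent) step-size restriction; the paper's direct KL route works uniformly for any $\eta<1/(\C{M}+2\lambda_2/m|\C{I}|)$ and needs no case analysis on the support.
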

Theorem~\ref{conv-thm} shows that Algorithm~\ref{algo:proximal-stochastic-gradient-descent} is a convergent (descent) method for a suitably selected learning rate. The proof of Theorem~\ref{conv-thm} is presented in Supplement~\ref{app-proof}. Here, we note that Problem (\ref{eq:group-l0-l2}) is non-convex and non-smooth. Moreover, the activation function and therefore $\hat{\E}[L(\rvy, \B f( \rvx; \bm{\C {W}}, \bm{\C O}))]$ are not twice differentiable everywhere. These lead to technical challenges in proof, as discussed in the Supplement. 

\begin{figure}[!t]
\small
\centering
\begin{tabular}{cc}
    Lung & Madelon \\ 
    \includegraphics[width=0.22\textwidth]{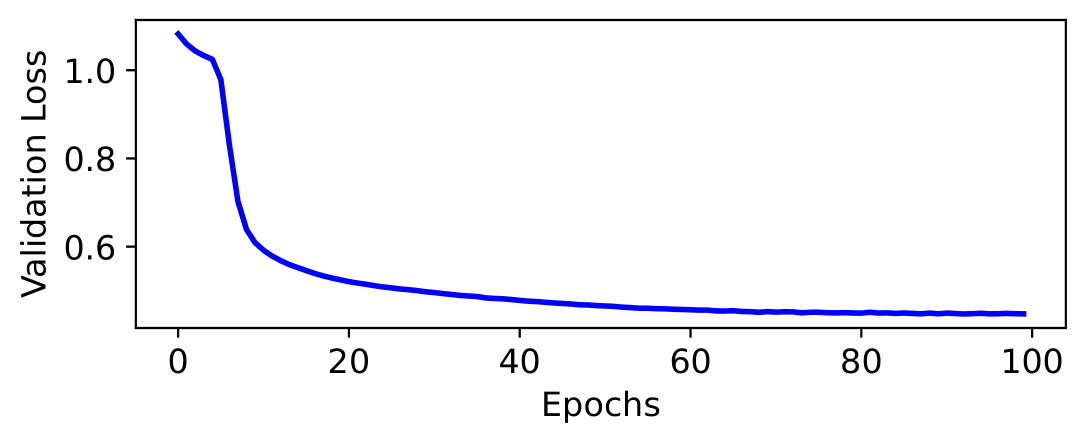}&  \includegraphics[width=0.22\textwidth]{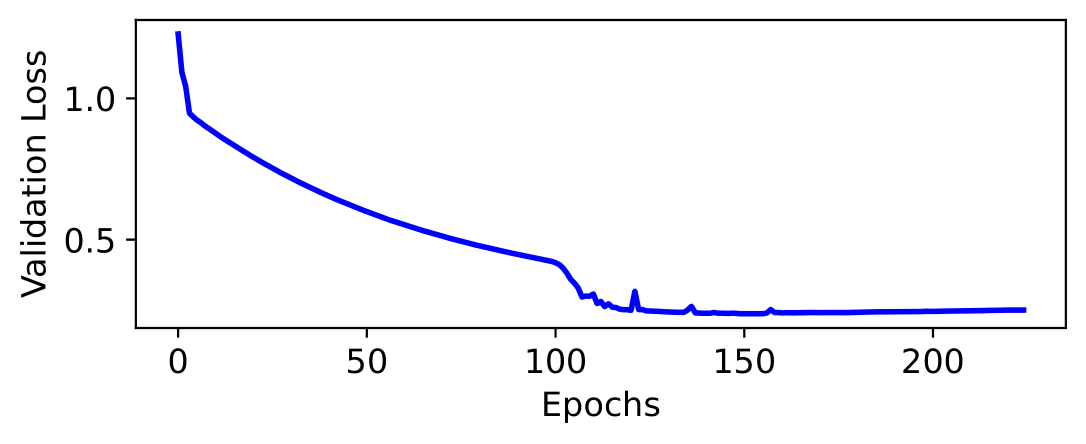}\\
    \includegraphics[width=0.22\textwidth]{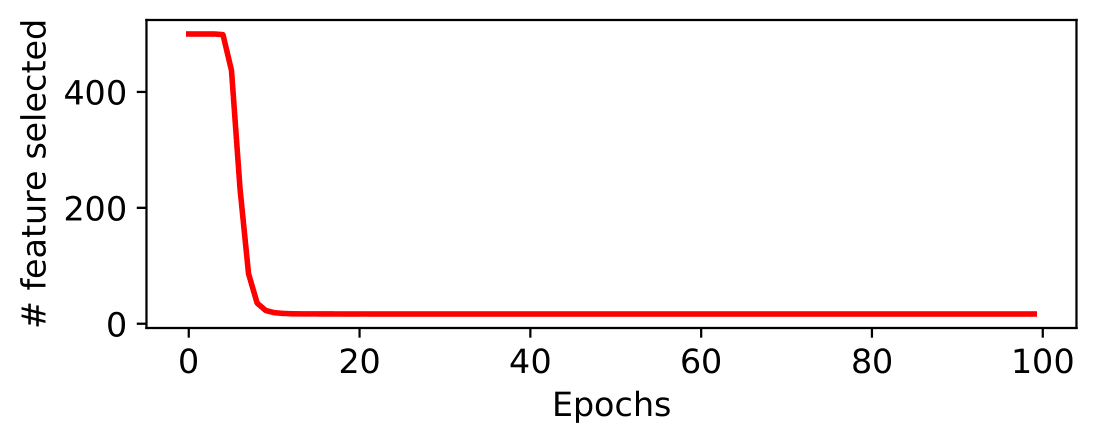}&  \includegraphics[width=0.22\textwidth]{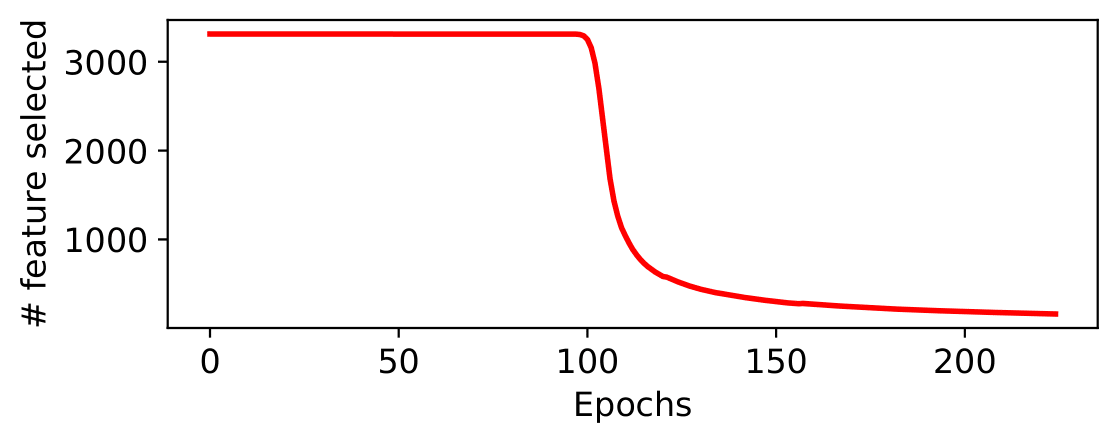}\\\end{tabular}
\caption{\emph{Trajectory of validation loss and feature sparsity during training with dense-to-sparse learning.}}  
\label{fig:dsl-trajectory}
\end{figure}

\begin{figure*}[!t]
\small
\centering
\begin{tabular}{c|c|c}
      Random Forests & XGBoost & \modelname~ (ours) \\ \hline 
      N=100 & N=100 & N=100 \\
      \includegraphics[width=0.31\textwidth]{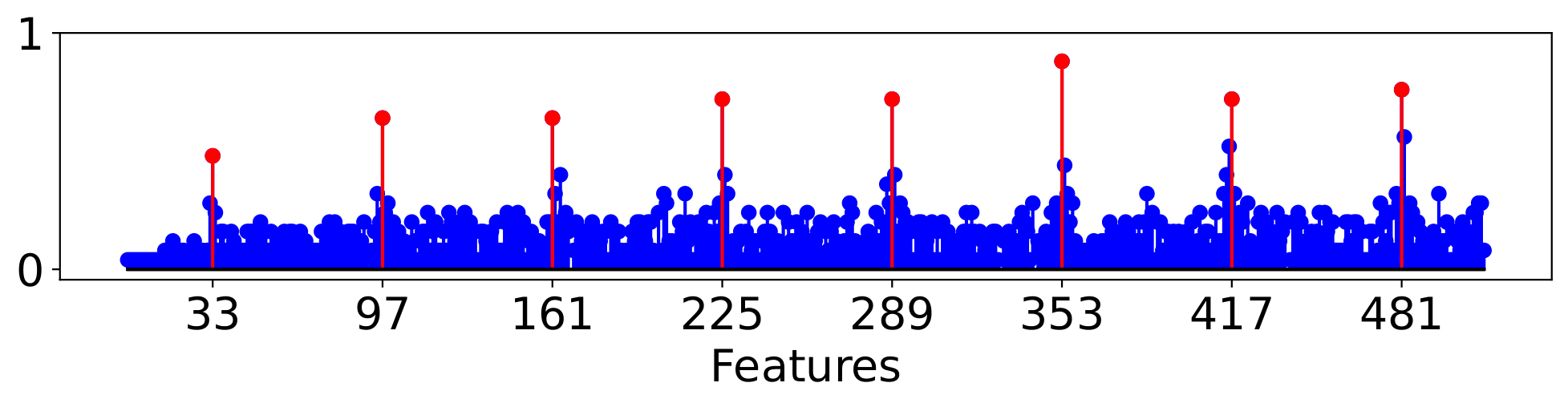}&  \includegraphics[width=0.31\textwidth]{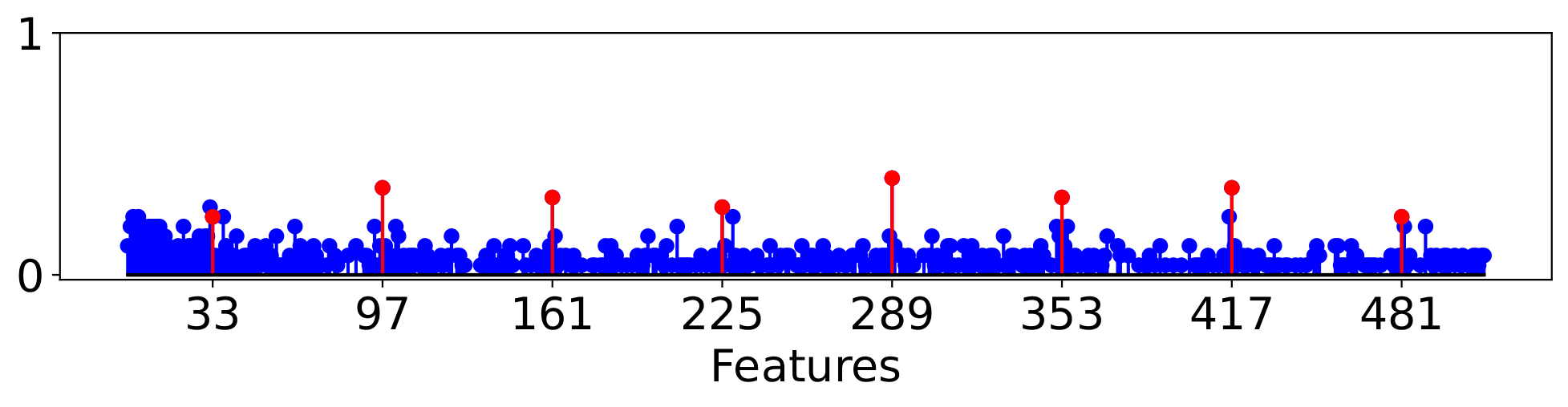}& \includegraphics[width=0.31\textwidth]{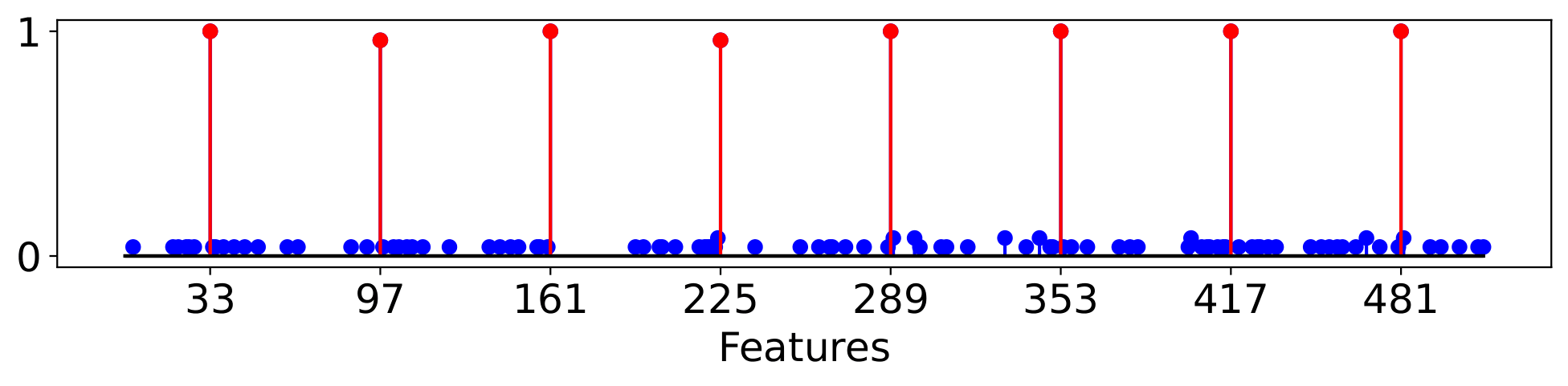} \\ \hline 
      N=200 & N=200 & N=200 \\
      \includegraphics[width=0.31\textwidth]{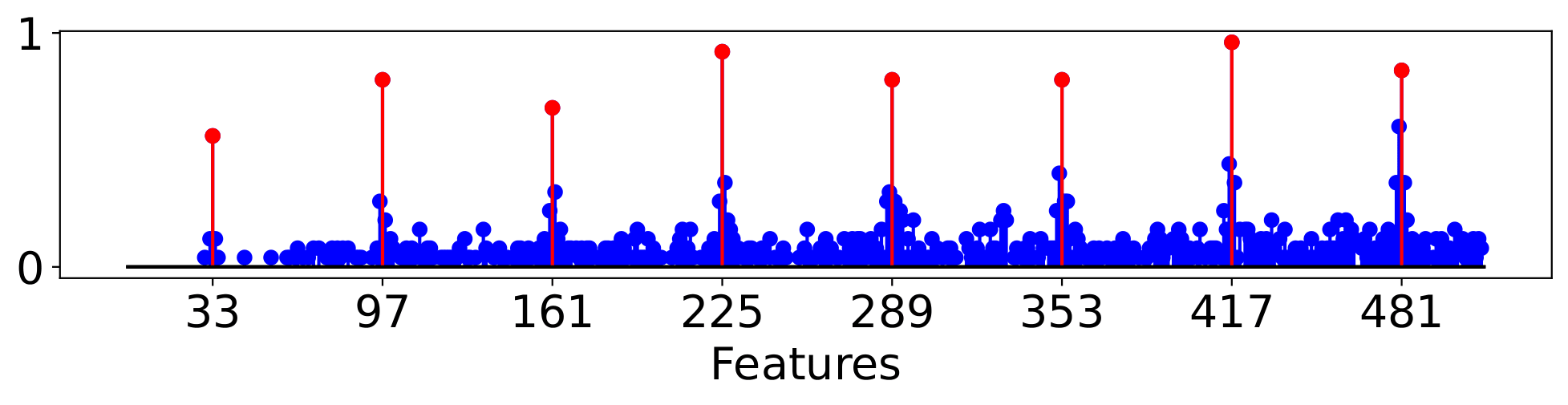}&  \includegraphics[width=0.31\textwidth]{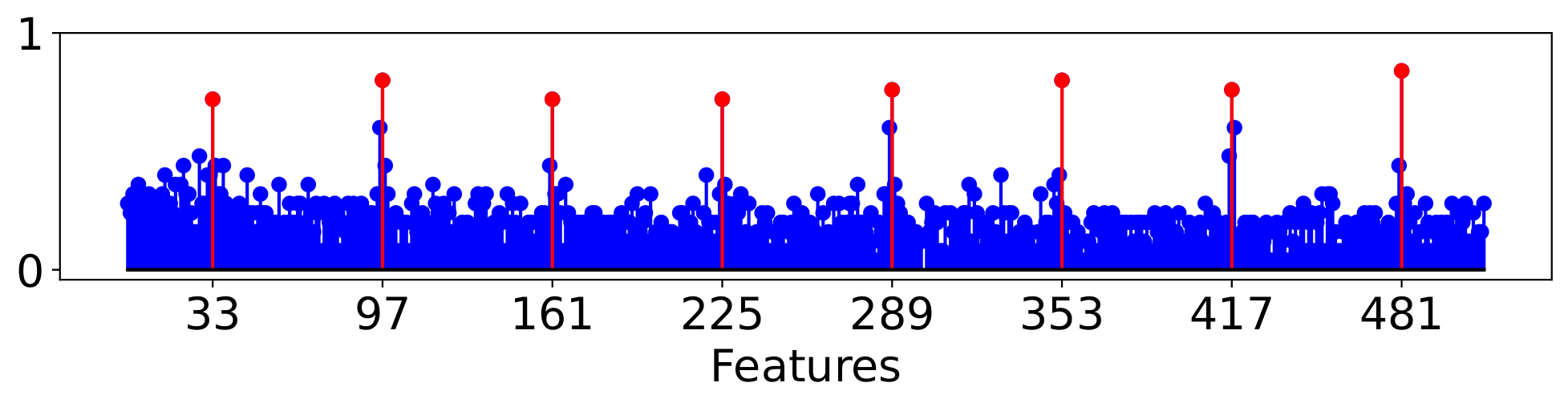}& \includegraphics[width=0.31\textwidth]{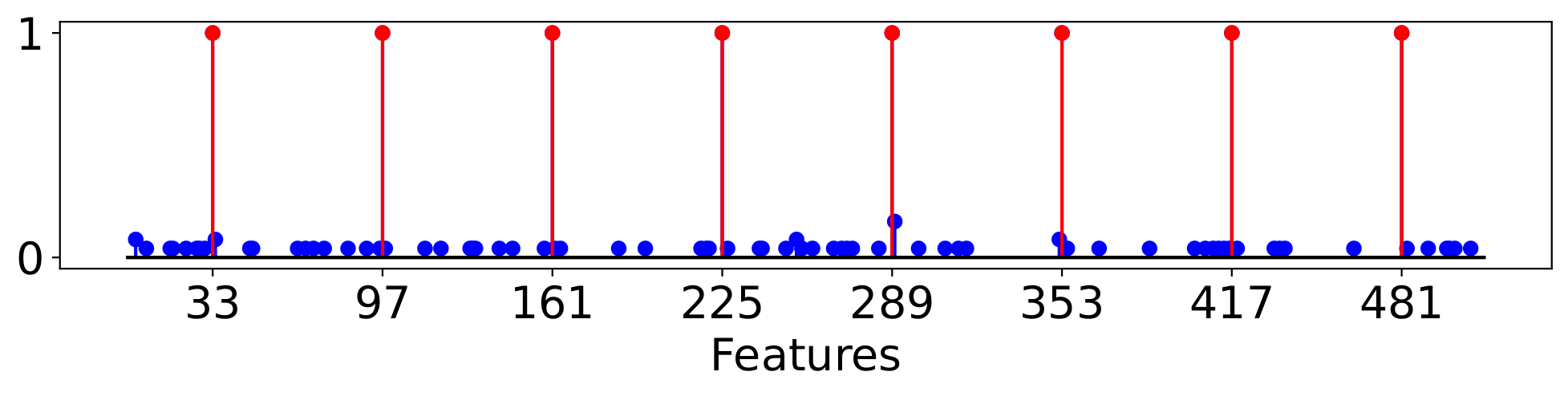} \\ \hline 
      N=1000 & N=1000 & N=1000 \\
      \includegraphics[width=0.31\textwidth]{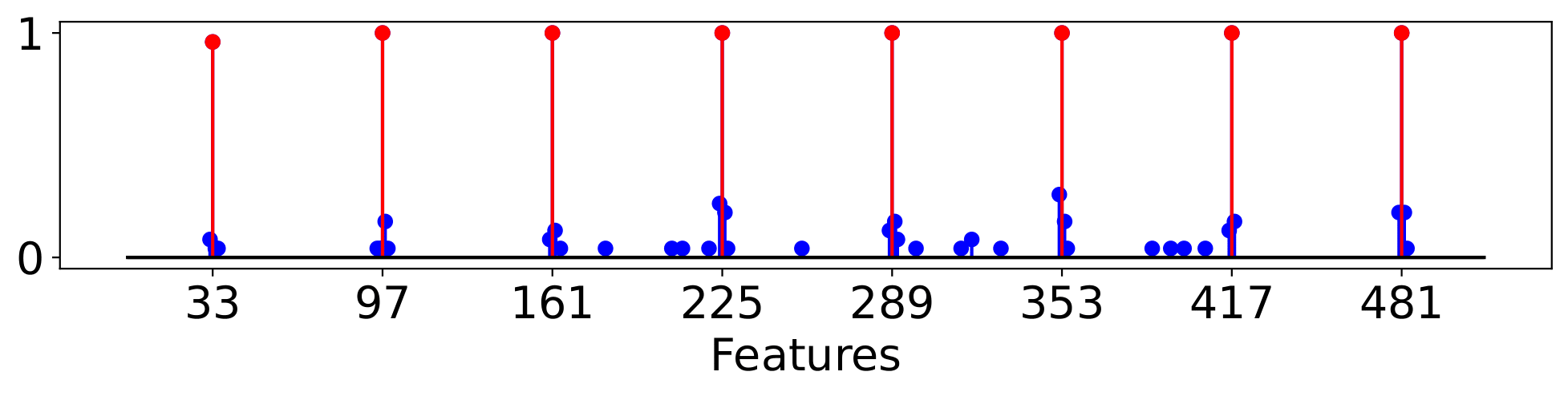}&  \includegraphics[width=0.31\textwidth]{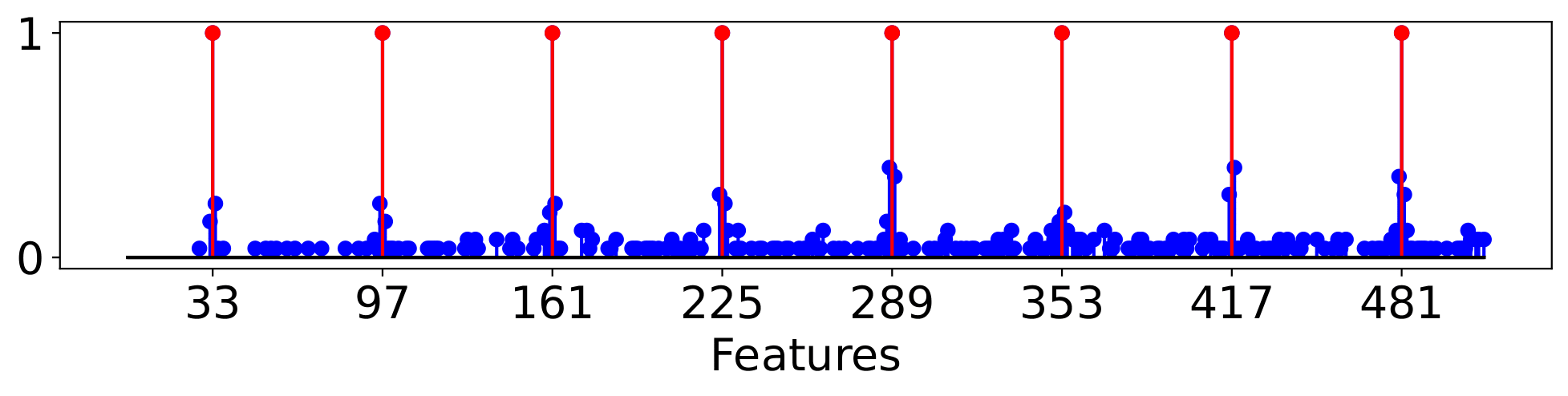}& \includegraphics[width=0.31\textwidth]{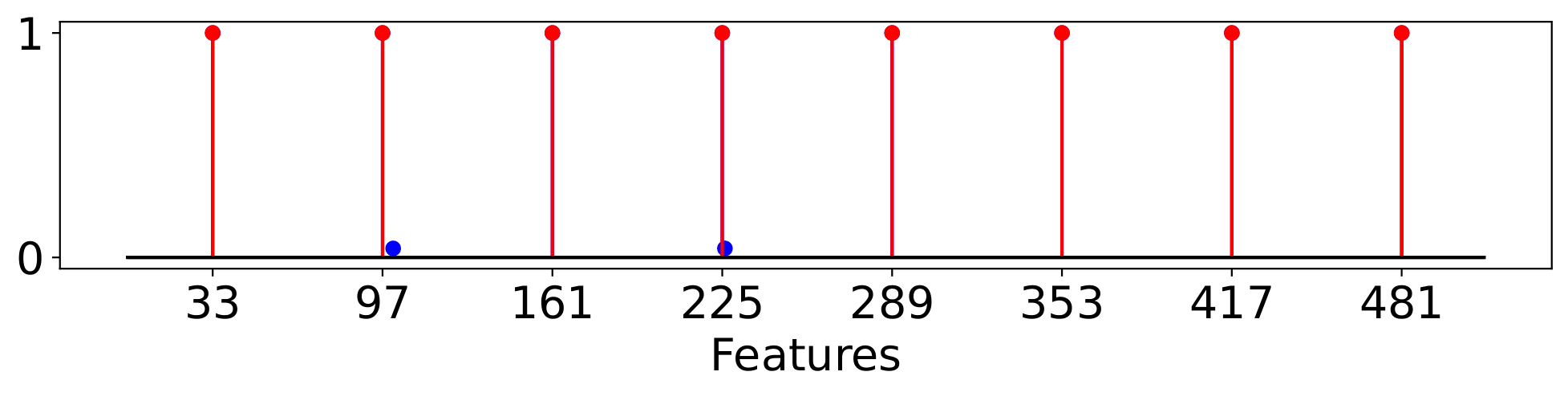} \\
\end{tabular}
\caption{\emph{Features selected by Random Forests, XGBoost and \modelname~ for different sample sizes}}      
\label{fig:rf-xgboost-ours}
\end{figure*}

\subsection{Dense-to-Sparse Learning (DSL)}
Prior work in feature selection  \citep{Lemhadri2021} recommend an interesting multi-stage approach: Train a dense model completely and then learn a progessively sparser model. At each sparsity level, the model is trained till convergence.
This approach appears to effectively leverage favorable generalization properties of the dense solution and preserves them after drifting into sparse local 
minima~\citep{Lemhadri2021}---in particular, this seems to perform better than sparsely training from scratch.
However, this approach can be expensive as it requires learning a dense model completely before starting the sparse-training process---the training runtime is likely to increase for higher sparsity settings (i.e., fewer number of features).

To reduce the computational cost of the above approach, we propose single-stage approach based on dense-to-sparse learning (DSL).
To this end, we anneal the sparsity-inducing penalty $\lambda_0$ from small to large values ($0 \rightarrow \gamma$) during the course of training.
We use an exponential annealing scheduler of the form: $\lambda_0 = \gamma(1-\text{exp}(-s*t))$, where $\gamma$ is the largest value of regularization penalty (corresponds to a fully sparse soft tree), $s$ controls the rate of increase of the regularization penalty and $t$ denotes the iteration step. 

We show the trajectory of the validation loss and the number of features selected during training with dense-to-sparse learning in Figure \ref{fig:dsl-trajectory}.
We empirically observed this scheduler to result in better out-of-sample accuracy and feature-sparsity tradeoffs (see Figure \ref{fig:dsl} in Sec. \ref{sec:results-dsl}).

\section{SYNTHETIC EXPERIMENTS}
\label{sec:simulation}
We first evaluate our proposed method using data with correlated features.
In real-world, high-dimensional datasets, features are often correlated.
Such correlations pose challenges for feature selection.
Existing tree ensemble toolkits, e.g., XGBoost and Random Forests, based on feature importance scores, may produce misleading results~\citep{Zhou2021}---any of the correlated features can work as a splitting variable, and the feature importance scores can get distributed (and hence deflated) among the correlated features.
Below, we consider a setting with correlated features and demonstrate the strong performance of \modelname~in terms of true support recovery on synthetic data.

\begin{table}[!t]
\caption{\emph{Test MSE, feature sparsity and support recovery metrics (F1-score) for a linear setting with correlated design matrix. \modelname~outperforms feature-importance-based methods across all metrics.}
}
\label{tab:synthetic-results}
\small
\resizebox{\columnwidth}{!}{\begin{tabular}{c|c|c|l|c|c|c}
$\sigma$ & $p$ & $N$ & Model & Test MSE & \#features & F1-score \\ \hline
\multirow{9}{*}{0.7} & \multirow{9}{*}{512} & \multirow{3}{*}{100} & RF & 6.49 $\pm$ 0.19 & 79 $\pm$ 17 & 0.21 $\pm$ 0.02 \\
 &  &  & XGBoost & 8.65 $\pm$ 0.27 & 32 $\pm$ 9 & 0.18 $\pm$ 0.03 \\
 &  &  & \modelname & \textbf{0.65} $\pm$ 0.12 & \textbf{12} $\pm$ 1 & \textbf{0.86} $\pm$ 0.04 \\ \cline{3-7} 
 &  & \multirow{3}{*}{200} & RF & 4.90 $\pm$ 0.15 & 40 $\pm$ 8 & 0.35 $\pm$ 0.03 \\
 &  &  & XGBoost & 5.97 $\pm$ 0.12 & 110 $\pm$ 21 & 0.18 $\pm$ 0.02 \\
 &  &  & \modelname & \textbf{0.34} $\pm$ 0.00 & \textbf{11} $\pm$ 1 & \textbf{0.89} $\pm$ 0.03 \\ \cline{3-7} 
 &  & \multirow{3}{*}{1000} & RF & 2.97 $\pm$ 0.03 & 11 $\pm$ 1 & 0.84 $\pm$ 0.02 \\
 &  &  & XGBoost & 1.81 $\pm$ 0.02 & 24 $\pm$ 1 & 0.50 $\pm$ 0.02 \\
 &  &  & \modelname & \textbf{0.26} $\pm$ 0.00 & \textbf{8} $\pm$ 0 & \textbf{1.00} $\pm$ 0.00 \\ \hline
\multirow{9}{*}{0.5} & \multirow{9}{*}{256} & \multirow{3}{*}{100} & RF & 6.24 $\pm$ 0.13 & 42 $\pm$ 11 & 0.35 $\pm$ 0.03 \\
 &  &  & XGBoost & 7.93 $\pm$ 0.27 & 35 $\pm$ 10 & 0.25 $\pm$ 0.02 \\
 &  &  & \modelname & \textbf{0.45} $\pm$ 0.06 & \textbf{10} $\pm$ 1 & \textbf{0.89} $\pm$ 0.03 \\ \cline{3-7} 
 &  & \multirow{3}{*}{200} & RF & 4.40 $\pm$ 0.13 & 18 $\pm$ 3 & 0.61 $\pm$ 0.04 \\
 &  &  & XGBoost & 5.61 $\pm$ 0.13 & 67 $\pm$ 12 & 0.26 $\pm$ 0.02 \\
 &  &  & \modelname & \textbf{0.31} $\pm$ 0.00 & \textbf{12} $\pm$ 1 & \textbf{0.87} $\pm$ 0.04 \\ \cline{3-7} 
 &  & \multirow{3}{*}{1000} & RF & 2.90 $\pm$ 0.02 & 9 $\pm$ 0 & 0.94 $\pm$ 0.01 \\
 &  &  & XGBoost & 1.45 $\pm$ 0.01 & 10 $\pm$ 0 & 0.91 $\pm$ 0.01 \\
 &  &  & \modelname & \textbf{0.26} $\pm$ 0.00 & \textbf{8} $\pm$ 0 & \textbf{1.00} $\pm$ 0.00 \\ \hline
\end{tabular}}
\end{table}
We evaluate our approach in a setting where the underlying data comes from a sparse linear model.
We generate the data matrix, $\B X\in\R^{N \times p}$ with samples drawn from a multivariate normal distribution $\C{N}(0, \B \Sigma)$ where entries of the covariance matrix $\B \Sigma$ are given by $\Sigma_{ij}=\sigma^{|i-j|}$.
We construct the response variable $\B y=\B X\B\beta^{*}+\epsilon$ where $\epsilon_i, i \in [N]$ are drawn independently from $\C{N}(0, 0.5)$. 
The locations of nonzero entries of $\B\beta^*$ are equi-spaced in $[p]$, with each nonzero entry one, and $\norm{\B\beta^{*}}_0=8$.
We experiment with a range of training set sizes $N \in \{100, 200, 1000\}$, correlation strengths $\sigma \in \{0.5, 0.7\}$, and number of total features $p \in \{256,512\}$. 
We evaluate the final performance averaged across 25 runs in terms of (i) test MSE, (ii) number of features selected and (iii) support recovery (computed via the F1-score between the true and recovered support).
More details are in Supplement Sec. \ref{supp-sec:simulation}.

\modelname~significantly outperforms both Random Forests and XGBoost in all three measures across various settings. With \modelname, we observe a 5-15 fold improvement in MSE performance and $9\%-65\%$ improvement in the support recovery metric (F1-score).
Table \ref{tab:synthetic-results} shows that even if the features are correlated, \modelname~successfully recovers the true support with high probability. We also visualize this in Figure \ref{fig:rf-xgboost-ours}. Indices corresponding to those in the true support are depicted in red.
This confirms the usefulness of our end-to-end feature selection approach.

\section{REAL DATA EXPERIMENTS}
\label{sec:experiments}
We study the performance of \modelname~on real-world datasets and compare against popular competing methods. We make the following comparisons:
(i) Single \modelsname~ vs other single tree baseline approaches with a limit on number of features,
(ii) \modelname~vs dense soft trees,
(iii) \modelname~vs wrapper-based feature selection tree toolkits,
(iv) \modelname~vs  neural network based embedded feature selection toolkits,
(v) Ablation study for dense-to-sparse learning for feature selection. 

\noindent\textbf{Implementation.}
\modelname~are implemented in TensorFlow Keras.     
Our code for \modelname~is available at\\
\url{https://github.com/mazumder-lab/SkinnyTrees}.

\noindent\textbf{Datasets.}
We use 14 open-source classification datasets (binary and multiclass) from various domains with a range of number of features: $20-100000$. Dataset details are in Table \ref{tab:classification-datasets} in Supplement.

\noindent\textbf{Tuning, Toolkits, and Details.} For all the experiments, we tune the hyperparameters using Optuna \citep{Akiba2019} with random search.
The number of selected features affects the AUC. Therefore, to treat all the methods in a fair manner, we tune the hyperparameter that controls the sparsity level using Optuna which optimizes the AUC across different $K$'s (budget on number of selected features) e.g., $0.25p$ or $0.50p$ on a held-out validation set.
Details are in the Supplement.

\subsection{Studying a single tree}
\label{sec:comparison-with-tao}
We first study feature selection for a single tree on 4 classification tasks. We study the performance of \modelsname~(a single soft tree with group $\ell_0-\ell_2$ regularization). 

\noindent\textbf{Competing Methods.}
We compare against:
\begin{enumerate}
    \item Decision tree with hyperplane splits (TAO~\citep{Carreira-Perpinan2018}) using $\ell_1$ regularization for node-level feature selection.
    \item Soft Tree with a Group Lasso~\citep{Yuan2006,Scardapane2017} regularization given by $\frac{\lambda_1}{\sqrt{m|\C{I}|}} {\sum}_{k \in [p]} \norm{\bm{\C{W}}_{k,:,:}}_2$.
\end{enumerate}

\begin{table}[!t]
\centering
\caption{\emph{Test AUC for TAO with $\ell_1$ regularization, single soft tree with Group Lasso and \modelsname ~(a single soft tree with Group $\ell_0-\ell_2$).}}
\label{tab:comparison-with-tao-l1}
\setlength{\tabcolsep}{5pt}
\resizebox{\columnwidth}{!}{\begin{tabular}{l|c|c|c}
 & Classical Tree & Soft Tree w/ & \modelsname \\ 
 & TAO & Group Lasso &  \\ \hline
Churn & 58.36 & 76.23 & \textbf{89.35}$\pm$0.15 \\
Satimage & 58.53 & 83.89 & \textbf{88.66}$\pm$0.05 \\
Texture & 58.90 & 93.83 & \textbf{98.42}$\pm$0.01 \\
Mice-protein & 57.13 & 87.88 & \textbf{99.19}$\pm$0.00 \\ \hline
\end{tabular}}
\end{table}

\begin{table}[!b]
\centering
\caption{Test AUC for \modelname~vs \emph{dense} Soft Trees. We also report feature compression.}
\label{tab:comparison-with-dense}
\setlength{\tabcolsep}{10pt}
\resizebox{\columnwidth}{!}{\begin{tabular}{l|cc|r}
 & Dense Trees & \modelname & Compression \\ \hline
Churn & \multicolumn{1}{l|}{91.15$\pm$0.09} & \multicolumn{1}{l|}{\textbf{93.20}$\pm$0.08} & 1.8$\times$ \\
Gisette & \multicolumn{1}{l|}{\textbf{99.81}$\pm$0.003} & \multicolumn{1}{l|}{\textbf{99.81}$\pm$0.002} & 1.5$\times$ \\
Arcene & \multicolumn{1}{l|}{89.57$\pm$0.11} & \multicolumn{1}{l|}{\textbf{90.80}$\pm$0.30} & 2$\times$ \\ 
Dorothea & \multicolumn{1}{l|}{90.67$\pm$0.03} & \multicolumn{1}{l|}{\textbf{92.15}$\pm$0.25} & 2.7$\times$ \\
Madelon & \multicolumn{1}{l|}{65.32$\pm$0.15} & \multicolumn{1}{l|}{\textbf{95.44}$\pm$0.05} & 26$\times$ \\ 
Smk & \multicolumn{1}{l|}{\textbf{84.10}$\pm$0.16} & \multicolumn{1}{l|}{79.29$\pm$0.22} & 253$\times$ \\
Cll & \multicolumn{1}{l|}{81.70$\pm$0.82} & \multicolumn{1}{l|}{\textbf{92.86}$\pm$0.31} & 189$\times$ \\ 
Gli & \multicolumn{1}{l|}{88.65$\pm$0.90} & \multicolumn{1}{l|}{\textbf{99.80}$\pm$0.07} & 619$\times$ \\
Lung & \multicolumn{1}{l|}{99.40$\pm$0.09} & \multicolumn{1}{l|}{\textbf{99.80}$\pm$0.03} & 253$\times$ \\
Tox & \multicolumn{1}{l|}{99.19$\pm$0.04} & \multicolumn{1}{l|}{\textbf{99.74}$\pm$0.02} & 189$\times$ \\ \hline
\end{tabular}}
\end{table}

\begin{table*}[!t]
\centering
\caption{\emph{Test AUC (\%) performance of \modelname~and feature-importance-based toolkits for \textbf{trees} for $25\%$ feature budget ($K=0.25p$). Bold and italics indicates best and runner-up models respectively.}}
\label{tab:sparse-soft-trees-vs-classical-trees}
\setlength{\tabcolsep}{15pt}
\resizebox{\textwidth}{!}{\begin{tabular}{l|l|c|c|c|c|c}
\multicolumn{1}{l|}{\multirow{1}{*}{Case}} & \multicolumn{1}{l|}{\multirow{1}{*}{Dataset}} & \multirow{1}{*}{Random Forests} & \multirow{1}{*}{XGBoost} & \multirow{1}{*}{LightGBM} & \multicolumn{1}{c|}{\multirow{1}{*}{CatBoost}} & \multicolumn{1}{c}{\modelname} \\ \hline
\multirow{7}{*}{$N<p$} & \multicolumn{1}{l|}{Lung} & \multicolumn{1}{l|}{93.80$\pm$0.28} & \multicolumn{1}{l|}{86.38$\pm$0.48} & \multicolumn{1}{l|}{80.83$\pm$1.87} & \multicolumn{1}{l|}{\textit{94.72}$\pm$0.56} & \multicolumn{1}{l}{\textbf{99.80}$\pm$0.03} \\ 
& \multicolumn{1}{l|}{Tox} & \multicolumn{1}{l|}{94.52$\pm$0.14} & \multicolumn{1}{l|}{\textit{97}.10$\pm$0.09} & \multicolumn{1}{l|}{95.94$\pm$0.54} & \multicolumn{1}{l|}{95.95$\pm$0.14} & \multicolumn{1}{l}{\textbf{99.74}$\pm$0.02} \\ 
& \multicolumn{1}{l|}{Arcene}  & \multicolumn{1}{l|}{74.80$\pm$0.36} & \multicolumn{1}{l|}{76.36$\pm$0.16}  & \multicolumn{1}{l|}{\textit{76.92}$\pm$0.36} & \multicolumn{1}{l|}{76.64$\pm$0.22} & \multicolumn{1}{l}{\textbf{80.80}$\pm$0.30} \\ 
& \multicolumn{1}{l|}{Cll} & \multicolumn{1}{l|}{94.08$\pm$0.27} & \multicolumn{1}{l|}{\textit{94.21}$\pm$0.18} & \multicolumn{1}{l|}{55.17$\pm$1.14} & \multicolumn{1}{l|}{\textbf{94.41}$\pm$0.26} & \multicolumn{1}{l}{92.86$\pm$0.31} \\ 
& \multicolumn{1}{l|}{Smk} & \multicolumn{1}{l|}{77.78$\pm$0.20} & \multicolumn{1}{l|}{76.88$\pm$0.40} & \multicolumn{1}{l|}{67.29$\pm$0.91} & \multicolumn{1}{l|}{\textit{78.44}$\pm$0.41} & \multicolumn{1}{l}{\textbf{79.29}$\pm$0.22} \\ 
& \multicolumn{1}{l|}{Gli} & \multicolumn{1}{l|}{87.35$\pm$1.08} & \multicolumn{1}{l|}{82.37$\pm$1.47} & \multicolumn{1}{l|}{71.28$\pm$2.05} & \multicolumn{1}{l|}{\textit{91.31}$\pm$0.73} & \multicolumn{1}{l}{\textbf{99.80}$\pm$0.07} \\ 
& \multicolumn{1}{l|}{Dorothea}  & \multicolumn{1}{l|}{\textit{89.71}$\pm$0.12} & \multicolumn{1}{l|}{89.09$\pm$0.09} & \multicolumn{1}{l|}{88.14$\pm$0.18} & \multicolumn{1}{l|}{88.50$\pm$0.27} & \multicolumn{1}{l}{\textbf{90.87}$\pm$0.02} \\ 
\hline
\multirow{7}{*}{$N>p$} & \multicolumn{1}{l|}{Churn}  & \multicolumn{1}{l|}{83.79$\pm$0.24} & \multicolumn{1}{l|}{\textit{88.68}$\pm$0.06} & \multicolumn{1}{l|}{86.33$\pm$0.08} & \multicolumn{1}{l|}{83.73$\pm$0.06} & \multicolumn{1}{l}{\textbf{91.38}$\pm$0.08}  \\ 
& \multicolumn{1}{l|}{Satimage} & \multicolumn{1}{l|}{97.62$\pm$0.005} & \multicolumn{1}{l|}{\textbf{98.23}$\pm$0.01} & \multicolumn{1}{l|}{94.00$\pm$0.05} & \multicolumn{1}{l|}{95.11$\pm$0.05} & \multicolumn{1}{l}{\textit{98.05}$\pm$0.01}  \\ 
& \multicolumn{1}{l|}{Texture}  & \multicolumn{1}{l|}{99.60$\pm$0.003} & \multicolumn{1}{l|}{\textit{99.94}$\pm$0.001} & \multicolumn{1}{l|}{96.14$\pm$0.03} & \multicolumn{1}{l|}{94.90$\pm$0.07} & \multicolumn{1}{l}{\textbf{99.97}$\pm$0.002}  \\ 
& \multicolumn{1}{l|}{Mice-protein}  & \multicolumn{1}{l|}{99.30$\pm$0.01} & \multicolumn{1}{l|}{\textbf{99.77}$\pm$0.01} & \multicolumn{1}{l|}{89.59$\pm$0.22} & \multicolumn{1}{l|}{95.03$\pm$0.07} &  \multicolumn{1}{l}{\textit{99.59}$\pm$0.02} \\ 
& \multicolumn{1}{l|}{Isolet}  & \multicolumn{1}{l|}{99.17$\pm$0.002} & \multicolumn{1}{l|}{99.86$\pm$0.002} & \multicolumn{1}{l|}{97.62$\pm$0.003} & \multicolumn{1}{l|}{\textit{99.89}$\pm$0.001} & \multicolumn{1}{l}{\textbf{99.94}$\pm$0.01} \\ 
& \multicolumn{1}{l|}{Madelon}  & \multicolumn{1}{l|}{94.11$\pm$0.02} & \multicolumn{1}{l|}{\textit{94.65}$\pm$0.01}  & \multicolumn{1}{l|}{86.46$\pm$0.08} & \multicolumn{1}{l|}{\textbf{96.41}$\pm$0.01} & \multicolumn{1}{l}{94.14$\pm$0.09} \\ 
& \multicolumn{1}{l|}{Gisette}  & \multicolumn{1}{l|}{98.99$\pm$0.004} & \multicolumn{1}{l|}{\textit{99.64}$\pm$0.004} & \multicolumn{1}{l|}{98.09$\pm$0.50} & \multicolumn{1}{l|}{99.57$\pm$0.01} & \multicolumn{1}{l}{\textbf{99.81}$\pm$0.002} \\ 
\hline
& \multicolumn{1}{l|}{Average} & \multicolumn{1}{l|}{91.75} & \multicolumn{1}{l|}{91.65} & \multicolumn{1}{l|}{84.56} & \multicolumn{1}{l|}{91.76}  & \multicolumn{1}{l}{\textbf{94.72}}
\end{tabular}}
\end{table*}
\noindent\textbf{Results.} The numbers for classical-tree based TAO with $\ell_1$ regularization and soft tree with Group Lasso regularization are shown in Table \ref{tab:comparison-with-tao-l1} for 50\% sparsity budget. Results for \modelsname~are also shown. We see a huge gain in test AUC performance across all 4 datasets with \modelsname~in comparison with TAO and group lasso variant of a soft tree.
This confirms that in the context of feature selection at the ensemble level, a node-level $\ell_1$ penalty is sub-optimal. Similarly, it also suggests that joint selection and shrinkage using Group Lasso can be less useful than Group $\ell_0-\ell_2$.

\subsection{\modelname~vs Dense Soft Trees}
\label{sec:results-dense-vs-sparse}
In this section, we compare our sparse trees with dense soft trees. For dense soft trees, we use FASTEL \citep{Ibrahim2022} (an efficient state-of-the-art toolkit for training soft tree ensembles). 
We present test AUC performances in Table \ref{tab:comparison-with-dense}. 
\modelname~matches or outperforms dense soft trees in 10 datasets.
Notably, we observe a $30\%$ gain in test AUC on Madelon dataset with \modelname. 
We also observe $11\%$ improvements in test AUC on Cll and Gli datasets.
Additionally, sparse trees achieve $1.3\!\times\!-620\times$ feature compression on 10 datasets.
Note that in soft trees, feature compression has a direct impact on model compression---this has reduced storage requirements and results in faster inference.
We observed up to $10\times$ faster inference times for \modelname~compared to dense soft trees for compression rates of $1.5\!\times\!-620\times$.

\subsection{\modelname~vs Classical Trees}
\label{sec:results-soft-vs-classical}
We compare \modelname~  against wrapper methods for feature selection as available from ensembles of classical trees (e.g., Random Forests,  XGBoost, LightGBM, and CatBoost) on real-world datasets.
For \modelname, we use the combined dense-to-sparse scheduler. The tuning protocol and hyperparameters for all methods are reported in the Supplement Sec. \ref{supp-sec:soft-vs-classical}.
The results are in Table \ref{tab:sparse-soft-trees-vs-classical-trees}.
\modelname~leads on 10 datasets. 
In contrast, other methods lead on 2 datasets.
In terms of test AUC, \modelname~outperforms LightGBM by $10.2\%$ (up to $37.7\%$), XGBoost by $3.1\%$ (upto $17.4\%$), Random Forests by $3\%$ (up to $12.5\%$) and CatBoost by $3\%$ (up to $8.5\%$).
Overall, \modelname~provides a strong alternative to existing wrapper-based methods.

Additional comparison with ControlBurn \citep{Liu2021} is included in Supplement Sec. \ref{supp-sec:controlburn}. 
\modelname~also outperforms ControlBurn, achieving $2\%$ (up to $6\%$) improvement in AUC.

\begin{table*}[!t]
\centering
\caption{\emph{Test AUC (\%) performance of \modelname~and embedded feature selection methods from \textbf{neural networks} (LassoNet, AlgNet, DFS) for $25\%$ feature budget.}}
\label{tab:sparse-soft-trees-vs-nn}
\setlength{\tabcolsep}{20pt}
\resizebox{\textwidth}{!}{\begin{tabular}{l|c|c|c|c|c}
\multicolumn{1}{l|}{\multirow{1}{*}{Case}} & \multicolumn{1}{l|}{\multirow{1}{*}{Dataset}} & \multirow{1}{*}{LassoNet} & \multirow{1}{*}{AlgNet} & \multirow{1}{*}{DFS} & \multicolumn{1}{c}{\modelname} \\ \hline
\multirow{7}{*}{$N<p$} & \multicolumn{1}{l|}{Lung}  & \multicolumn{1}{l|}{99.56$\pm$0.02} & \multicolumn{1}{l|}{56.72$\pm$1.34} & \multicolumn{1}{l|}{\textit{98.05}$\pm$0.36} & \multicolumn{1}{l|}{\textbf{99.80}$\pm$0.03} \\ 
& \multicolumn{1}{l|}{Tox}  & \multicolumn{1}{l|}{99.63$\pm$0.03} & \multicolumn{1}{l|}{51.01$\pm$0.70} & \multicolumn{1}{l|}{\textit{99.13}$\pm$0.24} & \multicolumn{1}{l|}{\textbf{99.74}$\pm$0.02} \\ 
& \multicolumn{1}{l|}{Arcene}  & \multicolumn{1}{l|}{66.26$\pm$0.29} & \multicolumn{1}{l|}{51.00$\pm$1.77} & \multicolumn{1}{l|}{69.37$\pm$0.59} & \multicolumn{1}{l|}{\textbf{80.80}$\pm$0.30} \\ 
& \multicolumn{1}{l|}{Cll}  & \multicolumn{1}{l|}{\textbf{95.04}$\pm$0.24} & \multicolumn{1}{l|}{64.96$\pm$2.54} & \multicolumn{1}{l|}{92.85$\pm$0.32} & \multicolumn{1}{l|}{\textit{92.86}$\pm$0.31} \\ 
& \multicolumn{1}{l|}{Smk}  & \multicolumn{1}{l|}{\textbf{85.56}$\pm$0.31} & \multicolumn{1}{l|}{53.92$\pm$2.16} & \multicolumn{1}{l|}{\textit{79.62}$\pm$0.29} & \multicolumn{1}{l|}{79.29$\pm$0.22} \\ 
& \multicolumn{1}{l|}{Gli}  & \multicolumn{1}{l|}{\textit{97.78}$\pm$0.56} & \multicolumn{1}{l|}{61.37$\pm$4.27} & \multicolumn{1}{l|}{92.25$\pm$0.64} & \multicolumn{1}{l|}{\textbf{99.80}$\pm$0.07} \\ 
& \multicolumn{1}{l|}{Dorothea}  & \multicolumn{1}{l|}{out of mem.} & \multicolumn{1}{l|}{81.74$\pm$0.91} & \multicolumn{1}{l|}{\textit{85.18*}} & \multicolumn{1}{l|}{\textbf{90.87}$\pm$0.02} \\ 
\hline
\multirow{7}{*}{$N>p$} & \multicolumn{1}{l|}{Churn}  & \multicolumn{1}{l|}{67.34$\pm$1.58} & \multicolumn{1}{l|}{70.10$\pm$0.91} & \multicolumn{1}{l|}{\textit{85.70}$\pm$0.52} & \multicolumn{1}{l|}{\textbf{91.38}$\pm$0.08} \\ 
& \multicolumn{1}{l|}{Satimage} & \multicolumn{1}{l|}{94.73$\pm$0.19} & \multicolumn{1}{l|}{95.30$\pm$0.20} & \multicolumn{1}{l|}{\textit{97.39}$\pm$0.04} & \multicolumn{1}{l|}{\textbf{98.05}$\pm$0.01} \\ 
& \multicolumn{1}{l|}{Texture}  & \multicolumn{1}{l|}{98.02$\pm$0.40} & \multicolumn{1}{l|}{76.24$\pm$1.94} & \multicolumn{1}{l|}{\textit{99.63}$\pm$0.04} & \multicolumn{1}{l|}{\textbf{99.97}$\pm$0.002} \\ 
& \multicolumn{1}{l|}{Mice-protein}  & \multicolumn{1}{l|}{94.90$\pm$0.26} & \multicolumn{1}{l|}{89.07$\pm$0.59} & \multicolumn{1}{l|}{\textit{99.04}$\pm$0.03} &  \multicolumn{1}{l|}{\textbf{99.59}$\pm$0.02} \\ 
& \multicolumn{1}{l|}{Isolet}  & \multicolumn{1}{l|}{99.64$\pm$0.01} & \multicolumn{1}{l|}{70.21$\pm$2.92} & \multicolumn{1}{l|}{\textit{99.92}$\pm$0.00} & \multicolumn{1}{l|}{\textbf{99.94}$\pm$0.01} \\ 
& \multicolumn{1}{l|}{Madelon}  & \multicolumn{1}{l|}{81.15$\pm$2.53} & \multicolumn{1}{l|}{68.55$\pm$1.42} & \multicolumn{1}{l|}{\textit{92.73}$\pm$0.45} & \multicolumn{1}{l|}{\textbf{94.14}$\pm$0.09} \\ 
& \multicolumn{1}{l|}{Gisette}  & \multicolumn{1}{l|}{99.81$\pm$0.002} & \multicolumn{1}{l|}{73.49$\pm$1.54} & \multicolumn{1}{l|}{$\textit{99.72}^{*}$} & \multicolumn{1}{l|}{\textbf{99.81}$\pm$0.002} \\ 
\hline
& \multicolumn{1}{l|}{Average}  & \multicolumn{1}{l|}{90.43**} & \multicolumn{1}{l|}{68.83} & \multicolumn{1}{l|}{92.18} & \multicolumn{1}{l|}{\textbf{94.72}} \\ 
\hline
\multicolumn{6}{l}{${}^{*}$DFS is very time-consuming to run, we report the test AUC for best trial (based on validation AUC)} \\
\multicolumn{6}{l}{during tuning on Gisette and Dorothea.} \\
\multicolumn{6}{l}{${}^{**}$Adjusted Average: $\frac{90.72}{(94.72*14-90.87)/13}*94.72=90.43$.} \\
\end{tabular}}
\end{table*}

\subsection{\modelname~vs Neural Networks}
\label{sec:results-soft-vs-nn}
In this paper, we pursue embedded feature selection methods for \emph{tree ensembles}.
However, for completeness, we compare \modelname~ against some state-of-the-art embedded feature selection methods from neural networks, namely LassoNet \citep{Lemhadri2021}, AlgNet \citep{Vu2020} and DFS \citep{Chen2021}. Details are in Supplement Sec. \ref{supp-sec:soft-vs-neural-networks}.

\noindent\textbf{Results.} We report AUC performance for $25\%$ feature budget in Table \ref{tab:sparse-soft-trees-vs-nn}.
\modelname~leads across many datasets.
In terms of test AUC, \modelname~outperforms LassoNet by $4.3\%$ (up to $24\%$), AlgNet by $25.9\%$ (up to $49\%$), and DFS by $2.6\%$ (up to $11.4\%$).

\subsection{Dense-to-Sparse Learning}
\label{sec:results-dsl}

\begin{figure}[!b]
\small
\centering
\begin{tabular}{c}
    Smk \\ 
    \includegraphics[width=0.475\textwidth]{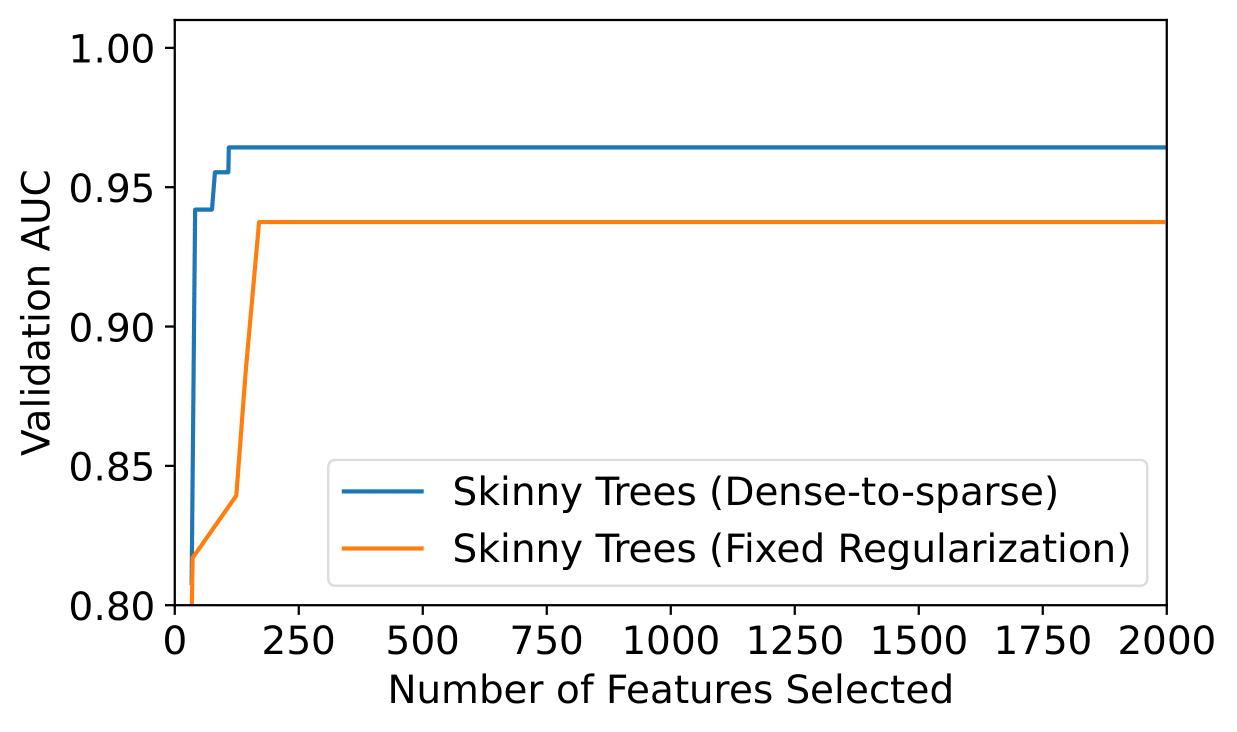} \\ 
    Dorothea \\ 
    \includegraphics[width=0.475\textwidth]{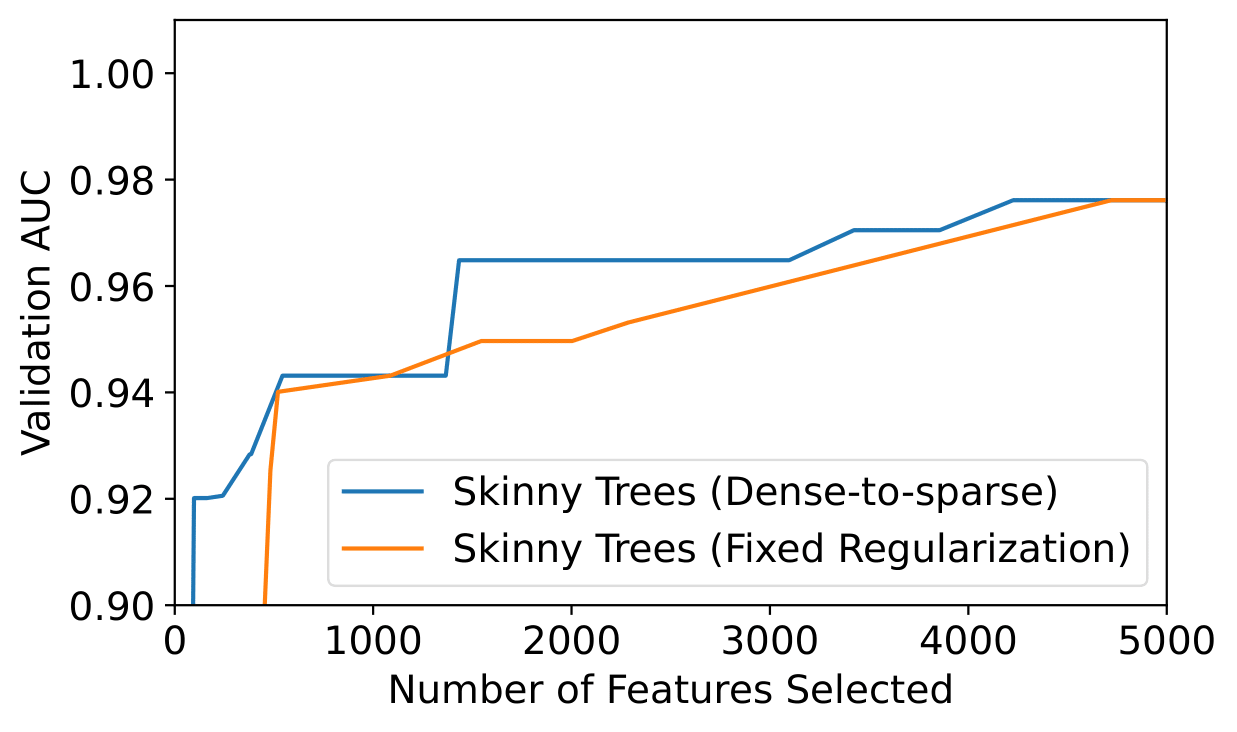}\\
\end{tabular}
\caption{Performance without/with Dense-to-sparse learning for different feature selection budgets.}  
\label{fig:dsl}
\end{figure}
\looseness=-1 We perform an ablation study in which we compare the predictive performance achieved with dense-to-sparse learning (DSL) over a range of feature selection budgets.
Tuning details are in the Supplement Sec. \ref{supp-sec:schedulers}. 
The results are reported in Figure \ref{fig:dsl}. 
Interestingly, we improve in test AUC across a range of feature selection budgets with dense-to-sparse learning over fixed regularization tuning.

\subsection{Discussion on training times.} \modelname~is very competitive in terms of training times in comparison to existing toolkits. We compared timings on a single Tesla V100 GPU.  
For example, on dorothea dataset, \modelname~trained in under 3 minutes for optimal hyperparameter setting. XGBoost took 10 minutes. In contrast, DFS took 45 hours.

\section{CONCLUSION}
We introduce an end-to-end optimization approach for joint feature selection and tree ensemble learning.
Our approach is based on differentiable trees with group $\ell_0-\ell_2$ regularization.
We use a simple but effective proximal mini-batch gradient descent algorithm and present convergence guarantees.
We propose a dense-to-sparse regularization scheduling approach that can lead to better feature-sparsity-vs-accuracy tradeoffs. 
We demonstrate on various datasets that our toolkit \modelname~can improve feature selection over several state-of-the-art wrapper-based feature selection methods in trees and embedded feature selection methods in neural networks. 

\section{Acknowledgments}
This research was supported in part, by grants from the Office of Naval Research (N000142112841), and Liberty Mutual Insurance. 
The authors acknowledge the MIT SuperCloud \citep{reuther2018interactive} and Lincoln Laboratory for providing HPC resources that have contributed to the research reported within this paper.

\bibliography{references}

\appendix
\onecolumn
\section*{SUPPLEMENTARY MATERIAL}

\setcounter{table}{0}
\renewcommand{\thetable}{S\arabic{table}}%
\setcounter{figure}{0}
\renewcommand{\thefigure}{S\arabic{figure}}%
\setcounter{equation}{0}
\renewcommand{\theequation}{S\arabic{equation}}
\setcounter{section}{0}
\renewcommand{\thesection}{S\arabic{section}}%
\setcounter{footnote}{0}

\section{NOTATIONS AND ACRONYMS}
\noindent\textbf{Notation} Table \ref{tab:notation} lists the notation used throughout the paper.
\begin{table}[!h]
    \caption{List of notation used.}
    \label{tab:notation}
    \centering
    \resizebox{0.9\textwidth}{!}{\begin{tabular}{|c|c|p{125mm}|}
    \hline
    \textbf{Notation}     & \textbf{Space or Type} & \textbf{Explanation}  \\
    \hline
    $[n]$ & Set & The set of integers $\{1,2,....,n\}$. \\
    \hline
    $\B 1_m$ & $\R^m$ & Vector with all coordinates equal to 1. \\
    \hline
    $\B U$ & $\R^{m,n}$ & Matrix with elements $((U_{ij}))$ \\
    \hline
    $\B u \cdot \B v$ & $\R$ & A dot product between two vectors $\B u, \B v$. \\ 
    \hline
    $\B U \cdot \B v$ & $\R^{n}$ & A dot product between a matrix $\B U \in R^{m,n}$ and a vector $\B v \in \R^m$ is denoted as $\B U \cdot \B v = \B U^T v \in \R^{n}$. \\
    \hline
    $\mathcal{X}$ &$\mathbb{R}^{p}$ & Input feature space.\\
    \hline
    $\mathcal{Y}$ & $\mathbb{R}^{c}$ & Output (label) space. \\
    \hline
    $m$ & $\mathbb{Z}_{> 0}$ & Number of trees in \modelname. \\
    \hline
    $\B f(\B x)$ & Function & The output of \modelname, a function that takes an input sample and returns a logit which corresponds to the sum of all the trees in the ensemble. Formally, $\B f: \mathcal{X} \to \mathbb{R}^{c}.$ \\
    \hline
    $\B f^j(\B x)$ & Function & A single perfect binary tree which takes an input sample and returns a logit, i.e., $\B f^j: \mathcal{X} \to \mathbb{R}^{c}$.\\
    \hline
    $d$ & $\mathbb{Z}_{> 0}$ & The depth of tree $\B f^j$. \\
    \hline
    $\mathcal{I}^j$ & Set & The set of internal (split) nodes in $\B f^j$. \\
    \hline
    $\mathcal{I}$ & Set & The set of internal (split) supernodes in $\B f$. \\
    \hline
    $\mathcal{L}^j$ & Set & The set of leaf nodes in $\B f^j$. \\
    \hline
    $\mathcal{A}(i)$ & Set & The set of ancestors of node $i$.\\ 
    \hline
    $\{ x \to i \}$ & Event & The event that sample  $x \in \mathbb{R}^{p}$ reaches node $i$. \\
    \hline
    $\B w_i$ & $\mathbb{R}^{p}$ & Weight vector of internal node $i$ (trainable). Defines the hyperplane split used in sample routing. \\
    \hline
    $\B W_i$ & $\mathbb{R}^{p,m}$ & Matrix of all weights in the internal supernode $i$ of the ensemble in the tensor-formulation. \\
    \hline
    $\bm{\C{W}}$ & $\mathbb{R}^{p,m,|\mathcal{I}|}$ & Tensor of all weights across all internal supernodes in the ensemble. \\
    \hline
    $\bm{\C{W}}_{k,:,:}$ & $\mathbb{R}^{m ,|\mathcal{I}|}$ & Matrix of all weights for $k$-th feature/covariate across all internal supernodes in the ensemble. \\
    \hline
    $S$ & Function & Activation function $\mathbb{R} \to [0,1]$ \\
    \hline 
    $S(\B w_i \cdot \B x)$ & $[0,1]$ & Probability (proportion) that internal node $i$ routes $x$ to the left. \\
    \hline
    $S'(v)$ & Function & The derivative of $S(v)$ \\
    \hline
    $[l \swarrow i]$ & Event & The event that leaf $l$ belongs to the left subtree of node $i \in \mathcal{I}$. \\
    \hline
    $[l \searrow i]$ & Event & The event that leaf $l$ belongs to the right subtree of node $i \in \mathcal{I}$. \\
    \hline
     $\B o_{l}$ & $\mathbb{R}^{c}$ & Leaf $l$'s weight vector (trainable). \\
    \hline
    $\B O_l$ & $\mathbb{R}^{m,c}$ & Matrix of weights in superleaf $l$. \\
    \hline
    $\bm{\C{O}}$ & $\mathbb{R}^{ m,c,|\mathcal{L}|}$ & Tensor of weights across the superleaves in the ensemble. \\
    \hline
    $L$ & Function & Loss function for training (e.g., cross-entropy). \\ 
    \hline
    $\C{Q}$ & Set & Unknown (learnable) subset of features of size at most $K$. \\
    \hline
    $z_k$ & $\{0,1\}$ & Binary variable controls whether $k$-th feature is on or off in Problem (2). \\
    \hline
    $\B z$ & $\{0,1\}^p$ & Binary vector controlling which features are on or off in Problem (2). \\
    \hline
    $\lambda_0$ & ${\mathbb{R}}_{\geq 0}$  & Non-negative $\ell_0$ regularization parameter controlling the number of features selected in Problem (3) \\
    \hline
    $\lambda_2$ & ${\mathbb{R}}_{\geq 0}$  & Non-negative $\ell_2$ regularization parameter controlling the shrinkage in Problem (3) \\
    \hline
    $\lambda_1$ & ${\mathbb{R}}_{\geq 0}$  & Non-negative $\ell_0$ regularization parameter controlling the number of features selected and shrinkage in Problem (\ref{eq:group-lasso}) \\
    \hline
    $\gamma$ & ${\mathbb{R}}_{\geq 0}$ & Non-negative scaling parameter for the exponential ramp-up of $\ell_0$-penalty in dense-to-sparse learning. \\
    \hline
    $s$ & ${\mathbb{R}}_{\geq 0}$ & Non-negative temperature parameter for controlling the ramping rate of the $\ell_0$-penalty in dense-to-sparse learning. \\
    \hline
    $\eta$ & ${\mathbb{R}}_{\geq 0}$ & Learning rate parameter for proximal mini-batch gradient descent. \\
    \hline
    \end{tabular}}
\end{table}

\clearpage
\noindent\textbf{Acronyms} Table \ref{tab:acronyms} lists the acronyms used throughout the paper.
\begin{table}[!h]
\small
\centering
\caption{List of Acronyms used.}
\label{tab:acronyms}
\begin{tabular}{ll}
\hline
\textbf{Terms}               & \textbf{Acronyms} \\ \hline
Gradient Descent & GD \\ \hline
Dense-to-sparse learning & DSL \\ \hline
\end{tabular}
\end{table}

\section{BACKGROUND: DIFFERENTIABLE (A.K.A. SOFT) DECISION TREES}
\label{supp-sec:soft-trees}
A soft tree is a variant of a classical decision tree that performs soft routing, i.e., a sample is fractionally routed to all leaves.
It was proposed by \citet{Jordan1994}, and further developed in \citep{Kontschieder2015,Hazimeh2020b} for end-to-end optimization.
Soft routing makes soft trees differentiable, so learning can be done using gradient-based methods. 

Let us fix some $j \in [m]$ and consider a single tree $\B f^j$, which takes an input sample and returns an output vector (logit), i.e., $\B f^j: X \in \R^p \rightarrow \R^c$. 
Moreover, we assume that $\B f^j$ is a perfect binary tree with depth $d$. 
Let $\C I^j$ and $\C L^j$ denote the sets of internal (split) nodes and the leaves of the tree, respectively. 
For any node $i \in \C{I}^j \cup \C{L}^j$, we define $A^j(i)$ as its set of ancestors and use the notation {$\B x \rightarrow i$} for the event that a sample $\B x \in \R^p$ reaches $i$. See Table \ref{tab:notation} for detailed notation summary.

\noindent\textbf{Routing} Following existing work \citep{Kontschieder2015,Hehn2019,Hazimeh2020b}, we present routing in soft trees with a probabilistic model.
Although the sample routing is formulated with a probabilistic model, the final prediction of the tree $\B f$ is a deterministic function as it assumes an expectation over the leaf predictions.
According to this probabilistic model, internal (split) nodes in a soft tree perform soft routing, where a sample is routed left and right with different probabilities. 
Classical decision trees are modeled with either axis-aligned splits \citep{Breiman1984, Quinlan1993} or hyperplane (a.k.a. oblique) splits \citep {Murthy1994}. Soft trees are based on hyperplane splits, where the routing decisions rely on a linear combination of the features. 
Particularly, each internal node $i \in \C{I}^j$ is associated with a trainable weight vector $\B{w}_i^j \in \R^p$ that defines the node’s hyperplane split.   

Given a sample $\B x \in \R^p$, the probability that internal node $i$ routes $\B x$ to the left is defined by $S(\B{w}_i^j \cdot \B x)$.
Now we discuss how to model the probability that $\B x$ reaches a certain leaf $l$.
Let $[l \shortarrow{5} i]$ (resp. [$i \shortarrow{7} l$]) denote the event that leaf $l$ belongs to the left (resp. right) subtree of node $i \in \C{I}^j$.
The probability that $\B x$ reaches $l$ is given by $ P^j(\{x \rightarrow l\}) = \prod_{i \in A(l)} r_{i,l}^j(\B x)$, where $r_{i,l}^j(\B x)$ is the probability of node $i$ routing $\B x$ towards the subtree containing leaf $l$, i.e., $r_{i,l}^j(x) := S(\B{w}_i^j \cdot \B x)^{1[l \shortarrow{5} i]} \odot (1 - S(\B{w}_i^j \cdot \B x))^{1[i \shortarrow{7} l]}$. Let $S: R \rightarrow [0, 1]$ be an activation function. Popular choices for $S$ include logistic function \citep{Jordan1994, Kontschieder2015, Frosst2017, Tanno2019, Hehn2019} and Smooth-step function (for hard routing) \citep{Hazimeh2020b}.
Next, we define how the root-to-leaf probabilities can be used to make the final prediction of the tree.

\noindent\textbf{Prediction} As with classical decision trees, we assume that each leaf stores a learnable weight vector $\B{o}_l^j \in R^c$.
For a sample $\B{x} \in \R^p$, prediction of the tree is defined as an expectation over the leaf outputs, i.e., $\B f^j(\B x) = \sum_{l \in \C{L}^j} P^j(\{\B x \rightarrow l\})\B o_l^j$.

\subsection{Tree Ensemble Tensor Formulation}
\citet{Ibrahim2022} proposed a tensor formulation for modeling tree ensembles more efficiently, which can lead to faster training times than classical formulations~\citep{Kontschieder2015,Hazimeh2020b}: $\sim10\times$ on CPUs and $\sim20\times$ on GPUs. We use a similar tensor formulation, as  we discuss below. 
The internal nodes in the trees across the ensemble are jointly modeled as a ``supernode''.   
In particular, an internal node $i \in \C{I}^j$ at depth $d$ in all trees can be condensed together into a supernode $i \in \C{I}$.
Let $\B W_i \in \R^{p,m}$ be learnable weight matrix, where each $j$-th column of the weight matrix contains the learnable weight vector $\B w_i^j$ of the original $j$-th tree in the ensemble.
Similarly, the leaves in the trees across the ensemble are jointly modeled as a superleaf.
Let $\B O_l \in \R^{m,c}$ be the learnable weight matrix to store the leaf nodes, where each $j$-th row contains the learnable weight vector $\B o_l^j$ in the original $j$-th tree in the ensemble. The prediction of the tree ensemble is 
$\B f(\B x)= (\sum_{l \in \C{L}} \B O_l \odot \prod_{i\in A(l)}\B R_{i,l})  \cdot \B{1}_m,$
where $\odot$ denotes the element-wise product, $\B R_{i,l} = S(\B W_i \cdot \B x)^{1[l \shortarrow{5} i]} \odot (1-S(\B W_i \cdot \B x))^{1[i \shortarrow{7} l]} \in \R^{m,1}$ and the activation function $S$ is applied element-wise. $\B 1_m \in \R^m$ is a vector of ones that combines the predictions of trees in the ensemble. We denote all the hyperplane parameters across all the supernodes of the tree ensemble as a tensor $\bm{\C{W}} \in R^{p,m,|\C{I}|}$ and the parameters across all the superleaves as a tensor $\bm{\C{O}} \in R^{m,c,|\C{L}|}$.

\section{PROOF OF THEOREM~\ref{conv-thm}}\label{app-proof}
\noindent\textbf{Overview and Preliminaries.} 
Let us denote the training loss corresponding to the sample $n$ as
\begin{equation}
   \Phi_n(\bm{\C{O}},\bm{\C W}) = L( y_n,\B f(\rvx_n; \bm{\C {W}}, \bm{\C O})).
\end{equation}
We also use the notation $\sX$ to denote the set of all decision variables in the model, $\sX:=(\B{\C{W}},\B{\C{O}})$. For $j\in[m]$ and $i\in\C{I}^j$ and $t\in[p]$, we let $w^j_{i,t}$ be the $t$-th coordinate of $\B{w}_i^j$.

In this proof, we follow the general steps outlined below:
\begin{enumerate}
    \item First, we show that $\Phi_n$ is $\C{M}$-smooth for some $\C{M}>0$ only depending on the data and the constants appearing in Assumptions~\ref{sproperties},~\ref{lproperties} and~\ref{o-bounded}. That is, there exists $\C{M}>0$ such that for any two $\sX_1=(\B{\C W}_1,\B{\C O}_1),\sX_2=(\B{\C W}_2,\B{\C O}_2)$ with $\|\B{\C O}_1\|_2,\|\B{\C O}_2\|_2\leq B$, we have
\begin{equation}
    \|\nabla \Phi_n(\bm{\C{O}}_1,\bm{\C W}_1)-\nabla \Phi_n(\bm{\C{O}}_2,\bm{\C W}_2)\|_2\leq \C{M} \|\sX_1-\sX_2\|_2.
\end{equation}
    This will prove the descent property of the algorithm.
\item Next, we show that as long as $\lambda_2>0$, the sequence of solutions generated by the algorithm is bounded.
\item Finally, we show that $\Phi_n$ is semi-algebraic~\citep[Chapter 2]{dries_1998,attouch2013convergence} and therefore satisfies the Kurdyka–Łojasiewicz (KL) property~\citep{kl1,kl2}. This will complete the proof of convergence.
\end{enumerate}

Before continuing with the proof, we derive some results that will be useful. For notational convenience, we drop the sample index $n$ as our results will be true for all samples.

First, for $j\in[m]$, $i\in\C{I}^j$ and $l\in\C{L}^j$
\begin{equation}\label{app-f-partials}
    \begin{aligned}
    \frac{\partial f(\rvx; \bm{\C {W}}, \bm{\C O})}{\partial \bm{w}_i^j} & = \sum_{l\in\C L^j}o_l^j \frac{\partial P^j(\{\rvx \rightarrow l\})}{\partial \bm{w}_i^j}\\
    \frac{\partial f(\rvx; \bm{\C {W}}, \bm{\C O})}{\partial o_l^j} & = P^j(\{\rvx \rightarrow l\}).
    \end{aligned}
\end{equation}
Thus,
\begin{align}
    \frac{\partial \Phi}{\partial \bm{w}_i^j}&= L'( y, f(\rvx; \bm{\C {W}}, \bm{\C O}))\frac{\partial f(\rvx; \bm{\C {W}}, \bm{\C O})}{\partial \bm{w}_i^j}\nonumber \\
    & = L'( y, f(\rvx; \bm{\C {W}}, \bm{\C O}))\sum_{l\in\C L^j}o_l^j \frac{\partial P^j(\{\rvx \rightarrow l\})}{\partial \bm{w}_i^j}.
\end{align}
If $i\notin \C A(l)$, then $\frac{\partial P^j(\{\rvx \rightarrow l\})}{\partial \bm{w}_i^j}=\B 0$. Otherwise,
\begin{align}
    \frac{\partial P^j(\{\rvx \rightarrow l\})}{\partial \bm{w}_i^j}=\prod_{\substack{k \in \C A(l) \\ k\neq i}}r_{k,l}^j(\rvx)\frac{\partial r_{i,l}^j(\rvx)}{\partial \bm{w}_i^j}.\label{app-p-partial}
\end{align}
Moreover, by the definition of $r_{i,l}^j$, we have
\begin{equation}\label{app-r-partial}
    \frac{\partial r_{i,l}^j(\rvx)}{\partial \bm{w}_i^j} = \begin{cases} S'(\B{w}_i^j \cdot \rvx)\rvx &\mbox{if }~~~ l \shortarrow{5} i \\
    -S'(\B{w}_i^j \cdot \rvx)\rvx &\mbox{if }~~~ l \shortarrow{7} i. 
    \end{cases}
\end{equation}
In addition, for $j\in[m]$ and $l\in\C{L}^j$,
\begin{align}
    \frac{\partial \Phi}{\partial o_l^j}&= L'( y, f(\rvx; \bm{\C {W}}, \bm{\C O}))\frac{\partial f(\rvx; \bm{\C {W}}, \bm{\C O})}{\partial o_l^j}\nonumber \\
    & = L'( y, f(\rvx; \bm{\C {W}}, \bm{\C O})) P^j(\{\rvx \rightarrow l\}).
\end{align}
We define 
\begin{equation}\label{app-phi-def}
\begin{aligned}
\phi_{i,t}^j(\sX) &=  \frac{\partial \Phi}{\partial {w}_{i,t}^j},\\
\phi_{l}^j(\sX) &=  \frac{\partial \Phi}{\partial {o}_{l}^j}.
\end{aligned}
\end{equation}

Next, we state a few technical lemma that will be useful in our proof.
\begin{lemma}\label{app-partition-lemma}
Define $E_{i,+}^{j},E_{i,-}^{j}\subseteq \R^{p,m,|\C{I}|}\times \R^{m,|\C{L}|}$ for $i\in\C{I}^j,j\in[m]$:
\begin{equation}
\begin{aligned}
    E_{i,+}^{j} = \{\B{w}_{i}^j\cdot \rvx =\theta\},~~
    E_{i,-}^{j} = \{\B{w}_{i}^j\cdot \rvx =-\theta\}
\end{aligned}
\end{equation}
and 
$$\C{D} = \left(\left[\R^{p,m,|\C{I}|}\times \R^{m,|\C{L}|}\right]\bigcap \{\|\B{\C{O}}\|_2\leq B\}\right)\setminus \bigcup_{\substack{j\in[m]\\i\in\C{I}^j}} \left\{E_{i,+}^{j}\cup E_{i,-}^{j}\right\}$$
where $B$ is defined in Assumption~\ref{o-bounded}. If $\sX\in \C{D}$,
then $\Phi(\sX)$ is infinitely differentiable. Moreover, 
$\C{D}$
can be partitioned into finitely many subsets.
\end{lemma}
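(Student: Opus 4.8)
\textbf{Proof plan for Lemma~\ref{app-partition-lemma}.}

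The plan is to establish the two claims separately, both exploiting the piecewise-polynomial structure of the activation $S$ from Assumption~\ref{sproperties}. First I would observe that $\Phi(\sX)=L(y,f(\rvx;\bm{\C W},\bm{\C O}))=(f(\rvx;\bm{\C W},\bm{\C O})-y)^2/2$ by Assumption~\ref{lproperties}, so it suffices to show that $f(\rvx;\bm{\C W},\bm{\C O})$ is infinitely differentiable on $\C D$; composition with the polynomial $t\mapsto (t-y)^2/2$ then preserves smoothness. Recalling the tensor-formulation expression $f(\rvx;\bm{\C W},\bm{\C O})=\sum_{j,l} o_l^j\prod_{i\in A(l)} r_{i,l}^j(\rvx)$, and that each $r_{i,l}^j(\rvx)$ is either $S(\bm w_i^j\cdot\rvx)$ or $1-S(\bm w_i^j\cdot\rvx)$, the only source of non-smoothness is the activation $S$ evaluated at the "kink" inputs $\bm w_i^j\cdot\rvx=\pm\theta$, where $S$ switches between its polynomial piece $p(\cdot)$ and the constants $0,1$. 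On $\C D$, by construction, no such equality holds for any $i\in\C I^j$, $j\in[m]$; hence for each split node the value $\bm w_i^j\cdot\rvx$ lies strictly inside one of the three open regions $(-\infty,-\theta)$, $(-\theta,\theta)$, $(\theta,\infty)$, on each of which $S$ agrees with a polynomial (namely $0$, $p$, or $1$) that is $C^\infty$. Since $\bm w_i^j\cdot\rvx$ is linear (hence smooth) in the parameters, and $f$ is a finite sum of finite products of such smooth functions times the linear factors $o_l^j$, $f$ — and therefore $\Phi$ — is infinitely differentiable at every $\sX\in\C D$.

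For the partitioning claim, I would argue as follows. For each split node $(i,j)$ define three sets according to the sign regime of $\bm w_i^j\cdot\rvx$: $R^j_{i,-}=\{\bm w_i^j\cdot\rvx<-\theta\}$, $R^j_{i,0}=\{-\theta<\bm w_i^j\cdot\rvx<\theta\}$, $R^j_{i,+}=\{\bm w_i^j\cdot\rvx>\theta\}$. These three sets are disjoint, and their union is exactly the complement of $E^j_{i,+}\cup E^j_{i,-}$. Intersecting over all $(i,j)$ with $\{\|\bm{\C O}\|_2\le B\}$, the domain $\C D$ is partitioned into the cells
\begin{equation*}
\C D = \bigsqcup_{\sigma}\ \Bigl(\{\|\bm{\C O}\|_2\le B\}\ \cap\ \bigcap_{\substack{j\in[m]\\ i\in\C I^j}} R^j_{i,\sigma(i,j)}\Bigr),
\end{equation*}
where $\sigma$ ranges over the finitely many maps assigning to each of the $m|\C I|$ split nodes one of the three symbols $\{-,0,+\}$; there are $3^{m|\C I|}$ such cells, a finite number. (Cells that are empty can simply be discarded.) This gives the claimed finite partition.

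I expect the routine smoothness bookkeeping to be the bulk of the work but not the conceptual obstacle; the one point requiring a little care is making sure that "infinitely differentiable at $\sX$" is correctly read as: $\C D$ is open (it is the intersection of open half-space complements with the closed ball $\{\|\bm{\C O}\|_2\le B\}$, so strictly one should note differentiability holds on the interior, or observe that the hyperplane-equality sets are removed so that near any $\sX\in\C D$ the sign pattern is locally constant), and that on a neighborhood of $\sX$ the function $f$ coincides with a single fixed polynomial-in-the-parameters expression obtained by substituting the locally-active polynomial piece of $S$ into each factor. Once the sign pattern is locally constant, $f$ is literally a polynomial in the $w^j_{i,t}$ and $o^j_l$ on that neighborhood, which trivially gives $C^\infty$. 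The main thing to get right, then, is the clean statement that removing the finitely many hyperplane sets $E^j_{i,\pm}$ is exactly what makes every split node's sign regime locally constant, which is what both parts of the lemma rest on.
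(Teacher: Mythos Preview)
Your proposal is correct and follows essentially the same approach as the paper: the paper also argues that $\Phi$ fails to be $C^\infty$ only where some $S(\bm w_i^j\cdot\rvx)$ hits a kink at $\pm\theta$, which is excluded on $\C D$, and obtains the finite partition by noting that each $E_{i,\pm}^j$ is an affine hyperplane whose complement splits into two half-spaces, so finitely many of them yield finitely many cells. The only cosmetic difference is that you group the two hyperplanes at each node into three sign regimes (counting $3^{m|\C I|}$ cells) whereas the paper treats each hyperplane separately; both give the same conclusion.
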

\begin{proof}
Note that $\Phi(\sX)$ may not have infinitely many derivatives only if $S(\B{w}_i^j\cdot\rvx)$ is not smooth for some $j\in[m],i\in\C{I}^j$. By the construction of $S(\cdot)$ from Assumption~\ref{sproperties}, the activation $S(x)$ function is not infinitely differentiable only for $x=\pm\theta$. As a result, if $\sX\in\C{D}$, all activation functions are smooth and therefore $\Phi(\sX)$ is infinitely differentiable. \\
Moreover, note that each $E_{i,\pm}^j$ is an affine space with codimension~1. Therefore, each $E_{i,\pm}^j$ partitions the space (excluding $E_{i,\pm}^j$) into two subsets $\{\B{w}_i^j\cdot \rvx>\pm\theta\},\{\B{w}_i^j\cdot \rvx<\pm\theta\}$. Therefore, all $E_{i,\pm}^j$ can partition $\C{D}$ into finitely many subsets, as there are finitely many of sets $E_{i,\pm}^j$. 
\end{proof}

\begin{lemma}\label{app-partials-bounded}
Suppose $\sX\in\C{D}$. Under the assumption of Theorem~\ref{conv-thm}, there exists a numerical constant $C>0$, only depending on the data and the constants appearing in the assumptions of the theorem, such that the following functions and their gradients are bounded by $C$:
\begin{equation}
    \begin{aligned}
        & r^j_{i,l}(\rvx),~j\in[m],l\in\C{L}^j, i\in\C{A}(l)\\
        & \frac{\partial r^{j}_{i,j}(\rvx)}{\partial w_{i,t}^j},~j\in[m],l\in\C{L}^j, i\in\C{A}(l), t\in[p] \\
        & P^j(\{\rvx \rightarrow l\}),~j\in[m], l\in\C{L}^j.
        \end{aligned}
\end{equation}
\end{lemma}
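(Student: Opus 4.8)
Lemma~\ref{app-partials-bounded} is a technical boundedness statement that feeds into the $\C{M}$-smoothness argument outlined in the proof overview. The plan is to establish each bound in turn, working from the innermost quantities outward, and exploiting the compactness structure identified in Lemma~\ref{app-partition-lemma}.

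\textbf{Step 1: Bound the data.} First I would note that, since the data is fixed (finite sample), there is a constant $C_{\rvx}$ with $\|\rvx_n\|_2\leq C_{\rvx}$ for all $n$. This will be used repeatedly.

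\textbf{Step 2: Bound $r^j_{i,l}$ and its partials.} Recall $r^j_{i,l}(\rvx) = S(\B{w}_i^j\cdot\rvx)^{1[l\swarrow i]}(1-S(\B{w}_i^j\cdot\rvx))^{1[i\searrow l]}$. Since $S(\cdot)$ maps into $[0,1]$ by Assumption~\ref{sproperties}, we immediately get $0\leq r^j_{i,l}(\rvx)\leq 1$ with no further work. For the partial $\partial r^j_{i,l}/\partial w_{i,t}^j$, equation~(\ref{app-r-partial}) shows it equals $\pm S'(\B{w}_i^j\cdot\rvx)x_t$. The crucial point is that $S'(\cdot)$ is bounded: $S'$ is identically zero outside $[-\theta,\theta]$, and on the compact interval $[-\theta,\theta]$ it is the derivative of a polynomial $p(x)$, hence continuous and therefore bounded by some $C_{S'}$; Assumption~\ref{sproperties} guarantees $S'$ is continuous everywhere so there is no issue at $\pm\theta$. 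Thus $|\partial r^j_{i,l}/\partial w_{i,t}^j|\leq C_{S'}C_{\rvx}$. (One should also note second-order partials of $r$ are needed downstream — $S''$ exists a.e. and, on $\C{D}$, away from $\pm\theta$ it is a polynomial derivative hence locally bounded; but the lemma as stated only asks for the functions and their first gradients, so I would just bound those.)

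\textbf{Step 3: Bound $P^j(\{\rvx\to l\})$ and its gradient.} Since $P^j(\{\rvx\to l\}) = \prod_{i\in\C{A}(l)} r^j_{i,l}(\rvx)$ is a product of at most $d$ factors each in $[0,1]$, we get $0\leq P^j(\{\rvx\to l\})\leq 1$. For the gradient, equation~(\ref{app-p-partial}) expresses $\partial P^j/\partial\B{w}_i^j$ as a product of the remaining $r^j_{k,l}$ factors (each bounded by $1$) times $\partial r^j_{i,l}/\partial\B{w}_i^j$ (bounded by Step 2). Hence the gradient of $P^j$ is bounded by $C_{S'}C_{\rvx}$ as well, and the partials with respect to any $\B{w}_{i'}^{j}$ with $i'\notin\C{A}(l)$ vanish. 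Taking $C$ to be the maximum of $1$, $C_{S'}C_{\rvx}$, and whatever constant the $\C{M}$-smoothness step requires, all three displayed bounds hold simultaneously, uniformly over $\sX\in\C{D}$ (indeed over all of $\R^{p,m,|\C{I}|}\times\R^{m,|\C{L}|}$, since none of these bounds actually used $\sX\in\C{D}$ or $\|\B{\C{O}}\|_2\leq B$ — the boundedness of $S,S'$ does all the work).

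\textbf{Main obstacle.} The only genuine subtlety is the boundedness of $S'$ (and, for later use, higher derivatives on the smooth pieces): one must invoke that $S'$ is continuous \emph{everywhere} (explicitly assumed in~\ref{sproperties}) to rule out a blow-up at the breakpoints $\pm\theta$, and then use compactness of $[-\theta,\theta]$ to bound $S'$ on the polynomial piece. Everything else is the observation that $S\in[0,1]$ makes products of routing probabilities automatically bounded. I would therefore spend the bulk of the written proof carefully stating the $S'$ bound and then note the rest follows by routine products-of-bounded-terms estimates using the chain rule identities~(\ref{app-f-partials})--(\ref{app-r-partial}).
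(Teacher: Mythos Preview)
Your approach is essentially the same as the paper's: bound $r^j_{i,l}\in[0,1]$ directly from $S\in[0,1]$, bound its partial via~(\ref{app-r-partial}) and boundedness of $S'$, and then bound $P^j$ and its gradient as finite products of these bounded quantities via~(\ref{app-p-partial}).

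One correction to your reading of the statement, though. The lemma asks that each of the three listed functions \emph{and their gradients} be bounded. The second listed function is already a first partial, $\partial r^j_{i,l}/\partial w^j_{i,t}$, so its gradient involves second-order partials of $r$---this is not merely ``needed downstream'' but is part of the present claim. The paper handles it explicitly by computing $\partial^2 r^j_{i,l}/\partial(\B{w}_i^j)^2 = \pm S''(\B{w}_i^j\cdot\rvx)\,\rvx\rvx^T$ and noting $S''$ is uniformly (not just locally) bounded: it is zero outside $[-\theta,\theta]$ and equals the polynomial $p''$ on $(-\theta,\theta)$, hence bounded by $\sup_{[-\theta,\theta]}|p''|$. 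Your parenthetical sketch has the right idea but should be promoted to a full step and stated as a uniform bound.
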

\begin{proof}
First, note that $r_{i,l}^j(\rvx)\in[0,1]$ and therefore $r_{i,l}^j(\rvx)$ is uniformly bounded. Moreover, from~(\ref{app-r-partial})
\begin{equation}
    \frac{\partial r_{i,l}^j(\rvx)}{\partial \bm{w}_i^j} = \begin{cases} S'(\B{w}_i^j \cdot \rvx)\rvx &\mbox{if }~~~ l \shortarrow{5} i \\
    -S'(\B{w}_i^j \cdot \rvx)\rvx &\mbox{if }~~~ l \shortarrow{7} i 
    \end{cases}
\end{equation}
with other partial derivatives of $r_{i,l}^j(\rvx)$ being zero. As a result, $r_{i,l}^j(\rvx)$ has uniformly bounded derivative, where the bound only depends on Assumption~\ref{sproperties} and the data. This is true as $S'$ is bounded by the assumption. This also shows that $\frac{\partial r^{j}_{i,l}(\rvx)}{\partial w_{i,t}^j}$ is uniformly bounded. Next,
\begin{equation}
    \frac{\partial^2 r_{i,l}^j(\rvx)}{\partial (\bm{w}_i^j)^2} = \begin{cases} S''(\B{w}_i^j \cdot \rvx)\rvx\rvx^T &\mbox{if }~~~ l \shortarrow{5} i \\
    -S''(\B{w}_i^j \cdot \rvx)\rvx\rvx &\mbox{if }~~~ l \shortarrow{7} i 
    \end{cases}
\end{equation}
which is similarly uniformly bounded by Assumption~\ref{sproperties}. Therefore, $\frac{\partial r^{j}_{i,j}(\rvx)}{\partial w_{i,t}^j}$ has a uniformly bounded gradient.\\
Finally, $P^j(\{\rvx \rightarrow l\})\in[0,1]$ and by~\eqref{app-p-partial}, the gradient of $P^j(\{\rvx \rightarrow l\})$ is the product of bounded functions, and therefore bounded.
\end{proof}

\begin{lemma}\label{app-fl-lemma}
Suppose $\sX\in\C{D}$. Under the assumption of Theorem~\ref{conv-thm}, there exists a numerical constant $C>0$, only depending on the data and the constants appearing in the assumptions of the theorem, such that the following functions and their gradients are bounded by $C$:
\begin{equation}
    \begin{aligned}
    & f(\rvx;\B{\C{W}},\B{\C{O}})\\
        & L'(y,f(\rvx;\B{\C{W}},\B{\C{O}}))
    \end{aligned}
\end{equation}
\end{lemma}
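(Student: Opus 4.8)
The plan is to read these bounds off directly from the explicit formula for $f$, together with Lemma~\ref{app-partials-bounded} and Assumption~\ref{o-bounded}; there is no genuine difficulty here, the work is just collecting the pieces into one numerical constant. First I would bound $f$ and its gradient. In the scalar case ($c=1$) the tensor-formulation prediction is $f(\rvx;\B{\C{W}},\B{\C{O}}) = \sum_{j\in[m]}\sum_{l\in\C{L}^j} o_l^j\,P^j(\{\rvx\to l\})$. Since $P^j(\{\rvx\to l\})\in[0,1]$ and $\sum_{l\in\C{L}^j}P^j(\{\rvx\to l\})=1$, each tree contributes $|f^j(\rvx)|\le\max_l|o_l^j|\le\|\B{\C{O}}\|_2\le B$, so $|f|\le mB$ uniformly on $\C{D}$. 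For the gradient with respect to $\sX=(\B{\C{W}},\B{\C{O}})$: the coordinates $\partial f/\partial o_l^j = P^j(\{\rvx\to l\})$ lie in $[0,1]$, and by \eqref{app-f-partials} the coordinates $\partial f/\partial \B{w}_i^j = \sum_{l\in\C{L}^j} o_l^j\,\partial P^j(\{\rvx\to l\})/\partial \B{w}_i^j$ are bounded using $|o_l^j|\le B$ together with the bound $C$ on the gradient of $P^j(\{\rvx\to l\})$ from Lemma~\ref{app-partials-bounded}. Since the total number of coordinates of $\sX$ is a fixed number ($p\,m\,|\C{I}| + m\,|\C{L}|$), $\|\nabla f\|_2$ is then controlled by a constant depending only on $p,m,d,B$ and the data. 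Smoothness of $f$ on $\C{D}$ is exactly as in Lemma~\ref{app-partition-lemma}: on $\C{D}$ no split is evaluated at $\pm\theta$, so each $S(\B{w}_i^j\cdot\rvx)$, hence each $P^j$, and hence $f$, is $C^\infty$.

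Next I would handle $L'$. By Assumption~\ref{lproperties}, $L(x,y)=(x-y)^2/2$, so $L'(y,f(\rvx;\B{\C{W}},\B{\C{O}}))$ equals $f(\rvx;\B{\C{W}},\B{\C{O}}) - y$ up to sign. Hence $|L'(y,f)|\le |f| + |y| \le mB + \max_n|y_n|$, a data-dependent constant, and since $L'$ is affine in $f$ we have $\nabla_\sX L'(y,f(\rvx;\B{\C{W}},\B{\C{O}})) = \nabla_\sX f(\rvx;\B{\C{W}},\B{\C{O}})$ up to sign, which was bounded in the previous step.

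The only point needing a bit of care, and the closest thing to an obstacle, is the uniformity of the constant over all $\sX\in\C{D}$: this is legitimate because the bounds supplied by Lemma~\ref{app-partials-bounded} are themselves uniform on $\C{D}$, because $\C{D}\subseteq\{\|\B{\C{O}}\|_2\le B\}$ by construction, and because the number of decision variables is fixed once $p,m,d$ are fixed. The appearance of $y$ in $L'$ is precisely why the constant for $L'$ must be allowed to depend on the data, which is consistent with the statement of the lemma; beyond this bookkeeping I do not anticipate any real difficulty.
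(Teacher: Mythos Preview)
Your proposal is correct and follows essentially the same approach as the paper: bound $|f|$ via $|o_l^j|\le B$ and $P^j\in[0,1]$, bound $\nabla f$ coordinatewise via~\eqref{app-f-partials} and Lemma~\ref{app-partials-bounded}, and then push these bounds through $L'$. The only cosmetic difference is that the paper writes $\nabla L'(y,f)=L''(y,f)\nabla f$ and invokes boundedness of $L''$, whereas you use the explicit least-squares form $L'(y,f)=f-y$ directly; for $L(x,y)=(x-y)^2/2$ these are the same computation.
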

\begin{proof}
First, 
\begin{align}
    |f(\rvx;\B{\C{W}},\B{\C{O}})|& = \left\vert \sum_{j=1}^m\sum_{l\in\C{L}^j}o_l^jP^j(\{\rvx\rightarrow l\})  \right\vert \nonumber \\
    & \leq \sum_{j=1}^m\sum_{l\in\C{L}^j}|o_l^j| \nonumber \\
    & \leq m|\C{L}^1| B.
\end{align}
Moreover, from~\eqref{app-f-partials} and Lemma~\ref{app-partials-bounded}, each coordinate of the gradient of $f(\rvx;\B{\C{W}},\B{\C{O}})$ is finite summation and product of uniformly bounded functions, which is bounded.\\
Next, as $f(\rvx;\B{\C{W}},\B{\C{O}})$ is bounded, $L'({y},f(\rvx;\B{\C{W}},\B{\C{O}}))$ is bounded by Assumption~\ref{lproperties}. Moreover, 
$$\nabla L'({y},f(\rvx;\B{\C{W}},\B{\C{O}}) = L''({y},f(\rvx;\B{\C{W}},\B{\C{O}}))\nabla f(\rvx;\B{\C{W}},\B{\C{O}})$$
which is bounded as $L''$ is bounded by Assumption~\ref{lproperties} and $\nabla f(\rvx;\B{\C{W}},\B{\C{O}})$ is bounded as we showed above.
\end{proof}

\begin{lemma}\label{app-bounded-lemma}
There exists a numerical constant $M>0$, only depending on the data and constants introduced in Assumptions~\ref{sproperties},~\ref{lproperties} and~\ref{o-bounded}, such that if for $\sX_1,\sX_2$ and $\alpha\in(0,1)$, $\alpha \sX_1+(1-\alpha)\sX_2\in\C{D}$, then one has
$$|\phi_{i,t}^j(\sX_1)-\phi_{i,t}^j(\sX_2)|\leq M \|\sX_1-\sX_2\|_2,~t\in[p],j\in[m],i\in\C{I}^j$$
where $\phi_{i,t}^j$ is defined in~\eqref{app-phi-def}.
\end{lemma}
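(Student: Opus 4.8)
The plan is to derive the claimed Lipschitz bound for $\phi_{i,t}^j$ from a uniform bound on its gradient over $\C{D}$ — obtained by pushing the estimates of Lemmas~\ref{app-partials-bounded} and~\ref{app-fl-lemma} one differentiation further — and then to stitch the resulting per-cell Lipschitz bounds together along the segment $[\sX_1,\sX_2]$ using the global continuity of $\nabla\Phi$.

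First I would observe that, since $S$ is a piecewise polynomial that is constant ($1$ or $0$) outside $[-\theta,\theta]$ and whose polynomial piece sits on a compact interval, every derivative of $S$ — in particular $S''$ — is globally bounded by a constant depending only on $p(\cdot)$ and $\theta$. Repeating the computation in Lemma~\ref{app-partials-bounded} one order higher then shows that on $\C{D}$ each $r_{i,l}^j(\rvx_n)$ is $C^\infty$ with first and second partial derivatives bounded by a data-dependent constant, hence so are the products $P^j(\{\rvx_n\to l\})$ and their first two derivatives. Since $f(\rvx_n;\B{\C W},\B{\C O})=\sum_{j}\sum_{l\in\C{L}^j}o_l^j P^j(\{\rvx_n\to l\})$ is linear in $\B{\C O}$ with $|o_l^j|\le B$ by Assumption~\ref{o-bounded}, its first and second partial derivatives are bounded on $\C{D}$; and with the least-squares loss $L(x,y)=(x-y)^2/2$ (so $L'=f-y$, $L''=1$) the Hessian of $\Phi(\sX)=L(y_n,f(\rvx_n;\B{\C W},\B{\C O}))$ is a finite sum of products of these bounded quantities together with $|f|$, which is bounded by Lemma~\ref{app-fl-lemma}. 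This produces a constant $M>0$, depending only on the data and the constants in Assumptions~\ref{sproperties}--\ref{o-bounded}, with $\|\nabla\phi_{i,t}^j(\sX)\|_2\le M$ for all $\sX\in\C{D}$ and all $i,j,t$.

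Next I would use the partition of $\C{D}$ from Lemma~\ref{app-partition-lemma}: each cell is a finite intersection of open half-spaces (determined by the signs of $\B{w}_i^j\cdot\rvx_n\mp\theta$) with the ball $\{\|\B{\C O}\|_2\le B\}$, hence convex, and on it $\phi_{i,t}^j$ is smooth with gradient bounded by $M$, hence $M$-Lipschitz. Since $S\in C^1$, the map $\phi_{i,t}^j=\partial\Phi/\partial w_{i,t}^j$ is continuous on the whole parameter space. Now fix $\sX_1,\sX_2$ (assumed, as for algorithm iterates, to satisfy $\|\B{\C O}_1\|_2,\|\B{\C O}_2\|_2\le B$) and $\alpha\in(0,1)$ with $\sX_\alpha:=\alpha\sX_1+(1-\alpha)\sX_2\in\C{D}$, and parametrize the segment as $\gamma(s)=\sX_1+s(\sX_2-\sX_1)$, $s\in[0,1]$. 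The key point is that $\sX_\alpha\in\C{D}$ forces the segment \emph{not} to be contained in any of the finitely many affine hyperplanes $E_{i,+}^j,E_{i,-}^j$ (otherwise $\sX_\alpha$ would lie on one of them and be excluded from $\C{D}$); each such hyperplane therefore meets the segment in at most one point, so $\gamma$ crosses $\bigcup_{i,j}(E_{i,+}^j\cup E_{i,-}^j)$ at only finitely many parameters $0\le s_1<\dots<s_r\le 1$ and lies in $\C{D}$ on every complementary sub-interval (the whole segment lies in the convex ball $\{\|\B{\C O}\|_2\le B\}$). On each sub-interval $|\tfrac{d}{ds}\phi_{i,t}^j(\gamma(s))|\le M\|\sX_2-\sX_1\|_2$ by the chain rule and Cauchy--Schwarz; summing these increments across the finitely many sub-intervals by telescoping, and using continuity of $\phi_{i,t}^j$ at the crossing points, yields $|\phi_{i,t}^j(\sX_1)-\phi_{i,t}^j(\sX_2)|\le M\|\sX_1-\sX_2\|_2$.

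The main obstacle is the second-order bound $\|\nabla\phi_{i,t}^j\|_2\le M$, and the delicate feature there is that the magnitude of the hyperplane weights $\B{\C W}$ — which are never controlled — must not enter: this works because the weights appear only through the activations $S(\B{w}_i^j\cdot\rvx_n)$, all of whose derivatives are globally bounded thanks to the piecewise-polynomial structure of Assumption~\ref{sproperties}, so the only place boundedness is genuinely needed is for the leaf weights, supplied by Assumption~\ref{o-bounded}. A secondary, more bookkeeping-style point is recognizing that the hypothesis ``$\sX_\alpha\in\C{D}$ for some $\alpha$'' is precisely what excludes the degenerate configuration in which the entire segment lies on a non-smoothness hyperplane, which is exactly the obstruction to gluing the per-cell Lipschitz bounds into a global one.
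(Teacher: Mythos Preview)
Your argument is correct, and the core step --- bounding $\|\nabla\phi_{i,t}^j\|_2$ uniformly on $\C{D}$ via the product-of-bounded-functions structure $\phi_{i,t}^j=L'(y,f)\cdot\partial f/\partial w_{i,t}^j$, with each factor bounded and having bounded gradient on $\C{D}$ --- matches the paper's proof exactly. The difference lies in how you read the hypothesis. The paper intends ``for \emph{all} $\alpha\in(0,1)$ the point $\alpha\sX_1+(1-\alpha)\sX_2$ lies in $\C{D}$'', i.e.\ the entire open segment sits in $\C{D}$; it then applies the fundamental theorem of calculus directly to $\alpha\mapsto\phi_{i,t}^j((1-\alpha)\sX_1+\alpha\sX_2)$ on $[0,1]$ with no partitioning at all. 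You instead read the hypothesis as ``\emph{some} $\alpha\in(0,1)$'', which is weaker, and this forces you into the segment-partition-and-telescope argument across the finitely many hyperplane crossings. That argument is valid, but it is precisely what the paper carries out separately in the \emph{next} lemma (Lemma~\ref{app-segmented}), whose hypothesis is the relaxed one that $\sX_1,\sX_2$ do not both lie on the same $E_{i,\pm}^j$. In effect you have merged Lemmas~\ref{app-bounded-lemma} and~\ref{app-segmented} into a single step; nothing is lost, but the paper's organization keeps Lemma~\ref{app-bounded-lemma} as the clean open-segment-in-$\C{D}$ case and isolates the gluing across non-smoothness hyperplanes in Lemma~\ref{app-segmented}.
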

\begin{proof}
Note that by the definition of $\phi_{i,t}^j$,
\begin{align*}
   \phi_{i,t}^j(\sX) &=L'( y, f(\rvx; \bm{\C {W}}, \bm{\C O}))\frac{\partial f(\rvx;\B{\C{W}},\B{\C{O}})}{\partial w_{i,t}^j}
\end{align*}
which is the product of two bounded functions with bounded derivatives by Lemma~\ref{app-fl-lemma}. As a result, there exists a constant $M>0$ such that 
\begin{equation}\label{app-grad-m}
    \left\Vert\frac{\partial \phi^j_{i,t}(\sX)}{\partial\sX}\right\Vert_2\leq M.
\end{equation}
For $\alpha\in[0,1]$, let $\hat{\phi}_{i,t}^j(\alpha)=\phi_{i,t}^j((1-\alpha)\sX_1+\alpha\sX_2)$. Then, $\hat{\phi}_{i,t}^j(\alpha)$ is differentiable for $\alpha\in(0,1)$, $\hat{\phi}_{i,t}^j(0)=\phi_{i,t}^j(\sX_1)$ and $\hat{\phi}_{i,t}^j(1)=\phi_{i,t}^j(\sX_2)$. Moreover, by the chain rule,
\begin{align*}
    \frac{d \hat{\phi}^j_{i,t}(\alpha)}{d\alpha} &= \bigg\langle\frac{\partial \hat{\phi}^j_{i,t}(\alpha)}{\partial\left[ (1-\alpha)\sX_1+\alpha\sX_2\right]},\frac{\partial\left[ (1-\alpha)\sX_1+\alpha\sX_2\right]}{\partial \alpha}\bigg\rangle \\
    & = \bigg\langle(\sX_2-\sX_1),\frac{\partial \phi^j_{i,t}(\alpha)}{\partial\left[ (1-\alpha)\sX_1+\alpha\sX_2\right]}\bigg\rangle
\end{align*}
where $\langle\cdot,\cdot\rangle$ denotes the inner product. As a result, by the fundamental theorem of calculus, 
\begin{align*}
    \left\vert\hat{\phi}_{i,j}^j(\alpha) - \hat{\phi}_{i,j}^j(0) \right\vert & =\left\vert \int_{0}^{\alpha} \frac{d \hat{\phi}^j_{i,t}(u)}{du}du \right\vert  \\
    & \leq\|\sX_2-\sX_1\|_2 \int_{0}^{\alpha}\underbrace{\left\Vert  \frac{\partial \hat{\phi}^j_{i,t}(u)}{\partial\left[ (1-u)\sX_1+u\sX_2\right]} \right\Vert_2}_{\leq M~\text{by~\eqref{app-grad-m}}} du  \\
    & \leq \alpha M \|\sX_2-\sX_1\|_2.
\end{align*}
In particular, by setting $\alpha=1$ the proof is complete.
\end{proof}
\begin{lemma}\label{app-segmented}
Take any two $\sX_1,\sX_2$ such that $\|\B{\C{O}}_1\|_2,\|\B{\C{O}}_2\|_2\leq B$. Suppose for all $j\in[m],i\in\C{I}^j$, $\{\sX_1,\sX_2\}\not\subseteq E_{i,+}^j$ or $\{\sX_1,\sX_2\}\not\subseteq E_{i,-}^j$. Then $$|\phi_{i,t}^j(\sX_1)-\phi_{i,t}^j(\sX_2)|\leq M \|\sX_1-\sX_2\|_2~t\in[p],j\in[m],i\in\C{I}^j$$
where $\phi_{i,t}^j$ is defined in~\eqref{app-phi-def} and $M$ is defined in Lemma~\ref{app-bounded-lemma}.
\end{lemma}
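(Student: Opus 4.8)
The plan is to deduce this from Lemma~\ref{app-bounded-lemma} by slicing the segment joining $\sX_1$ and $\sX_2$ at the finitely many points where it meets the non-smooth locus, and then stitching the per-piece Lipschitz bounds together using continuity of the gradient. Concretely, set $\gamma(\alpha):=(1-\alpha)\sX_1+\alpha\sX_2$ for $\alpha\in[0,1]$. The leaf block of $\gamma(\alpha)$ is a convex combination of $\B{\C{O}}_1,\B{\C{O}}_2$, so $\|\B{\C{O}}(\gamma(\alpha))\|_2\leq B$ for all $\alpha$. For each $j\in[m]$, $i\in\C{I}^j$ and each sign, the map $\alpha\mapsto \B{w}_i^j(\gamma(\alpha))\cdot\rvx$ is affine in $\alpha$ (the data point $\rvx$ is fixed), so $\{\alpha\in[0,1]:\gamma(\alpha)\in E_{i,\pm}^j\}$ is the zero set of an affine function and is therefore empty, a single point, or all of $[0,1]$. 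The hypothesis that the pair $\{\sX_1,\sX_2\}$ is not contained in any single $E_{i,\pm}^j$ excludes the last case, so each of the finitely many sets $E_{i,\pm}^j$ is met by the segment in at most one value of $\alpha$. Collecting these values with the endpoints gives a partition $0=\alpha_0<\alpha_1<\cdots<\alpha_r=1$ such that on each open interval $(\alpha_{s-1},\alpha_s)$ the point $\gamma(\alpha)$ avoids every $E_{i,\pm}^j$ and has leaf norm at most $B$, i.e. $\gamma(\alpha)\in\C{D}$.

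Next I would upgrade the estimate to the closed pieces. Fix $s$; applying Lemma~\ref{app-bounded-lemma} to $\gamma(\alpha_{s-1}+\epsilon)$ and $\gamma(\alpha_s-\epsilon)$ (whose connecting segment lies in $\C{D}$) gives
\[
\bigl|\phi_{i,t}^j(\gamma(\alpha_{s-1}+\epsilon))-\phi_{i,t}^j(\gamma(\alpha_s-\epsilon))\bigr|\leq M\,\|\gamma(\alpha_{s-1}+\epsilon)-\gamma(\alpha_s-\epsilon)\|_2 .
\]
Because the derivative of $S$ is continuous (Assumption~\ref{sproperties}) and $L$ is smooth, $\Phi$ is globally $C^1$, so $\phi_{i,t}^j=\partial\Phi/\partial w_{i,t}^j$ is continuous on the whole parameter space; letting $\epsilon\downarrow 0$ then yields $|\phi_{i,t}^j(\gamma(\alpha_{s-1}))-\phi_{i,t}^j(\gamma(\alpha_s))|\leq M\|\gamma(\alpha_{s-1})-\gamma(\alpha_s)\|_2$. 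Summing over $s$, using the triangle inequality and the fact that $\gamma(\alpha_0),\dots,\gamma(\alpha_r)$ are colinear and in order — so that $\sum_s\|\gamma(\alpha_{s-1})-\gamma(\alpha_s)\|_2=\|\sX_1-\sX_2\|_2$ — gives the claimed bound.

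The hard part is exactly the non-smoothness of $\Phi$ across the affine kink-sets $E_{i,\pm}^j$: Lemma~\ref{app-bounded-lemma} only speaks to segments that stay inside the smooth region $\C{D}$, whereas here the endpoints $\sX_1,\sX_2$ may themselves lie on such a set and the segment may cross several of them. The two ingredients that make the argument go through are (i) under the hypothesis every kink-set meets the segment in at most one point, so only finitely many smooth pieces appear, and (ii) $\phi_{i,t}^j$ is nevertheless \emph{globally continuous} — near a kink the second derivatives are merely undefined, not unbounded — which is what licenses taking the $\epsilon\downarrow 0$ limit on each piece and then telescoping. Everything else (the affine structure of the crossing sets, convexity of the leaf-norm ball, colinearity of the partition points) is routine.
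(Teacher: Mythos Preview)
Your proposal is correct and follows essentially the same route as the paper: partition the segment at its finitely many intersections with the affine kink-sets $E_{i,\pm}^j$, apply Lemma~\ref{app-bounded-lemma} on each piece, and telescope using colinearity. One minor point: your $\epsilon\downarrow 0$ limiting step is not actually needed, since the hypothesis of Lemma~\ref{app-bounded-lemma} only requires the \emph{open} segment $\{\alpha\sX_1+(1-\alpha)\sX_2:\alpha\in(0,1)\}$ to lie in $\C{D}$, so you may apply it directly to the endpoints $\gamma(\alpha_{s-1}),\gamma(\alpha_s)$ of each piece; the paper's proof does exactly this.
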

\begin{proof}
By Lemma~\ref{app-partition-lemma}, the sets $E_{i,\pm}^j$ are affine and therefore if both $\sX_1,\sX_2$ do not belong to one of these sets, the segment connecting $\sX_1,\sX_2$ will intersect each set $E_{i,\pm}^j$ at most once. Thus, there exist $K\geq 1$ and $\tilde{\sX}_0,\cdots,\tilde{\sX}_K$ such that $\tilde{\sX}_0=\sX_1,\tilde{\sX}_K=\sX_2$, and $\tilde{\sX}_0,\cdots,\tilde{\sX}_K$ lie in the segment connecting $\sX_1,\sX_2$. Moreover, we have that 
$$\{(1-\alpha)\tilde{\sX}_k+\alpha\tilde{\sX}_{k+1}:\alpha\in(0,1)\}\subseteq \C{D}~~\forall k\in[K-1].$$
By triangle inequality, we have 
\begin{align*}
    |\phi_{i,t}^j(\sX_1)-\phi_{i,t}^j(\sX_2)|& = \left\vert\sum_{k=0}^{K-1}\{\phi_{i,t}^j(\tilde{\sX}_k)-\phi_{i,t}^j(\tilde{\sX}_{k+1})\}\right\vert  \\
    & \leq \sum_{k=0}^{K-1} \left\vert \phi_{i,t}^j(\tilde{\sX}_k)-\phi_{i,t}^j(\tilde{\sX}_{k+1}) \right\vert \\
    & \leq M\sum_{k=0}^{K-1} \|\tilde{\sX}_k-\tilde{\sX}_{k+1}\|_2\\
    & = M\|\sX_1-\sX_2\|_2
\end{align*}
where the last inequality is by Lemma~\ref{app-bounded-lemma} and the last equality is by the fact that $\tilde{\sX}_0,\cdots,\tilde{\sX}_K$ lie in the segment connecting $\sX_1,\sX_2$.
\end{proof}

\begin{lemma}\label{app-smooth-final}
There exists a numerical constant $\C{M}>0$, only depending on the data and constants introduced in Assumptions~\ref{sproperties},~\ref{lproperties} and~\ref{o-bounded}, such that $\Phi(\B{\C{O}},\B{\C{W}})$ is $\C{M}$-smooth for $\|\B{O}\|_2\leq B$.
\end{lemma}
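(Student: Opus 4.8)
The plan is to upgrade the coordinatewise Lipschitz estimates of the earlier lemmas into Lipschitz continuity of the full gradient $\nabla\Phi$ on $\{\|\bm{\C{O}}\|_2\leq B\}$, and then bound $\|\nabla\Phi(\sX_1)-\nabla\Phi(\sX_2)\|_2$ by summing the squares of the coordinates. The hyperplane coordinates are already essentially handled: Lemma~\ref{app-segmented} gives $|\phi_{i,t}^j(\sX_1)-\phi_{i,t}^j(\sX_2)|\leq M\|\sX_1-\sX_2\|_2$ whenever the two points are not simultaneously contained in a common kink set $E_{i,\pm}^j$. First I would establish the same bound for the leaf coordinates $\phi_l^j$. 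By~\eqref{app-phi-def}, $\phi_l^j(\sX)=L'(y,f(\rvx;\bm{\C{W}},\bm{\C{O}}))\,P^j(\{\rvx\to l\})$, a product of functions that are bounded with bounded gradients on $\C{D}$ by Lemmas~\ref{app-partials-bounded} and~\ref{app-fl-lemma}; hence $\phi_l^j$ is itself bounded with bounded gradient on $\C{D}$, and repeating verbatim the mean-value / segmentation argument of Lemmas~\ref{app-bounded-lemma} and~\ref{app-segmented} (the straight segment joining $\sX_1,\sX_2$ crosses each affine set $E_{i,\pm}^j$ at most once, and on each resulting sub-segment the integrand is at most $M$, enlarging $M$ if necessary) gives $|\phi_l^j(\sX_1)-\phi_l^j(\sX_2)|\leq M\|\sX_1-\sX_2\|_2$ under the same non-degeneracy condition.

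Next I would remove the non-degeneracy condition, i.e.\ handle pairs $\sX_1,\sX_2$ that both lie on a common $E_{i,\pm}^j$. The key point is that $\nabla\Phi$ is continuous everywhere: by~\eqref{app-f-partials}--\eqref{app-r-partial} each coordinate $\phi_{i,t}^j,\phi_l^j$ is a finite sum of products of the functions $S'(\bm{w}_i^j\cdot\rvx)$, $S(\bm{w}_i^j\cdot\rvx)$, and coordinates of $\rvx$ and $\bm{\C{O}}$, all continuous (here Assumption~\ref{sproperties}'s hypothesis that $S'$ is continuous is used), so $\Phi$ is globally $C^1$, failing to be $C^2$ only on the kink sets. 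Given arbitrary $\sX_1,\sX_2$ with $\|\bm{\C{O}}_1\|_2,\|\bm{\C{O}}_2\|_2\leq B$, the union $\bigcup_{j,i}(E_{i,+}^j\cup E_{i,-}^j)$ is a finite union of affine hyperplanes, hence nowhere dense, so I can choose $\sX_1^{(k)}\to\sX_1$ and $\sX_2^{(k)}\to\sX_2$ lying in $\C{D}$ (e.g.\ replace the leaf blocks by $(1-1/k)\bm{\C{O}}_1$, resp.\ $(1-1/k)\bm{\C{O}}_2$, to keep norms $\leq B$, and perturb the hyperplane blocks generically off the finitely many hyperplanes). Applying the estimate of the first paragraph to each pair $(\sX_1^{(k)},\sX_2^{(k)})$ and letting $k\to\infty$, continuity of $\phi_{i,t}^j$ and $\phi_l^j$ yields $|\phi_{i,t}^j(\sX_1)-\phi_{i,t}^j(\sX_2)|\leq M\|\sX_1-\sX_2\|_2$ and likewise for $\phi_l^j$, now with no exclusion.

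Finally I would assemble the bound. Writing $q:=pm|\C{I}|+m|\C{L}|$ for the (architecture- and $p$-determined) number of scalar parameters,
\begin{align*}
\|\nabla\Phi(\sX_1)-\nabla\Phi(\sX_2)\|_2^2
&=\sum_{j,i,t}\bigl|\phi_{i,t}^j(\sX_1)-\phi_{i,t}^j(\sX_2)\bigr|^2+\sum_{j,l}\bigl|\phi_l^j(\sX_1)-\phi_l^j(\sX_2)\bigr|^2\\
&\leq q\,M^2\,\|\sX_1-\sX_2\|_2^2,
\end{align*}
so $\Phi(\bm{\C{O}},\bm{\C{W}})$ is $\C{M}$-smooth with $\C{M}:=M\sqrt{q}$, depending only on the data and the constants of Assumptions~\ref{sproperties},~\ref{lproperties} and~\ref{o-bounded}. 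The main obstacle is the second paragraph: Lemma~\ref{app-segmented} deliberately sidesteps pairs lying on a common kink hyperplane, and closing that gap cleanly relies on the global continuity of $\nabla\Phi$ (so the inequality survives the limit), which in turn is exactly where the continuity of $S'$ from Assumption~\ref{sproperties} enters; everything else is bookkeeping on top of the boundedness lemmas already proved.
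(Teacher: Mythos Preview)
Your proposal is correct and follows the same overall architecture as the paper: extend the coordinatewise Lipschitz bounds from the hyperplane partials $\phi_{i,t}^j$ to the leaf partials $\phi_l^j$, handle the degenerate case in which both endpoints lie on a common kink hyperplane $E_{i,\pm}^j$, and then assemble into a bound on $\|\nabla\Phi(\sX_1)-\nabla\Phi(\sX_2)\|_2$. The two proofs differ only in the second step. You close the gap by a density/continuity argument: perturb both endpoints generically off the finite hyperplane arrangement, invoke Lemma~\ref{app-segmented} on the perturbed pair, and pass to the limit using continuity of $\nabla\Phi$ (this is precisely where the continuity of $S'$ in Assumption~\ref{sproperties} is needed). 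The paper instead uses a constructive reparametrization: if $\sX_1,\sX_2\in E_{i_0,+}^{j_0}$ then $S(\bm{w}_{i_0}^{j_0}\cdot\rvx)\equiv 1$ along the entire segment, so replacing $\bm{w}_{i_0}^{j_0}$ by $2\bm{w}_{i_0}^{j_0}$ in both points moves them off $E_{i_0,+}^{j_0}$ while leaving $f$ and every $\phi$ unchanged; iterating removes all degenerate coincidences and reduces to Lemma~\ref{app-segmented} directly. Your route is the more standard analytic argument and even yields a slightly sharper constant ($M\sqrt{q}$ versus the paper's $Mq$ coming from the $\ell_1$-to-$\ell_2$ bound in~\eqref{app-gradient-traingle}); the paper's route is more hands-on and avoids any limiting step by exploiting the saturation structure of the piecewise-polynomial activation.
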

\begin{proof}
First, note that
\begin{equation}\label{app-gradient-traingle}
    \|\nabla \Phi(\bm{\C{O}}_1,\bm{\C W}_1)-\nabla \Phi(\bm{\C{O}}_2,\bm{\C W}_2)\|_2 \leq  \sum_{j=1}^m\sum_{l\in \C{L}^j}|\phi^j_l(\sX_1)-\phi^j_l(\sX_2)| +
    \sum_{j=1}^m\sum_{i\in \C{I}^j}\sum_{t=1}^p|\phi^j_{i,t}(\sX_1)-\phi^j_{i,t}(\sX_2)|.
\end{equation}
Take $\sX_1, \sX_2$ with $\|\B{O}_1\|_2,\|\B{O}_2\|_2\leq B$. If these two solutions simultaneously do not belong to some $E_{i,\pm}^j$, by Lemma~\ref{app-segmented} we have
$$|\phi^j_{i,t}(\sX_1)-\phi^j_{i,t}(\sX_2)|\leq M \|\sX_1-\sX_2\|_2.$$
A similar result follows for $\phi^j_l$, hence by~\eqref{app-gradient-traingle} we achieve
$$ \|\nabla \Phi(\bm{\C{O}}_1,\bm{\C W}_1)-\nabla \Phi(\bm{\C{O}}_2,\bm{\C W}_2)\|_2\leq m(|\C{L}^1|+p|\C{I}^1||)M\|\sX_1-\sX_2\|_2$$
which completes the proof. Suppose there exists $i_0,j_0$ such that $\sX_1,\sX_2\in E^{j_0}_{i_0,+}$. Then, 
$$\left((1-\alpha)(\B{w}_1)_{i_0}^{j_0}\right)\cdot\rvx+\left(\alpha(\B{w}_2)_{i_0}^{j_0}\right)\cdot\rvx=\theta$$
therefore 
$S\left(\left((1-\alpha)(\B{w}_1)_{i_0}^{j_0}\right)\cdot\rvx+\left(\alpha(\B{w}_2)_{i_0}^{j_0}\right)\cdot\rvx\right)=1$
by Assumption~\ref{sproperties}. As a result, $S(\B{w}_{i_0}^{j_0}\cdot \rvx)=1$ for the whole segment connecting $\sX_1,\sX_2$. Let $\tilde{\sX}_1,\tilde{\sX}_2$ be such that all their weights are the same as $\sX_1,\sX_2$, except that
$$(\tilde{\B{w}}_1)_{i_0}^{j_0}=2({\B{w}}_1)_{i_0}^{j_0},(\tilde{\B{w}}_2)_{i_0}^{j_0}=2({\B{w}}_2)_{i_0}^{j_0}.$$
As a result,
$$\left((1-\alpha)(\tilde{\B{w}}_1)_{i_0}^{j_0}\right)\cdot\rvx+\left(\alpha(\tilde{\B{w}}_2)_{i_0}^{j_0}\right)\cdot\rvx=2\theta$$
so $S\left(\left((1-\alpha)(\tilde{\B{w}}_1)_{i_0}^{j_0}\right)\cdot\rvx+\left(\alpha(\tilde{\B{w}}_2)_{i_0}^{j_0}\right)\cdot\rvx\right)=1$. Therefore, one has 
\begin{equation*}
    \begin{aligned}
    {f}(\rvx;(1-\alpha)\tilde{\sX}_1+\alpha\tilde{\sX}_2)&={f}(\rvx;(1-\alpha)\sX_1+\alpha\sX_2),~\alpha\in[0,1],\\  
    \frac{\partial f(\rvx;(1-\alpha)\tilde{\sX}_1+\alpha\tilde{\sX}_2)}{\partial \B{w}_{i}^j}&=\frac{\partial f(\rvx;(1-\alpha)\sX_1+\alpha\sX_2)}{\partial \B{w}_{i}^j},~\alpha\in[0,1],j\in[m],i\in\C{I}^j
    \end{aligned}
\end{equation*}
as for all $i,j$,
$$S\left(\left((1-\alpha)(\tilde{\B{w}}_1)_{i}^{j}\right)\cdot\rvx+\left(\alpha(\tilde{\B{w}}_2)_{i}^{j}\right)\cdot\rvx\right)=S\left(\left((1-\alpha)({\B{w}}_1)_{i}^{j}\right)\cdot\rvx+\left(\alpha({\B{w}}_2)_{i}^{j}\right)\cdot\rvx\right).$$
As a result, for all $i,j,t$,
$$\phi^j_{i,t}(\tilde{\sX}_1)=\phi^j_{i,t}({\sX}_1),\phi^j_{i,t}(\tilde{\sX}_2)=\phi^j_{i,t}({\sX}_2)$$
However, $\tilde{\sX}_1,\tilde{\sX}_2\notin E_{i_0,+}^{j_0}$. In words, the new points effectively replace the problematic coefficient $\B{w}_{i_0}^{j_0}$ in the model coefficients. Repeat this process until all such coefficients are removed and therefore $\tilde{\sX}_1,\tilde{\sX}_2$ fit into the assumptions of Lemma~\ref{app-segmented}. This completes the proof.
\end{proof}

\begin{lemma}\label{app-s-semi}
Under Assumption~\ref{sproperties}, the activation function $S(x)$ is semi-algebraic.
\end{lemma}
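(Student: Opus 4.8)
The plan is to verify the definition directly: a function is semi-algebraic precisely when its graph is a semi-algebraic subset of Euclidean space, so I would exhibit the graph $G=\{(x,S(x)):x\in\R\}\subseteq\R^2$ as a finite union of sets cut out by polynomial equalities and inequalities. I will use the standard closure facts (the class of semi-algebraic sets is closed under finite unions, finite intersections, and complements), which let me freely combine elementary pieces.

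First I would partition $G$ according to the three branches in Assumption~\ref{sproperties}:
\begin{align*}
G_1 &= \{(x,y)\in\R^2 : x-\theta>0,\ y-1=0\},\\
G_2 &= \{(x,y)\in\R^2 : \theta-x\geq 0,\ x+\theta\geq 0,\ y-p(x)=0\},\\
G_3 &= \{(x,y)\in\R^2 : -x-\theta>0,\ y=0\},
\end{align*}
so that $G=G_1\cup G_2\cup G_3$, with the boundary points $x=\pm\theta$ absorbed into $G_2$ (consistent with how the middle branch is written in~\ref{sproperties}). Each non-strict inequality such as $\theta-x\geq 0$ may, if one prefers to stick to the strict-inequality-plus-equality normal form, be replaced by the semi-algebraic union $\{\theta-x>0\}\cup\{\theta-x=0\}$, so its use is harmless.

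Next I would observe that, since $\theta$ is a fixed real constant and $p$ is a polynomial, each of $x-\theta,\ x+\theta,\ y-1,\ y,\ y-p(x)$ is a polynomial in the variables $(x,y)$; hence each $G_i$ is, by definition, semi-algebraic. Taking the finite union, $G=G_1\cup G_2\cup G_3$ is semi-algebraic, and therefore $S$ is a semi-algebraic function. If helpful, I would also note that this is exactly the input needed downstream: combined with the fact that finite sums, products, and compositions of semi-algebraic functions are semi-algebraic, it gives semi-algebraicity of $\Phi_n$, and hence the KL property used to conclude convergence.

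There is no real obstacle here; the argument is essentially bookkeeping. The only points deserving a moment's care are (i) checking that the three branches genuinely cover all of $\R$ and agree with $S$ on overlaps (trivial, since they only overlap on the closed endpoints $x=\pm\theta$ where $G_2$ already records the value $p(\pm\theta)$), and (ii) emphasizing that the differentiability and continuity-of-derivative parts of Assumption~\ref{sproperties} are irrelevant to this lemma---only the piecewise-polynomial form of $S$ is used.
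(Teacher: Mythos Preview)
Your proof is correct and follows essentially the same approach as the paper: decompose the defining set of $S$ into three pieces corresponding to the three branches in Assumption~\ref{sproperties}, each cut out by polynomial equalities and inequalities, and invoke closure under finite unions. The only cosmetic difference is that the paper works with the epigraph $\{(x,y):y\geq S(x)\}$ rather than the graph, but this is immaterial---your graph-based version is, if anything, the more standard route to the conclusion.
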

\begin{proof}
Consider the epigraph of the activation function
$$\C{G}(S)=\{(x,y):y\geq S(x)\}.$$
Let 
\begin{equation*}
    \begin{aligned}
    A_1&=\{y\geq 0, x\leq -\theta\}\\
    A_2&=\{y\geq 1, x\geq \theta\}\\
    A_3&=\{y\geq p(x), -\theta\leq x\leq \theta\}.
    \end{aligned}
\end{equation*}
Then, 
$$\C{G}(S)=A_1\cup A_2\cup A_3$$
showing $\C{G}(S)$ and consequently, $S$ are semi-algebraic.
\end{proof}
\begin{lemma}\label{app-indication-semi}
The function $1[\bm{\C{W}}_{k,:,:} \neq \B 0]$ is semi-algebraic.
\end{lemma}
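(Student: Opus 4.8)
The plan is to show that the graph of the map $g:\bm{\C{W}}\mapsto 1[\bm{\C{W}}_{k,:,:}\neq\B 0]$ is a semi-algebraic subset of $\R^{p,m,|\C{I}|}\times\R$, which is exactly what it means for $g$ to be a semi-algebraic function (one could equally well argue via its epigraph, as was done for $S$ in Lemma~\ref{app-s-semi}). Since $g$ takes only the values $0$ and $1$, its graph decomposes as
\[
\left(\{\bm{\C{W}}_{k,:,:}=\B 0\}\times\{0\}\right)\cup\left(\{\bm{\C{W}}_{k,:,:}\neq\B 0\}\times\{1\}\right),
\]
so it suffices to verify that each of the two pieces is semi-algebraic and then invoke closure of the semi-algebraic class under finite unions.

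First I would note that $\{\bm{\C{W}}_{k,:,:}=\B 0\}$ is the common zero locus of the finitely many coordinate polynomials $\bm{\C{W}}_{k,a,b}$ with $a\in[m],b\in[|\C{I}|]$, hence an affine algebraic set and in particular semi-algebraic; dually, its complement $\{\bm{\C{W}}_{k,:,:}\neq\B 0\}=\bigcup_{a,b}\left(\{\bm{\C{W}}_{k,a,b}>0\}\cup\{\bm{\C{W}}_{k,a,b}<0\}\right)$ is a finite union of open half-spaces and therefore semi-algebraic (one could instead simply cite that the complement of a semi-algebraic set is semi-algebraic). The singletons $\{0\},\{1\}\subseteq\R$ are trivially semi-algebraic, and finite Cartesian products of semi-algebraic sets are semi-algebraic, so both of the two pieces above are semi-algebraic; their union then is as well, completing the argument.

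I do not expect a genuine obstacle here; the only thing requiring care is pinning down a consistent definition of ``semi-algebraic function'' (graph versus epigraph) and keeping track of which standard stability properties --- finite unions, finite intersections, finite Cartesian products, complements, and linear images --- are being used. The purpose of this lemma, together with Lemma~\ref{app-s-semi}, is the observation needed to finish the convergence proof: the objective in~(\ref{eq:group-l0-l2}) is a finite sum of semi-algebraic functions, since $\Phi_n$ is assembled by finitely many sums, products and compositions involving the semi-algebraic activation $S$, the ridge term $\|\bm{\C{W}}\|_2^2$ is polynomial, and $\sum_{k\in[p]}1[\bm{\C{W}}_{k,:,:}\neq\B 0]$ is a finite sum of the functions treated here; hence the objective is semi-algebraic and enjoys the Kurdyka--{\L}ojasiewicz (KL) property.
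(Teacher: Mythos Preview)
Your argument is correct and follows essentially the same route as the paper: decompose a defining set of the indicator into finitely many pieces cut out by polynomial equalities and inequalities. The only cosmetic differences are that the paper works with the epigraph rather than the graph (you already flag this as an equivalent option), and it encodes the zero set via the single polynomial equation $\|\bm{\C{W}}_{k,:,:}\|_2^2=0$ instead of the intersection of the coordinate equations $\bm{\C{W}}_{k,a,b}=0$; neither change affects the logic.
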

\begin{proof}
Consider the epigraph:
$$G=\C{G}(1[\bm{\C{W}}_{k,:,:} \neq \B 0])=\{(y,\B{\C{W}}):y\geq 1[\bm{\C{W}}_{k,:,:} \neq \B 0]\}.$$
Let 
$A_1 = \{y\geq 1\}$
and 
$A_2=\{\|\bm{\C{W}}_{k,:,:}\|_2^2=0,y\geq 0\}.$
Then, $G=A_1\cup A_2$ showing $1[\bm{\C{W}}_{k,:,:} \neq \B 0]$ is semi-algebraic.
\end{proof}
\begin{proof}[\textit{Proof of Theorem~\ref{conv-thm}}.]
\textbf{Part 1) } Note that by Assumption~\ref{o-bounded}, Algorithm~\ref{algo:proximal-stochastic-gradient-descent} can be equivalently run on the problem
\begin{equation}\label{app-constrained}
        {\min}_{\bm{\C {W}}, \bm{\C O}}~~\hat{\E}[L(\rvy, \B f( \rvx; \bm{\C {W}}, \bm{\C O}))] + \lambda_0 {\sum}_{k \in [p]} 1[\bm{\C{W}}_{k,:,:} \neq \B 0] + (\lambda_2/m|\C{I}|) \|\bm{\C{W}}\|_2^2 ~~~~~~~\text{s.t.} ~~~~~~~\|\B{\C{O}}\|_2\leq B
\end{equation}
and result in the same sequence of solutions. Let 
\begin{equation}\label{app-g-h}
    \begin{aligned}
    h(\bm{\C {W}}, \bm{\C O}) & = \hat{\E}[L(\rvy, \B f( \rvx; \bm{\C {W}}, \bm{\C O}))] + (\lambda_2/m|\C{I}|) \|\bm{\C{W}}\|_2^2 \\
     g(\bm{\C {W}}, \bm{\C O}) & =  \lambda_0 {\sum}_{k \in [p]} 1[\bm{\C{W}}_{k,:,:} \neq \B 0]+\chi(\|\B{\C{O}}\|_2\leq B)
    \end{aligned}
\end{equation}
where $\chi(\|\B{\C{O}}\|_2\leq B)=0$ if 
$\|\B{\C{O}}\|_2\leq B$ and $\chi(\|\B{\C{O}}\|_2\leq B)=\infty$ otherwise. Then, Problem~\ref{app-constrained} can be written as minimizing $g+h$. By Lemma~\ref{app-smooth-final}, the function $h$ is $(\C{M}+2\lambda_2/m|\C{I}|)$-smooth. Hence, if $\eta<1/(\C{M}+2\lambda_2/m|\C{I}|)$, by Lemma~3.1 of~\citet{attouch2013convergence} (and calculations leading to (52) of~\citet{attouch2013convergence}) the descent property is proved.

\textbf{Part 2) } Let $c_0$ be the objective value for the initial solution to Algorithm~\ref{algo:proximal-stochastic-gradient-descent}. That is, 
\begin{equation*}
   \frac{1}{N} \sum_{n=1}^N L(y_n, \B f( \rvx_n; \bm{\C {W}}, \bm{\C O}))] + \lambda_0 {\sum}_{k \in [p]} 1[\bm{\C{W}}_{k,:,:} \neq \B 0]\\ + (\lambda_2/m|\C{I}|) \|\bm{\C{W}}\|_2^2=c_0.
\end{equation*}
As the sequence of the objectives is non-increasing by the first part of the theorem, at each iteration we have
\begin{align*}
   & (\lambda_2/m|\C{I}|) \|\bm{\C{W}}\|_2^2  \\ 
   \leq &  \frac{1}{N}\sum_{n=1}^N L(y_n, \B f( \rvx_n; \bm{\C {W}}, \bm{\C O}))] + \lambda_0 {\sum}_{k \in [p]} 1[\bm{\C{W}}_{k,:,:} \neq \B 0] + (\lambda_2/m|\C{I}|) \|\bm{\C{W}}\|_2^2 \\
   \leq & c_0
\end{align*}
completing the proof as $\lambda_2>0$.

\textbf{Part 3) } Note that the function $f(\rvx;\B{\C{W}},\B{\C{O}})$ is a composition, finite sum and finite product of semi-algebraic functions by Lemma~\ref{app-s-semi}, therefore it is semi-algebraic. As $L$ is polynomial, $L(f(\rvx;\B{\C{W}},\B{\C{O}}))$ is semi-algebraic. As a result, by Lemma~\ref{app-indication-semi} the functions $g,h$ defined in~\eqref{app-g-h} are semi-algebraic and hence, possess the KL property. Therefore, $g,h$ satisfy the required conditions of Theorem~5.1 of~\citet{attouch2013convergence}, proving the convergence.
\end{proof}

\section{APPENDIX FOR SECTION \ref{sec:simulation}}
\label{supp-sec:simulation}
\noindent\textbf{Data Design}
We generate the data matrix, $\B X\in\R^{N \times p}$ with samples randomly drawn from a multivariate normal distribution $\C{N}(0, \B \Sigma)$ with a correlated design matrix $\B \Sigma$ whose values are defined by $\Sigma_{ij}=\sigma^{|i-j|}$.
We construct the response variable $\B y=\B X\B\beta^{*}+\epsilon$ where the values of the noise $\epsilon_i, i=[N]$ are drawn independently from $\C{N}(0, 0.5)$. 
We use a known sparsity $\B\beta^{*}$ with equi-spaced nonzeros, where $\norm{\B\beta^{*}}_0=8$.  

\noindent\textbf{Simulation Procedure} We experiment with a range of training set sizes $N \in \{100, 200, 1000\}$, correlation strengths $\sigma \in \{0.5, 0.7\}$, and number of total features $p \in \{256,512\}$.
For each setting, we run 25 simulations with randomly generated samples.
We evaluate on $10,000$ test samples drawn from the data generating model described above.
Out of $N$ samples, we allocate $80\%$ for training and $20\%$ for validation for model selection.
For each simulation, we perform 500 tuning trials (hyperparameters are given below) and select the model with the smallest validation mean squared error (MSE).
We evaluate the final performance in terms of (i) test MSE, (ii) number of features selected and (iii) support recovery via f1-score between the true support and the recovered feature set.
We compute averages and standard errors across the 25 simulations to report final results.

\noindent\textbf{\modelname~with DSL}
\begin{itemize}[noitemsep,topsep=0pt,parsep=0pt,partopsep=0pt, leftmargin=*]
\item Number of trees: Discrete uniform with range $[1, 50]$,
\item Depths: Discrete uniform with range $[1, 5]$, 
\item Batch sizes: $16b$ with b uniform over the range $\{1, 2, \ldots, 8\}$,
\item Number of Epochs: Discrete uniform with range $[5, 500]$,
\item $\lambda_0$: exponentially ramped up $0 \rightarrow 100$ with temperature distributed as Log uniform in the range $[10^{-4}, 0.1]$ for group $\ell_0-\ell_2$ with DSL,
\item $\lambda_2$: Log uniform over the range $[10^{-2}, 10^2]$ for group $\ell_0-\ell_2$,
\item Learning rates, $lr$: Log uniform over the range $[10^{-3}, 10^{-1}]$.
\end{itemize}

\noindent\textbf{XGBoost, Random Forests}
\begin{itemize}[noitemsep,topsep=0pt,parsep=0pt,partopsep=0pt, leftmargin=*]
\item Number of trees: Discrete uniform with range $[1, 100]$ for XGBoost and Random Forests,
\item Depths: Discrete uniform with range $[1, 10]$ for XGBoost and Random Forests, 
\item Learning rates: Discrete uniform with range $[10^{-4}, 1.0]$ for XGBoost, 
\item Feature importance threshold: Log uniform over the range $[10^{-7}, 10^{-1}]$ for XGBoost and Random Forests.
\item Subsample: Uniform over the range $[0.5, 1.0]$.
\end{itemize}

\section{APPENDEX FOR SECTION \ref{sec:experiments}}
\label{supp-sec:experiments}

\subsection{Datasets, Computing Setup and Tuning Setup} 

\begin{wraptable}[19]{r}{0.3\textwidth}
\centering
\caption{Classification datasets}
\label{tab:classification-datasets}
\setlength{\tabcolsep}{4.0pt}
\begin{tabular}{l|ccc}
Dataset & N & p & C \\ \hline
Churn & 5,000 & 20 & 2 \\ 
Satimage & 6,435 & 36 & 6 \\ 
Texture & 5,500 & 40 & 11 \\ 
Mice-protein & 1,080 & 77 & 8 \\ 
Isolet & 7,797 & 617 & 26 \\ 
Madelon & 2,600 & 500 & 2 \\ 
Lung & 203 & 3,312 & 5 \\ 
Gisette & 7,000 & 5,000 & 2 \\ 
Tox & 171 & 5,748 & 4 \\ 
Arcene & 200 & 10,000 & 2 \\ 
CLL & 111 & 11,340 & 3 \\ 
SMK & 187 & 19,993 & 2 \\ 
GLI & 85 & 22,283 & 2 \\ 
Dorothea & 1,150 & 100,000 & 2 \\ \hline 
\end{tabular}
\end{wraptable}
\paragraph{Datasets} We consider 5 classification datasets from the Penn Machine Learning Benchmarks (PMLB) repository \citep{Olson2017}.
These are Churn, Satimage, Mice-protein, Isolet and Texture; Mice-protein and Isolet were used in prior feature selection literature \cite{Lemhadri2021}. We used fetch\_openml in Sklearn to download the full datasets.
We used 4 datasets from NIPS2003 feature selection challenge \citep{Guyon2004}. These are Arcene, Madelon, Gisette, Dorothea.
We used 5 datasets (Smk, Cll, Gli, Lung, Tox) from the feature selection datasets given in this repo\footnote{\url{https://jundongl.github.io/scikit-feature/datasets.html}}.
The datasets contain continuous, categorical and binary features. The metadata was used to identify the type of features and categorical features were one-hot encoded.
For first non-NIPS2003 datasets, we randomly split each of the dataset into 64\% training, 16\% validation and 20\% testing sets. 
For the datasets from NIPS2003, we split the training set into 80\% training and 20\% validation and treated the original validation set as the test set. The labels for original test set were unavailable, hence we discarded the original test set. 
The continuous features in all datasets were z-normalized based on training set statistics. A summary of the 14 datasets considered is in Table \ref{tab:classification-datasets}.

\noindent\textbf{Computing Setup} We used a cluster running Ubuntu 7.5.0 and equipped with Intel Xeon
Platinum 8260 CPUs and Nvidia Volta V100 GPUs. For all experiments of Sec. \ref{sec:simulation} and \ref{sec:experiments}, each job involving \modelname, Random Forests, XGBoost, LightGBM,  CatBoost and neural networks were restricted to 4 cores and 64GB of RAM. Jobs involving \modelname~and neural networks on larger datasets (Gisette, Dorothea) were run on Tesla V100 GPUs. 

\noindent\textbf{Tuning}
The tuning was done in parallel over the competing models and datasets.
The number of selected features affects the AUC.
Therefore, to treat all the methods in a fair manner, we tune the hyperparameter that controls the sparsity level using Optuna \citep{Akiba2019} which optimizes the overall AUC across different Ks.
A list of all the tuning parameters and their distributions is given for every experiment below.

\subsection{Appendix for Sec. \ref{sec:comparison-with-tao}}
\label{supp-sec:comparison-with-tao}
\noindent\textbf{Classical tree: TAO Implementation} 
Given that the authors of TAO do not open-source their implementation, we have written our own implementation of the TAO algorithm proposed by \cite{Carreira-Perpinan2018}, following the procedure outlined in Sec. 3.1 of \cite{Zharmagambetov2019}. 
For TAO, we use oblique (i.e. hyperplane splits) decision trees with constant leaves. 
We take as an initial tree a complete binary tree with random parameters at each node. We perform TAO updates until maximum number of iterations are reached (i.e. there is no other stopping criterion).
TAO uses an $\ell_1$-regularized logistic regression to solve the decision node optimization (using LIBLINEAR \cite{Fan2008}) where $\lambda \geq 0$ parameter (controlling node-level sparsity of the tree) is used as a regularization parameter ($C = 1/\lambda$).
We tune depth in the range $[1,10]$, $\lambda \in [10^{-5}, 10^{5}]$ and number of maximum iterations in the range $[20, 100]$. We perform 500 tuning trials. We find the optimal trial that satisfies $50\%$ sparsity budget.

\noindent\textbf{Soft tree with group lasso}
We compare group $\ell_0$-based method with a competitive benchmark: the convex group lasso regulurizer, popularly used in high-dimensional statistics literature \citep{Yuan2006}.
We consider group-lasso regularization in the context of soft trees: $(\lambda_1/\sqrt{m|\C{I}|}) {\sum}_{k \in [p]} \norm{\bm{\C{W}}_{k,:,:}}_2.$
Although group lasso has not been used in soft trees, it has been used for feature selection in neural networks \citep{Scardapane2017}. 
However, their proposal to use GD on a group $\ell_1$ regularized objective does not lead to feature selection as highlighted by \citep{Lemhadri2021}.
For a fairer comparison, we implement our own proximal mini-batch GD method for group lasso in the context of soft trees, which actually leads to feature selection. 

We consider the following optimization problem with group-lasso regularization in the context of soft trees:
\begin{align}
    {\min}_{\bm{\C {W}}, \bm{\C O}} ~~\hat{\E}[L(\rvy, \B f( \rvx; \bm{\C {W}}, \bm{\C O})]  + (\lambda_1/\sqrt{m|\C{I}|}) {\sum}_{k \in [p]} \norm{\bm{\C{W}}_{k,:,:}}_2.
    \label{eq:group-lasso}
\end{align}
where $\lambda_1$ is a non-negative regularization parameter that controls both shrinkage and sparsity. We solve it with the algorithm presented in Alg. \ref{algo:proximal-stochastic-gradient-descent-group-lasso}. 
\begin{algorithm}[!h]
\small
\begin{algorithmic}[1]
\Require Data: $X, Y$; ~~~Hyperparameters: $\lambda_1$, epochs, batch-size ($b$), learning rate ($\eta$); 
\State Initialize ensemble with $m$ trees of depth $d$ ($|\C{I}|=2^d - 1$): $\bm{\C{W}}, \bm{\C{O}}$ 
\For {$\text{epoch}=1,2,\ldots,\text{epochs}$}
\For {$batch=1,\ldots,N/b$}
\State Randomly sample a batch: $\B X_{batch}, \B Y_{batch}$ 
\State Compute gradient of loss $g$ w.r.t. $\bm{\C{O}}$, \bm{\C{W}}, where $g=\hat{\E}[L(\B Y_{batch}, \B F_{batch}]$.
\State Update leaves: $\bm{\C{O}} \gets \bm{\C{O}} - \eta \nabla_{\bm{\C{O}}}g$
\State Update hyperplanes: $\bm{\C{W}} \gets \textit{S-Prox}(\bm{\C{W}} - \eta \nabla_{\bm{\C{W}}}g, \eta, \lambda_1))$
\EndFor
\EndFor
\end{algorithmic}
\caption{Proximal Mini-batch Gradient Descent for Optimizing (\ref{eq:group-lasso}).}
\label{algo:proximal-stochastic-gradient-descent-group-lasso}
\end{algorithm}

\textit{S-Prox} in Algorithm \ref{algo:proximal-stochastic-gradient-descent-group-lasso} finds the global minimum of an optimization problem of the form:
\begin{align}
\B{\C{W}}^{(t)} = \text{argmin}_{\B{\C{W}}} \frac{1}{2\eta} \norm{\B{\C{W}} - \B{\C{Z}}^{(t)}}_2^2 +  \frac{\lambda_1}{\sqrt{m|\C{I}|}} \sum_{k=1}^p \norm{\bm{\C{W}}_{k,:,:}}_2 
\end{align}
where $\B{\C{Z}}^{(t)} = \B{\C{W}}^{(t-1)}-\eta\nabla_{\bm{\C{W}}}g$ and $g = \hat{\E}[L(\B Y_{batch}, \B F_{batch}]$. This leads to a feature-wise soft-thresholding operator given by:
\begin{align}
S_{\frac{\eta\lambda_1}{\sqrt{m|\C{I}|}}}(\B{\C{Z}}_{k,:,:}^{(t)}) = \begin{cases}
\B{\C{Z}}_{k,:,:}^{(t)}- \frac{\eta\lambda_1}{\sqrt{m|\C{I}|}} \frac{\B{\C{Z}}_{k,:,:}^{(t)}}{\norm{\B{\C{Z}}_{k,:,:}^{(t)}}} &\text{if $\norm{\B{\C{Z}}_{k,:,:}^{(t)}} \geq \frac{\eta\lambda_1}{\sqrt{m|\C{I}|}}$}\\
0 &\text{otherwise}
\end{cases}
\end{align}

In these experiments, we used a single tree and mini-batch PGD \textit{without} any dense-to-sparse scheduler for both models i.e, (i) Soft tree with group lasso (ii) \modelsname.
We tune the key hyperparameters, which are given below.
\begin{itemize}[noitemsep,topsep=0pt,parsep=0pt,partopsep=0pt, leftmargin=*]
\item Batch sizes: $64*b$ with $b$ uniform over the range $\{1, 2, \ldots, 64\}$,
\item Learning rates for mini-batch PGD: Log uniform over the range $[10^{-2}, 10]$,
\item Number of Epochs: Discrete uniform with range $[5, 1000]$,
\item $\lambda_0$: Log uniform over the range $[10^{-3}, 10]$ for group $\ell_0-\ell_2$ for \modelsname, 
\item $\lambda_2$: Log uniform over the range $[10^{-2}, 10^2]$ for group $\ell_0-\ell_2$ for \modelsname,
\item $\lambda_1$: Log uniform over the range $[10^{-3}, 10]$ for group lasso.
\end{itemize}

\subsection{Appendix for Sec. \ref{sec:results-dense-vs-sparse} and \ref{sec:results-soft-vs-classical}}
\label{supp-sec:soft-vs-classical}
We describe the tuning grid for these experiments below.

\noindent\textbf{\modelname}
\begin{itemize}[noitemsep,topsep=0pt,parsep=0pt,partopsep=0pt, leftmargin=*]
\item Number of trees: Discrete uniform with range $[1, 100]$,
\item Depths: Discrete uniform with range $[1, 5]$, 
\item Batch sizes: Uniform over the set $\{16, 64, 256, 1024\}$,
\item Number of Epochs: Discrete uniform over the range $[5, 1000]$ (Note tuning over epochs is important to achieve some trials with desired sparsity for dense-to-sparse learning, we do not use any validation loss early stopping as that is less robust during averages across runs/seeds in terms of feature support.),
\item $\lambda_0$: exponentially ramped up $0 \rightarrow 1$ with temperature distributed as Log uniform in the range $[10^{-4}, 1]$ for group $\ell_0-\ell_2$ with DSL,
\item $\lambda_2$: Log uniform over the range $[10^{-3}, 1]$ for group $\ell_0-\ell_2$,
\item Learning rate ($lr$) for minibatch PGD: Log uniform over the range $[10^{-2}, 10]$.
\end{itemize}

\noindent\textbf{XGBoost, LightGBM, CatBoost, Random Forests}
\begin{itemize}[noitemsep,topsep=0pt,parsep=0pt,partopsep=0pt, leftmargin=*]
\item Number of trees: Discrete uniform with range $[1, 300]$ for XGBoost, LightGBM, CatBoost and Random Forests,
\item Depths: Discrete uniform with range $[1, 10]$ for XGBoost, LightGBM, CatBoost and Random Forests, 
\item Learning rates: Discrete uniform with range $[10^{-4}, 1.0]$ for XGBoost, LightGBM and CatBoost, 
\item Feature importance threshold: Log uniform over the range $[10^{-7}, 10^{-1}]$ for XGBoost, LightGBM, CatBoost and Random Forests,
\item Subsample: Uniform over the range $[0.5, 0.9]$ for XGBoost, LightGBM, CatBoost and Random Forests.
\item Bagging Frequency: Uniform over the set $\{1, \cdots, 7\}$ for LightGBM.
\end{itemize}

\noindent\textbf{FASTEL (Dense soft trees)}
\begin{itemize}[noitemsep,topsep=0pt,parsep=0pt,partopsep=0pt, leftmargin=*]
\item Number of trees: Discrete uniform with range $[1, 100]$,
\item Depths: Discrete uniform with range $[1, 5]$, 
\item Batch sizes: Uniform over the set $\{16, 64, 256, 1024\}$,
\item Number of Epochs: Discrete uniform over the range $[5, 1000]$,
\item Learning rates ($lr$) for minibatch SGD: Log uniform over the range $[10^{-2}, 10]$.
\end{itemize}

\subsection{Appendix for Sec. \ref{sec:results-soft-vs-nn}}
\label{supp-sec:soft-vs-neural-networks}
In this paper, we pursue embedded feature selection methods for tree ensembles.
However, for completeness, we also compare \modelname~ against some state-of-the-art embedded feature selection methods from neural networks.

\noindent\textbf{Competing Methods}
We compare against the following baselines.
\begin{enumerate}
    \item LassoNet \citep{Lemhadri2021} is based on a ResNet-like \citep{He2016} architecture with residual connections. Feature selection is done using hierarchical group lasso.
    \item AlgNet \citep{Vu2020} is based on adaptive lasso and uses proximal full-batch gradient descent for feature selection in a tanh-activated feedforward network. 
    \item DFS \citep{Chen2021} solves cardinality constrained feature selection problem with an active-set style method.
\end{enumerate}

We describe the tuning details for these neural network models below. We perform 2000 tuning trials for each method.
For DFS, we capped number of trials completed in total 200 GPU hours. DFS is really slow for even medium sized datasets (in terms of feature dimensions).

\paragraph{LassoNet \citep{Lemhadri2021}}
\begin{itemize}[noitemsep,topsep=0pt,parsep=0pt,partopsep=0pt, leftmargin=*]
\item ResNet architecture with 2-layered relu-activated feedforward network with (linear) skip connections.
\item Number of hidden units: Discrete uniform in the set $\{\frac{p}{3}, \frac{2}{3}p, p, \frac{4}{3}p\}$,
\item Batch sizes: Discrete uniform in the set $\{64, 128, 256, 512\}$,
\item $\lambda$: Log uniform over the range $[1, 10000]$.
\item Tuning protocol: $1000$ for dense training stage and $1000$ for each successive sparse training stages with early stopping with a patience of 25. We consider 100 sequential stages (100 values for $\lambda$).
\end{itemize}

\noindent\textbf{AlgNet \citep{Vu2020}}
\begin{itemize}[noitemsep,topsep=0pt,parsep=0pt,partopsep=0pt, leftmargin=*]
\item Tanh-activated 4-layered feedforward neural network with number of hidden units chosen uniformly from a  discrete set $\{\frac{p}{3}, \frac{2}{3}p, p, \frac{4}{3}p\}$,
\item Learning rates ($lr$) for proximal GD: Log uniform over the range $[0.01, 1]$.
\item $\lambda$: Log uniform over the range $[0.1, 1000]$.
\item Number of Epochs: Discrete uniform with range $[5, 2000]$,
\end{itemize}

\noindent\textbf{DFS \citep{Chen2021}}
\begin{itemize}[noitemsep,topsep=0pt,parsep=0pt,partopsep=0pt, leftmargin=*]
\item ReLU-activated 4-layered feedforward neural network with number of hidden units chosen uniformly from a discrete set $\{\frac{p}{3}, \frac{2}{3}p, p, \frac{4}{3}p\}$,
\item Learning rates ($lr$) for Adam: Log uniform over the range $[0.001, 0.1]$.
\item Weight decay: $0.0025$ for feature selection layer and $0.005$ for remaining layers.
\item $k$: Number of features selected in the range $[1, 0.5p]$.
\item Number of Epochs: $500$ with early stopping.
\end{itemize}

\subsection{Appendix for Sec. \ref{sec:results-dsl}}
\label{supp-sec:schedulers}
We perform 2000 tuning trials for each method.

\noindent\textbf{\modelname}
\begin{itemize}[noitemsep,topsep=0pt,parsep=0pt,partopsep=0pt, leftmargin=*]
\item Number of trees: Discrete uniform with range $[1, 100]$,
\item Depths: Discrete uniform with range $[1, 5]$, 
\item Batch sizes: Uniform over the set $\{16,64,256,1024\}$,
\item Number of Epochs: Discrete uniform with range $[5, 1000]$,
\item $\lambda_0$ (without Dense-to-sparse learning): Log uniform in the range $[1, 10^{4}]$ for group $\ell_0-\ell_2$,
\item $\lambda_0$ (with Dense-to-sparse learning): exponentially ramped up $0 \rightarrow 1$ with temperature $s$ distributed as Log uniform in the range $[10^{-4}, 1]$ for group $\ell_0-\ell_2$ with DSL,
\item $\lambda_2$: Log uniform over the range $[10^{-3}, 1]$ for group $\ell_0-\ell_2$,
\item Learning rates ($lr$): Log uniform over the range $[10^{-2}, 10]$.
\end{itemize}

\begin{wraptable}[13]{r}{0.5\textwidth}
\small
\centering
\captionsetup{width=.5\textwidth}
\caption{Comparison of test AUC (\%) performance of \modelname~against ControlBurn for $25\%$ feature selection budget on binary classification tasks.}
\label{tab:controlburn}
\setlength{\tabcolsep}{20pt}
\resizebox{0.5\textwidth}{!}{\begin{tabular}{l|c|c}
\multicolumn{1}{l|}{\multirow{1}{*}{Dataset}} & \multirow{1}{*}{ControlBurn} & \multicolumn{1}{c}{\modelname} \\ \hline
\multicolumn{1}{l|}{Churn}  & \multicolumn{1}{l|}{85.66$\pm$0.25} & \multicolumn{1}{l}{\textbf{91.38}$\pm$0.08} \\ 
\multicolumn{1}{l|}{Madelon}  & \multicolumn{1}{l|}{94.23$\pm$0.08} & \multicolumn{1}{l}{94.14$\pm$0.09} \\ 
\multicolumn{1}{l|}{Gisette}  & \multicolumn{1}{l|}{99.36$\pm$0.01} & \multicolumn{1}{l}{\textbf{99.81}$\pm$0.002} \\ 
\multicolumn{1}{l|}{Arcene}  & \multicolumn{1}{l|}{77.89$\pm$0.32} & \multicolumn{1}{l}{\textbf{80.80}$\pm$0.30} \\ 
\multicolumn{1}{l|}{Smk}  & \multicolumn{1}{l|}{\textbf{79.94}$\pm$0.32} &  \multicolumn{1}{l}{79.29$\pm$0.22} \\ 
\multicolumn{1}{l|}{Gli}  & \multicolumn{1}{l|}{98.62$\pm$0.16} & \multicolumn{1}{l}{\textbf{99.80}$\pm$0.07} \\ 
\multicolumn{1}{l|}{Dorothea}  & \multicolumn{1}{l|}{84.85$\pm$0.28} & \multicolumn{1}{l}{\textbf{90.87}$\pm$0.02} \\ 
\hline
\multicolumn{1}{l|}{Average} & \multicolumn{1}{l|}{88.65} & \multicolumn{1}{l}{\textbf{90.87}}
\end{tabular}}
\end{wraptable}
\subsection{Comparison with ControlBurn}
\label{supp-sec:controlburn}
We also compare against ControlBurn \citep{Liu2021} on binary classification tasks. ControlBurn does not support multiclass classification.
ControlBurn is another post-training feature selection method, which formulates an optimization problem with group-lasso regularizer to perform feature selection on a pre-trained forest. It relies on a commercial solver to optimize their formulation.
We report the results for $25\%$ feature selection budget in Table \ref{tab:controlburn}.
We can observe that \modelname~outperforms ControlBurn across many datasets, achieving $2\%$ (up to $6\%$) improvement in AUC.

We note the hyperparameter tuning for ControlBurn below:
\begin{itemize}[noitemsep,topsep=0pt,parsep=0pt,partopsep=0pt, leftmargin=*]
\item Method: bagboost,
\item Depths: Discrete uniform with range $[1, 10]$, 
\item $\alpha$: Log uniform over the range $[10^{-7}, 1.0]$.
\end{itemize}

\section{MEASURING STATISTICAL SIGNIFICANCE}
We follow the following procedure to test the significance of all models.
For all models, we tune over hyperparameters for 2000 trials. We select the optimal trial (within the desired feature sparsity budget) based on validation set. 
Next, we train each model for 100 repetitions with the optimal hyperparameters and report the mean results on test set along with the standard errors.

\end{document}